\def\UrlSpecials{\do\~{\kern -.15em\lower .7ex\hbox{~}\kern .04em}} \catcode`~=13 
\newcommand{\calD}{\mathcal{D}}
\newcommand{\calE}{\mathcal{E}}
\newcommand{\calK}{\mathcal{K}}
\newcommand{\calN}{\mathcal{N}}
\newcommand{\calP}{\mathcal{P}}
\newcommand{\calR}{\mathcal{R}}
\newcommand{\calS}{\mathcal{S}}
\newcommand{\ba}{\mathbf{a}}
\newcommand{\bA}{\mathbf{A}}
\newcommand{\be}{\mathbf{e}}
\newcommand{\bI}{\mathbf{I}}
\newcommand{\bM}{\mathbf{M}}
\newcommand{\bP}{\mathbf{P}}
\newcommand{\bs}{\mathbf{s}}
\newcommand{\bw}{\mathbf{w}}
\newcommand{\bx}{\mathbf{x}}
\newcommand{\by}{\mathbf{y}}
\newcommand{\bz}{\mathbf{z}}
\newcommand{\bbE}{\mathbb{E}}
\newcommand{\bbN}{\mathbb{N}}
\newcommand{\bbP}{\mathbb{P}}
\newcommand{\bbR}{\mathbb{R}}
\DeclareMathAlphabet{\mathbsf}{OT1}{cmss}{bx}{n}
\DeclareMathAlphabet{\mathssf}{OT1}{cmss}{m}{sl}
\DeclareSymbolFont{bsfletters}{OT1}{cmss}{bx}{n}  
\DeclareSymbolFont{ssfletters}{OT1}{cmss}{m}{n}
\DeclareMathSymbol{\bsfGamma}{0}{bsfletters}{'000}
\DeclareMathSymbol{\ssfGamma}{0}{ssfletters}{'000}
\DeclareMathSymbol{\bsfDelta}{0}{bsfletters}{'001}
\DeclareMathSymbol{\ssfDelta}{0}{ssfletters}{'001}
\DeclareMathSymbol{\bsfTheta}{0}{bsfletters}{'002}
\DeclareMathSymbol{\ssfTheta}{0}{ssfletters}{'002}
\DeclareMathSymbol{\bsfLambda}{0}{bsfletters}{'003}
\DeclareMathSymbol{\ssfLambda}{0}{ssfletters}{'003}
\DeclareMathSymbol{\bsfXi}{0}{bsfletters}{'004}
\DeclareMathSymbol{\ssfXi}{0}{ssfletters}{'004}
\DeclareMathSymbol{\bsfPi}{0}{bsfletters}{'005}
\DeclareMathSymbol{\ssfPi}{0}{ssfletters}{'005}
\DeclareMathSymbol{\bsfSigma}{0}{bsfletters}{'006}
\DeclareMathSymbol{\ssfSigma}{0}{ssfletters}{'006}
\DeclareMathSymbol{\bsfUpsilon}{0}{bsfletters}{'007}
\DeclareMathSymbol{\ssfUpsilon}{0}{ssfletters}{'007}
\DeclareMathSymbol{\bsfPhi}{0}{bsfletters}{'010}
\DeclareMathSymbol{\ssfPhi}{0}{ssfletters}{'010}
\DeclareMathSymbol{\bsfPsi}{0}{bsfletters}{'011}
\DeclareMathSymbol{\ssfPsi}{0}{ssfletters}{'011}
\DeclareMathSymbol{\bsfOmega}{0}{bsfletters}{'012}
\DeclareMathSymbol{\ssfOmega}{0}{ssfletters}{'012}
\theoremstyle{plain}
\newtheorem{theorem}{Theorem} 
\newtheorem{lemma}{Lemma}
\newtheorem{corollary}{Corollary}
\newcommand{\qednew}{\nobreak \ifvmode \relax \else
      \ifdim\lastskip<1.5em \hskip-\lastskip
      \hskip1.5em plus0em minus0.5em \fi \nobreak
      \vrule height0.75em width0.5em depth0.25em\fi}
\crefname{section}{Sec.}{Secs.}
\Crefname{section}{Section}{Sections}
\Crefname{table}{Table}{Tables}
\crefname{table}{Tab.}{Tabs.}
\begin{document}

\title{Non-Iterative Recovery from Nonlinear Observations using Generative Models}

\author{Jiulong Liu\\
LSEC, Institute of Computational
Mathematics \\and Scientific/Engineering
Computing, \\
Academy of Mathematics and System Sciences,\\
Chinese Academy of Sciences, 100190, China\\
{\tt\small jiulongliu@lsec.cc.ac.cn}
\and
Zhaoqiang Liu \thanks{Corresponding author.}\\
Department of Computer Science\\
National University of Singapore\\
{\tt\small dcslizha@nus.edu.sg}
}
\maketitle

\begin{abstract}
   In this paper, we aim to estimate the direction of an underlying signal from its nonlinear observations following the semi-parametric single index model (SIM). Unlike conventional compressed sensing where the signal is assumed to be sparse, we assume that the signal lies in the range of an $L$-Lipschitz continuous generative model with bounded $k$-dimensional inputs. This is mainly motivated by the tremendous success of deep generative models in various real applications. Our reconstruction method is non-iterative (though approximating the projection step may use an iterative procedure) and highly efficient, and it is shown to attain the near-optimal statistical rate of order $\sqrt{(k \log L)/m}$, where $m$ is the number of measurements. We consider two specific instances of the SIM, namely noisy $1$-bit and cubic measurement models, and perform experiments on image datasets to demonstrate the efficacy of our method. In particular, for the noisy $1$-bit measurement model, we show that our non-iterative method significantly outperforms a state-of-the-art iterative method in terms of both accuracy and efficiency.
\end{abstract}


\section{Introduction}
\label{sec:intro}

The basic insight of compressed sensing (CS) is that a high-dimensional sparse signal can be accurately reconstructed from a small number of measurements~\cite{Fou13}. For conventional CS, one aims to recover an $s$-sparse signal $\bx \in \bbR^n$ from linear measurements of the form:
\begin{equation}
 \by = \bA \bx + \bm{\eta},
\end{equation}
where $\bA =[\ba_1,\ba_2,\ldots,\ba_m]^T \in \bbR^{m \times n}$ is the measurement matrix, $\by = [y_1,y_2,\ldots,y_m]^T \in \bbR^m$ is the observed vector, and $\bm{\eta} =[\eta_1,\eta_2,\ldots,\eta_m]^T \in \bbR^m$ is the noise vector. The CS problem has been popular over the last 1--2 decades, and its theoretical properties have been investigated in a significant body of works~\cite{donoho2013information,amelunxen2014living,wen2016sharp,scarlett2019introductory}. For example, under i.i.d.~random Gaussian measurements, it has been shown that an $s$-sparse signal can be accurately and efficiently reconstructed using $O(s\log(n/s))$ samples~\cite{wainwright2009information,arias2012fundamental,candes2013well,scarlett2016limits}.

The conventional CS problem has been extended in a wide variety of directions. Two important ones that we focus on in this paper are (i) considering general nonlinear measurement models, and (ii) assuming that the signal is in the range of a (deep) generative model, instead of being sparse. Both of these settings are practically well-motivated and have attracted sufficient attention in the past years. In the following, we briefly review the background of them.

\subsection{Nonlinear Measurement Models}

While the linear measurement model used in conventional CS can be a good testbed for illustrating conceptual phenomena, in many real problems it may not be justifiable, or even plausible. For example, the binary measurement model used in $1$-bit CS~\cite{boufounos20081} has been of considerable interest because its hardware implementation is low-cost and efficient, and it is also robust to nonlinear distortions. In fact, $1$-bit CS performs even better than conventional CS in certain situations~\cite{laska2012regime}. The limitation of the linear data model motivates the study of general nonlinear measurement models, among which the semi-parametric single index model (SIM) is arguably the most popular one~\cite{horowitz2009semiparametric}. The SIM models the data as
\begin{equation}\label{eq:sim}
 y_i = f_i(\langle \ba_i,\bx\rangle), \quad i \in \{1,2,\ldots,m\},
\end{equation}
where $\ba_i$ are i.i.d.~realizations of a standard Gaussian vector $\ba \sim \calN(\mathbf{0},\bI_n) \in \bbR^n$, with $\ba_i^T$ being the $i$-th row of the measurement matrix $\bA \in \bbR^{m\times n}$; and $f_i\,:\, \bbR\to \bbR$ are i.i.d.~realizations of an {\em unknown} random function $f$, independent of $\ba_i$. The goal is to estimate the signal $\bx$ using the knowledge of $\bA$ and $\by$, despite the unknown nonlinearity $f$. It is well-known that $\bx$ is generally unidentifiable in the SIM since any scaling of $\bx$ can be absorbed into the unknown $f$. Therefore, it is common to impose the identifiability constraint $\|\bx\|_2 = 1$, and only seek to estimate the direction of $\bx$.

Let $y = f(\langle \ba,\bx\rangle)$ be the random variable that corresponds to a single observation. For a SIM and a standard normal random variable $g \sim \calN(0,1)$ that is independent of the nonlinearity $f$, the following parameters are important to characterize the recovery performance of associated reconstruction algorithms:
\begin{equation}\label{eq:mu_def}
        \mu := \bbE[y \langle\ba,\bx\rangle] = \bbE[f(g)g],
       \end{equation}
\begin{equation}\label{eq:xi_sq}
 \xi^2 := \bbE\left[y^2\right] = \bbE\left[f(g)^2\right],
\end{equation}
\begin{equation}\label{eq:rho_sq}
 \rho^2 := \mathrm{Var}[y \langle\ba,\bx\rangle - \mu] = \mathrm{Var}[f(g) g],
\end{equation}
and
\begin{equation}\label{eq:theta_fourth}
 \theta^4 := \mathrm{Var}\left[y^2\right] = \mathrm{Var}\left[f(g)^2\right].
\end{equation}
Note that the parameters $\mu$, $\xi^2$, $\rho^2$ and $\theta^4$ are only used to characterize the recovery performance, and the knowledge of them will not be required for the reconstruction algorithms (since we assume that $f$ is {\em unknown}). Later we will see ({\em cf.}, Section~\ref{sec:main_thm}) that we seek to perform inference on the importance of the components of $\bx$ via estimates of $\mu \bx$, and thus we make the following widely-adopted assumption for the SIM~\cite{plan2017high,plan2016generalized,neykov2016l1,liu2020generalized,eftekhari2021inference}:
\begin{equation}\label{eq:assumption_mu}
 \mu = \bbE[f(g)g] \ne 0.
\end{equation}
We highlight that some popular measurement models such as phase retrieval~\cite{candes2015phase,zhang2017nonconvex} with $f(x) = x^2$ or $f(x) = |x|$ (or the noisy version) are typically beyond the scope of SIM since for these models, $\mu = \bbE[f(g)g] =0$.

For the low-dimensional setting where the number of samples $m$ is larger than the ambient dimension $n$, the SIM has been studied for a long time, dating back to the last century~\cite{han1987non,li1989regression,sherman1993limiting}. In recent years, high-dimensional SIMs have also received much attention, with various papers studying variable selection, estimation and inference mainly under the sparsity assumption~\cite{foster2013variable,ganti2015learning,radchenko2015high,genzel2016high,luo2016forward,neykov2016l1,plan2016generalized,plan2017high,oymak2017fast,cheng2017bs,yang2017high,goldstein2018structured,wei2018structured,pananjady2021single,eftekhari2021inference}. In particular, the authors of~\cite{plan2017high} show that when the signal $\bx$ is contained in $\calK$ for some closed {\em star-shaped}\footnote{A set $\calK$ is called star-shaped if $\lambda \calK \subseteq \calK$ for any $0 \le \lambda \le 1$.} set $\calK \subseteq \bbR^n$, and the observations $y_i$ are {\em sub-Gaussian},\footnote{A random variable $X$ is said to be sub-Gaussian if $\|X\|_{\psi_2} := \sup_{p\ge 1} p^{-1/2} \left(\bbE\left[|X|^p\right]\right)^{1/p} <\infty$.} then the projection of $\frac{1}{m}\bA^T\by$ onto $\calK$ gives an accurate estimate of $\bx$ with high probability provided that the number of samples is sufficiently large. Based on an idea that the nonlinear measurement model may be transformed into a  scaled linear measurement model with an unconventional noise term, the authors of~\cite{plan2016generalized} show that the generalized Lasso approach, which minimizes the {\em linear least-squares} objective over a {\em convex} set $\calK$, is able to return a reliable estimation of the signal in spite of the unknown nonlinearity. However, the range of a Lipschitz continuous generative model (such as a deep neural network), in general, cannot be star-shaped or convex. Moreover, the recovery error bounds in both works~\cite{plan2017high,plan2016generalized} generally exhibit the $m^{-1/4}$ scaling, which is weaker than the typical $m^{-1/2}$ scaling.

\subsection{Inverse Problems using Generative Models}

Recently, motivated by enormous advances in deep generative models in an abundance of real applications, a new perspective has emerged in CS, in which the commonly-made sparsity assumption is replaced by the generative modeling assumption. That is, rather than being sparse, the signal is assumed to lie in the range of a (deep) generative model. In the seminal work~\cite{bora2017compressed}, the authors study CS with generative priors, and characterize the number of random Gaussian linear measurements required for accurate recovery. They also perform extensive numerical experiments on image datasets showing that to reconstruct the signal up to a given accuracy, compared to the sparse prior, using a pre-trained generative prior can reduce the required number of measurements by a large factor such as $5$ to $10$. There has been
a substantial volume of follow-up works of~\cite{bora2017compressed}, including~\cite{rick2017one,van2018compressed,dhar2018modeling,hand2018phase,hand2018global,heckel2019deep,wu2019deep,jalal2020robust,asim2020invertible,ongie2020deep,whang2020compressed,menon2020pulse,jalal2021instance,nguyen2021provable,liu2021towards,liu2021robust}.

In particular, $1$-bit CS with generative priors has been studied in~\cite{liu2020sample,qiu2020robust}, for which the nonlinearity is assumed to be {\em known}. In~\cite{liu2020sample}, the authors provide a near-complete analysis for $1$-bit CS with generative priors, and propose an iterative algorithm that can be thought of as a generative counterpart to the binary iterative hard thresholding algorithm~\cite{jacques2013robust}. The authors of~\cite{qiu2020robust} study $1$-bit CS with ReLU neural network generative models (with no offsets). They propose an empirical risk minimization algorithm, and show that it can faithfully recover bounded target vectors from quantized noisy measurements. Perhaps closest to our work, near-optimal non-uniform recovery guarantees for CS with SIMs and generative priors have been provided in~\cite{wei2019statistical,liu2020generalized}. The authors of~\cite{wei2019statistical} assume that the nonlinear function $f$ is {\em differentiable} and propose estimators via first- and second-order Steins identity based score functions. The differentiability assumption is not satisfied for certain popular nonlinear measurement models such as $1$-bit and other quantized models. The authors of~\cite{liu2020generalized} make the assumption of {\em sub-Gaussian} observations, which encompasses quantized measurement models. They propose a constrained linear least-squares estimator, with the constraint set being the range of a generative model. Both works~\cite{wei2019statistical,liu2020generalized} are primarily theoretical, and neither practical algorithms nor numerical results are provided in these works, even though attaining the estimators may be practically difficult since the corresponding optimization problems are usually highly non-convex.

\subsection{Contributions}
The main contributions of this work are as follows:
\begin{itemize}
 \item We propose a highly efficient non-iterative approach for nonlinear CS with SIMs and generative priors.

 \item We provide near-optimal recovery guarantees for our non-iterative approach. Notably, in our analysis, we do not require the differentiability assumption as in~\cite{wei2019statistical} or the  assumption of sub-Gaussian observations as in~\cite{liu2020generalized}.

 \item To verify the efficacy of our method, we perform a variety of numerical experiments for distinct nonlinear measurement models on image datasets. In particular, for the noisy $1$-bit measurement model, we observe that along with faster computation, our non-iterative approach also leads to more accurate reconstruction compared to the iterative algorithm proposed in~\cite{liu2020sample}, which is the state-of-the-art (SOTA) algorithm for $1$-bit CS with generative priors. In addition, for the noisy cubic measurement model, we observe that our non-iterative approach significantly outperforms several baselines, and performs on par with an iterative approach.
\end{itemize}

We also present Figure~\ref{fig:contributions} to highlight the overall contributions. See~\eqref{eq:f4_assumption},~\eqref{eq:oneshot}, and Section~\ref{sec:exp} for more details.

 \begin{figure}
    \begin{tikzpicture}[font=\small,scale=1.00,>=latex']
        \tikzset{block/.style= {draw, rectangle, align=center,minimum width=2cm,minimum height=1cm},
        }
        \node [block]  (SIM) {  SIM: $\mathbf{y}=f(\bA \bx)$   };
        \node [block, right = 1.0cm of SIM]  (OneShot) {  OneShot:   $\hat{\bx}= \calP_G\left(\frac{1}{m} \bA^{T} \mathbf{y}\right)$};
        \node [coordinate, below = 0.5cm of OneShot] (ADL){};
        \node [coordinate, left = 3.7cm of ADL] (AUL){};
        \node [coordinate, right = 0.7cm of ADL] (BDL){};

        \node [block, minimum height=1.6cm, below = 0.5cm of AUL] (A1){Theoretical guarantees\\ under assumption \\ $\mathbb{E}\left[f(g)^{4}\right] <\infty, g \sim \mathcal{N}(0,1)$};
        \node [block, minimum height=1.6cm, below = 0.5cm of BDL] (B1){Numerical experiments  for \\1) noisy $1$-bit model\\ 2) noisy cubic model};

        \path[draw, ->]
            (SIM) edge (OneShot)
            (OneShot) edge (A1)

            (OneShot) -- (B1);

    \end{tikzpicture}
\caption{An illustration of main contributions.}\label{fig:contributions}
\end{figure}
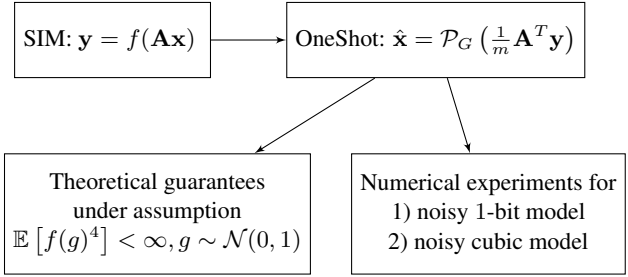
\section{Problem Formulation}

In this section, we provide some auxiliary results and formally formulate the problem we study. Before proceeding, we summarize the notation we use throughout this paper.

\subsection{Notation}
 We use upper and lower case boldface letters to denote matrices and vectors respectively. For any $N \in \bbN$, we use the shorthand notation $[N] = \{1,2,\ldots,N\}$, and we use $\bI_N$ to represent the identity matrix in $\bbR^{N\times N}$. For a matrix $\bM$, let $\|\bM\|_{p,q} = \sup_{\|\bs\|_p=1} \|\bM \bs\|_q$. In particular, $\|\bM\|_{2,2}$ represents the spectral norm of $\bM$. Given two sequences of real values $\{a_i\}$ and $\{b_i\}$, we write $a_i = O(b_i)$ if there exists an absolute constant $C_1$ and a positive integer $i_1$ such that for any  $i>i_1$, $|a_i| \le C_1 b_i$, $a_i = \Omega(b_i)$ if there exists an absolute constant $C_2$ and a positive integer $i_2$ such that for any  $i>i_2$, $|a_i| \ge C_2 b_i$, and $a_i = \Theta(b_i)$ if $a_i = O(b_i)$ and $a_i = \Omega(b_i)$. For any $r > 0$, we denote the radius-$r$ ball in $\bbR^k$ as $B_2^k(r) := \{\bz \in \bbR^k\,:\, \|\bz\|_2 \le r\}$, and we use $\calS^{n-1}:= \{\bs \in \bbR^n\,:\, \|\bs\|_2 =1\}$ to represent the unit sphere in $\bbR^n$. A generative model is a function $G \,:\, \calD \to \bbR^n$ with latent dimension $k$, ambient dimension $n$, and input domain $\calD \subseteq \bbR^k$. For a generative model $G$ and a set $B \subseteq \calD$, we write $G(B) =\{G(\bz)\,:\, \bz \in B\}$. Throughout the following, we focus on the setting that $\calD =B_2^k(r)$ and $k \ll n$. We use $\calR(G)$ to represent the range of $G$, i.e., $\calR(G) = G(B_2^k(r))$.

 \subsection{Setup}

Suppose that the generative model $G\,:\, B_2^k(r)\to \bbR^n$ is $L$-Lipschitz continuous, i.e., $\|G(\bz_1)-G(\bz_2)\|_2 \le L\|\bz_1-\bz_2\|_2$ for any $\bz_1, \bz_2 \in B_2^k(r)$. The Lipschitzness assumption is naturally satisfied by some popular neural network generative models. For example, it is shown in~\cite{bora2017compressed,liu2020sample} that any fully-connected neural network generative model with bounded weights and widely-used activation functions (including Sigmoid, ReLU and Hyperbolic tangent functions) is Lipschitz continuous with the Lipschitz constant being $L = n^{\Theta(d)}$, where $d$ is the depth of the neural network.

The nonlinear observations $y_1,y_2,\ldots,y_m$ are assumed to be generated according to the SIM in~\eqref{eq:sim}, with $\ba_i$ being i.i.d. realizations of $\calN(\mathbf{0},\bI_n)$ and $\bx \in \calS^{n-1}$ being the signal to estimate. We further assume that $\mu \bx \in \calR(G)$, where $\mu$ is a parameter depending on the nonlinearity $f$ and is defined in~\eqref{eq:mu_def}, and $\calR(G)$ refers to the range of $G$. Such an assumption is standard for nonlinear CS with generative priors and has also been made in~\cite{wei2019statistical,liu2020generalized}. In this work, for the nonlinear function $f$, besides the popular assumption $\mu \ne 0$ as in~\eqref{eq:assumption_mu}, we only additionally assume that
\begin{equation}\label{eq:f4_assumption}
 \bbE\left[f(g)^4\right] < \infty,
\end{equation}
where $g \sim \calN(0,1)$ is a standard normal random variable. Under this assumption, the parameters $\mu, \xi^2, \rho^2$ and $\theta^4$ defined in~\eqref{eq:mu_def} to~\eqref{eq:theta_fourth} are all finite. The condition in~\eqref{eq:f4_assumption} holds for quantized measurement models, which do not satisfy the differentiability assumption in~\cite{wei2019statistical}. Moreover,  it does not require $f(g)$ (corresponds to each observation $y_i$) to be sub-Gaussian as assumed in~\cite{liu2020generalized}, thus enables us to deal with more general nonlinear measurement models such as $f(x) = x^3 + \eta$ (the noisy cubic model) or $f(x) = \mathrm{sign}(x) \big(x^2 +1\big) + \eta$, where $\eta$ is a zero-mean random Gaussian noise term.

To reconstruct the direction of the signal $\bx$ from the knowledge of the measurement matrix $\bA \in \bbR^{m \times n}$ and the observed vector $\by =[y_1,y_2,\ldots,y_m]^T \in \bbR^m$ (despite the unknown nonlinearity $f$), we set the estimated vector to be
\begin{equation}\label{eq:oneshot}
 \hat{\bx} = \calP_G\left(\frac{1}{m}\bA^T\by\right),
\end{equation}
where $\calP_G(\cdot)$ is the projection operator onto $\calR(G)$, i.e., $\calP_G(\bs) = \arg\min_{\bw \in \calR(G)} \|\bw - \bs\|_2$ for any $\bs \in \bbR^n$. This can be thought of as a generative counterpart to the methods proposed in~\cite{zhang2014efficient,plan2017high} for sparse priors. We refer to the reconstruction approach corresponding to~\eqref{eq:oneshot} as {\em OneShot} to highlight its non-iterative nature, although approximating the projection step may use iterative procedures such as gradient descent. 


\section{Main Theorem}
\label{sec:main_thm}

We have the following theorem concerning the recovery guarantee for OneShot in~\eqref{eq:oneshot}. Recall that $\mu, \xi^2, \rho^2$ and $\theta^4$ are parameters that are dependent only on the nonlinearity $f$ and are defined in~\eqref{eq:mu_def} to~\eqref{eq:theta_fourth}.

\begin{theorem}\label{thm:main}
 Suppose that the observed vector $\by \in \bbR^m$ is generated from the SIM in~\eqref{eq:sim} with $\ba_i$ being i.i.d. realizations of $\calN(\mathbf{0},\bI_n)$, $f$ satisfying~\eqref{eq:assumption_mu} and~\eqref{eq:f4_assumption}, and $\bx \in \calS^{n-1}\cap \frac{1}{\mu}\calR(G)$. Let $\hat{\bx}$ be calculated from~\eqref{eq:oneshot}. Then, for any $\delta>0$ satisfying $Lr = \Omega(\delta n)$ and $\delta = O\left(\xi\sqrt{\frac{k \log \frac{Lr}{\delta}}{m}}\right)$, we have with probability at least $1-e^{-\Omega\big(k\log\frac{Lr}{\delta}\big)} - \frac{\theta^4}{m\xi^4} - \frac{\rho^2}{\xi^2 k \log \frac{Lr}{\delta}}$ that
 \begin{equation}\label{eq:main_thm_ub}
  \|\hat{\bx}-\mu\bx\|_2 = O\left(\xi\sqrt{\frac{k \log \frac{Lr}{\delta}}{m}}\right).
 \end{equation}
\end{theorem}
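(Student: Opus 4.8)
The plan is to reduce everything to a deviation bound on the empirical average $\bv := \frac1m\bA^T\by = \frac1m\sum_{i=1}^m y_i\ba_i$, exploiting that $\hat{\bx}$ is the nearest point of $\calR(G)$ to $\bv$ while the target $\mu\bx$ already lies in $\calR(G)$. First I would record the elementary consequence of optimality: since $\|\hat{\bx}-\bv\|_2\le\|\mu\bx-\bv\|_2$, expanding both squares gives the basic inequality
\[
\|\hat{\bx}-\mu\bx\|_2^2\le 2\langle\hat{\bx}-\mu\bx,\ \bv-\mu\bx\rangle .
\]
Writing $\be:=\bv-\mu\bx$ and $\bh:=\hat{\bx}-\mu\bx$, the whole theorem comes down to showing $\langle\bh,\be\rangle\lesssim\|\bh\|_2\cdot\xi\sqrt{k\log(Lr/\delta)/m}$ up to lower-order terms, after which solving the resulting quadratic in $\|\bh\|_2$ yields the claim.

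Second, I would show that $\be$ is a centered fluctuation and split it along and across $\bx$. Decomposing each $\ba_i=g_i\bx+\bb_i$ with $g_i=\langle\ba_i,\bx\rangle$ and $\bb_i=(\bI_n-\bx\bx^T)\ba_i$ independent of $g_i$, and using the Gaussian identity $\bbE[y\ba]=\bbE[f(g)g]\bx=\mu\bx$, I get $\bbE\be=0$ and
\[
\be=(\hat{\mu}-\mu)\bx+\bp,\quad \hat{\mu}:=\tfrac1m\textstyle\sum_i f_i(g_i)g_i,\quad \bp:=\tfrac1m\textstyle\sum_i f_i(g_i)\bb_i\in\bx^\perp .
\]
The scalar part is easy: $\hat{\mu}-\mu$ has mean zero and variance $\rho^2/m$, so Chebyshev gives $|\hat{\mu}-\mu|=O(\xi\sqrt{k\log(Lr/\delta)/m})$ except with probability $\frac{\rho^2}{\xi^2 k\log(Lr/\delta)}$ --- exactly the third term in the failure probability --- and this part contributes only $2|\hat{\mu}-\mu|\,|\langle\bh,\bx\rangle|\le 2|\hat{\mu}-\mu|\,\|\bh\|_2$, a term of the desired multiplicative form.

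Third, the orthogonal part $\bp$ is where the generative structure and the weak moment assumption interact, and this is the step I expect to be the main obstacle. The key device is to condition on $\{(g_i,f_i)\}_{i=1}^m$: given this data, $\langle\bw-\mu\bx,\bp\rangle=\frac1m\sum_i f_i(g_i)\langle\bw-\mu\bx,\bb_i\rangle$ is an \emph{exactly Gaussian} random variable with variance $\frac{S}{m^2}\|(\bI_n-\bx\bx^T)(\bw-\mu\bx)\|_2^2\le\frac{S}{m^2}\|\bw-\mu\bx\|_2^2$, where $S:=\sum_i f_i(g_i)^2$. Thus, although $f(g)$ is only assumed to have a finite fourth moment (no sub-Gaussianity), the process $\bw\mapsto\langle\bw-\mu\bx,\bp\rangle$ is conditionally sub-Gaussian with proxy variance governed by $S/m$, which concentrates around $\xi^2$: Chebyshev gives $S/m\le 2\xi^2$ except with probability $\frac{\theta^4}{m\xi^4}$, the second failure term. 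I would then build a $\delta$-net $M$ of $\calR(G)$ by taking a $(\delta/L)$-net of $B_2^k(r)$ (of cardinality $(1+2rL/\delta)^k$) and pushing it through the $L$-Lipschitz map $G$, so that $\log|M|=O(k\log(Lr/\delta))$ and every point of $\calR(G)$ lies within $\delta$ of $M$. On the event $S\le 2m\xi^2$, a conditional Gaussian tail bound at threshold proportional to $\|\bw-\mu\bx\|_2\,\xi\sqrt{k\log(Lr/\delta)/m}$, union-bounded over $M$, shows with probability $1-e^{-\Omega(k\log(Lr/\delta))}$ that $\langle\bw-\mu\bx,\bp\rangle\le\|\bw-\mu\bx\|_2\cdot O\big(\xi\sqrt{k\log(Lr/\delta)/m}\big)$ simultaneously for all $\bw\in M$; crucially this is a \emph{relative} bound, so the (possibly enormous) diameter of $\calR(G)$ never enters.

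Finally, I would transfer the net bound to $\hat{\bx}$ and close the argument. Choosing $\bw_0\in M$ with $\|\hat{\bx}-\bw_0\|_2\le\delta$, I split $\langle\bh,\bp\rangle=\langle\bw_0-\mu\bx,\bp\rangle+\langle\hat{\bx}-\bw_0,\bp\rangle$; the first summand is $\le(\|\bh\|_2+\delta)\cdot O(\xi\sqrt{k\log(Lr/\delta)/m})$ by the net bound, and the discretization residual $\langle\hat{\bx}-\bw_0,\bp\rangle$ is absorbed using $\|\bp\|_2=O(\xi\sqrt{n/m})$ (conditional Gaussian norm concentration) together with the hypotheses $\delta=O(\xi\sqrt{k\log(Lr/\delta)/m})$ and $Lr=\Omega(\delta n)$, which are precisely what force this residual below the target order $\xi^2 k\log(Lr/\delta)/m$ (a finer telescoping over geometrically decreasing scales makes the residual $O(\delta\cdot\mathrm{rate})$ should the single-net Cauchy--Schwarz bound prove too lossy). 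Combining the three contributions, the basic inequality becomes $\|\bh\|_2^2\le \|\bh\|_2\cdot O(\xi\sqrt{k\log(Lr/\delta)/m})+o(\mathrm{rate}^2)$, and solving this quadratic yields $\|\hat{\bx}-\mu\bx\|_2=O(\xi\sqrt{k\log(Lr/\delta)/m})$; a union bound over the three bad events assembles the stated success probability. The conceptual heart --- and the hardest part to get right --- is the conditioning step that converts a heavy-tailed empirical process into a conditionally Gaussian one so that a single localized net argument suffices, which is exactly what lets the fourth-moment quantities $\theta^4$ and $\rho^2$ surface only as polynomial (Chebyshev) failure probabilities rather than in the rate itself.
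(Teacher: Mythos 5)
Your proposal follows essentially the same route as the paper's proof: the projection-optimality inequality $\|\hat{\bx}-\mu\bx\|_2^2 \le 2\langle \frac{1}{m}\bA^T\by - \mu\bx,\, \hat{\bx}-\mu\bx\rangle$; the decomposition along $\bx$ and its orthogonal complement (your $(\hat{\mu}-\mu)\bx + \bp$ is precisely the paper's $\big(\frac{1}{m}\bx^T\bA^T\by-\mu\big)\bx + \frac{1}{m}\bP^{\bot}\bA^T\by$); Chebyshev with $\rho^2$ for the scalar part, giving the third failure term; conditioning on $\frac{1}{m}\sum_i y_i^2 \le 2\xi^2$ (the paper's event $\calE$, with $\bbP(\calE^c)\le \theta^4/(m\xi^4)$) to make the orthogonal process exactly conditionally Gaussian, which is the paper's Lemma~\ref{lem:imp_f4}; a net over $G(B_2^k(r))$ of log-cardinality $O(k\log\frac{Lr}{\delta})$; and solving the resulting quadratic in $\|\hat{\bx}-\mu\bx\|_2$.

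The one step that would fail as primarily stated is your handling of the discretization residual. You claim $|\langle\hat{\bx}-\bw_0,\bp\rangle| \le \delta\|\bp\|_2 = O\big(\delta\,\xi\sqrt{n/m}\big)$ is forced below the target order by the hypotheses; but to fit under $\delta\cdot O\big(\xi\sqrt{k\log(Lr/\delta)/m}\big)$ you would need $n = O\big(k\log\frac{Lr}{\delta}\big)$, whereas $Lr=\Omega(\delta n)$ only yields $\log\frac{Lr}{\delta} = \Omega(\log n)$ --- so $k\log\frac{Lr}{\delta}$ is of order $k\log n \ll n$ in the regime of interest $k \ll n$ (recall $L=n^{\Theta(d)}$ makes $\log\frac{Lr}{\delta}$ logarithmic in $n$, not linear). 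Hence your parenthetical fallback is not a safety valve but the necessary step, and it is exactly what the paper does: a chain of nets $M_0\subseteq \cdots \subseteq M_\ell$ at scales $\delta_i = \delta/2^i$ with $\ell = \lceil \log n\rceil$, so the final residual sits at distance $\delta/2^\ell \le \delta/n$, and is then controlled via H\"older, $|\langle \bp, \hat{\bx}-\hat{\bx}_\ell\rangle| \le \|\bp\|_\infty \|\hat{\bx}-\hat{\bx}_\ell\|_1 \le \|\bp\|_\infty \sqrt{n}\,(\delta/2^\ell)$, with $\|\bp\|_\infty \le \xi\sqrt{\varepsilon/m}$ by a union bound over the $n$ coordinate vectors (alternatively, your $\|\bp\|_2 = O(\xi\sqrt{n/m})$ bound does suffice once multiplied by the last scale $\delta/n$, since $(\delta/n)\,\xi\sqrt{n/m} = \delta\xi/\sqrt{nm}$ is negligible). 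The assumption $Lr = \Omega(\delta n)$ is what makes this chaining close: it guarantees $\log\frac{Lr}{\delta_\ell} = O\big(\log\frac{Lr}{\delta}\big)$ and lets the extra union-bound factor $n$ and the growing level-$i$ net sizes be absorbed into the failure probability $e^{-\Omega(k\log\frac{Lr}{\delta})}$. With that substitution, your argument coincides with the paper's.
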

Since a typical $d$-layer fully-connected neural network has Lipschitz constant $L = n^{\Theta(d)}$~\cite{bora2017compressed}, the assumption $Lr = \Omega(\delta n)$ is typically satisfied automatically. In addition, from the assumption~\eqref{eq:f4_assumption}, $\xi$ is finite. Then, we have that the upper bound in~\eqref{eq:main_thm_ub} is roughly of order $\sqrt{(k \log L)/m}$, which is naturally conjectured to be near-optimal according to the information-theoretic lower bounds for linear CS with generative priors~\cite{liu2020information,kamath2020power}. Perhaps the major caveat to Theorem~\ref{thm:main} is that it assumes the accurate projection. However, this is a standard assumption in relevant works, e.g., see~\cite{shah2018solving,peng2020solving,liu2022generative}, and in practice both gradient- and GAN-based projections have been shown to be highly effective~\cite{shah2018solving,raj2019gan}.


\subsection{Proof Outline of Theorem~\ref{thm:main}}

The proof of Theorem~\ref{thm:main} is outlined below, with the full details provided in the supplementary material. Define the event 
 \begin{equation}\label{eq:eventE}
  \calE = \left\{\frac{1}{m}\sum_{i=1}^m y_i^2 \le 2\xi^2\right\},
 \end{equation}
where $\xi^2$ is defined in~\eqref{eq:xi_sq}. From Chebyshev's inequality and the definition of $\theta^4$ in~\eqref{eq:theta_fourth}, we have
\begin{equation}\label{eq:cheby_eventE}
 \bbP(\calE^c) \le \frac{\theta^4}{m\xi^4}.
\end{equation}
 Define $\bP := \bx\bx^T$ as the orthogonal projection onto the subspace spanned by $\bx$ and $\bP^{\bot} := \bI_n -\bx\bx^T$ as the orthogonal projection onto the orthogonal complement. Based on standard Gaussian concentration~\cite[Example~2.1]{wainwright2019high}, we have the following important lemma, whose proof is given in the supplementary material.
\begin{lemma}\label{lem:imp_f4}
Conditioned on the event $\calE$, we have that for any $\varepsilon > 0$ and $\bs \in \bbR^n$, with probability $1-e^{-\Omega(\varepsilon)}$,
\begin{equation}\label{eq:lem_imp_eq0}
 \left|\frac{1}{m}\sum_{i=1}^m y_i\left\langle \bP^{\bot}\ba_i,\bs\right\rangle\right| \le \frac{\xi \|\bs\|_2\sqrt{\varepsilon}}{\sqrt{m}}.
\end{equation}
\end{lemma}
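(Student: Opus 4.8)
The plan is to exploit the fact that $y_i = f_i(\langle \ba_i,\bx\rangle)$ depends on $\ba_i$ only through its component along $\bx$, whereas $\langle \bP^{\bot}\ba_i,\bs\rangle$ depends only on the component of $\ba_i$ orthogonal to $\bx$; for a standard Gaussian these two pieces are independent. This lets me view the left-hand side as a weighted sum whose weights $y_i$ are fixed once we condition on the parallel components, leaving a centered conditionally-Gaussian quantity to which a standard Gaussian tail bound applies.

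Concretely, I would first note that $\bP^{\bot}$ is symmetric, so $\langle \bP^{\bot}\ba_i,\bs\rangle = \langle \ba_i,\bP^{\bot}\bs\rangle$ is a linear functional of $\ba_i$. Since $\ba_i \sim \calN(\mathbf{0},\bI_n)$, the pair $(\langle \ba_i,\bx\rangle,\, \langle \ba_i,\bP^{\bot}\bs\rangle)$ is jointly Gaussian with covariance $\bx^T\bP^{\bot}\bs = (\bP^{\bot}\bx)^T\bs = 0$, because $\bP^{\bot}\bx = \bzero$. Uncorrelated jointly Gaussian variables are independent, so $\langle \ba_i,\bP^{\bot}\bs\rangle$ is independent of $\langle \ba_i,\bx\rangle$, and hence of $y_i$. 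Letting $\calF$ be the $\sigma$-algebra generated by $(\langle \ba_i,\bx\rangle)_{i=1}^m$ (which determines $(y_i)_{i=1}^m$ and contains $\calE$), the variables $\langle \ba_i,\bP^{\bot}\bs\rangle$ remain i.i.d. $\calN(0,\|\bP^{\bot}\bs\|_2^2)$ conditioned on $\calF$. Therefore
\begin{equation}
 W := \frac{1}{m}\sum_{i=1}^m y_i\langle \bP^{\bot}\ba_i,\bs\rangle
\end{equation}
is, conditionally on $\calF$, centered Gaussian with variance $\sigma^2 = \frac{\|\bP^{\bot}\bs\|_2^2}{m^2}\sum_{i=1}^m y_i^2$. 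On $\calE$ we have $\frac{1}{m}\sum_i y_i^2 \le 2\xi^2$ and $\|\bP^{\bot}\bs\|_2 \le \|\bs\|_2$, hence $\sigma^2 \le \frac{2\xi^2\|\bs\|_2^2}{m}$.

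With $t = \frac{\xi\|\bs\|_2\sqrt{\varepsilon}}{\sqrt{m}}$, the Gaussian tail bound (Example~2.1 of the cited reference) gives $\bbP(|W|>t \mid \calF)\mathbf{1}_{\calE} \le 2e^{-t^2/(2\sigma^2)}\mathbf{1}_{\calE} \le 2e^{-\varepsilon/4}\mathbf{1}_{\calE}$. Since $\calE\in\calF$, taking expectations and dividing by $\bbP(\calE)$ yields $\bbP(|W|>t \mid \calE) \le 2e^{-\varepsilon/4} = e^{-\Omega(\varepsilon)}$, which is exactly the claimed bound.

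The conceptual crux is the independence established in the first step; once it is in place the rest is a routine Gaussian estimate. The one delicate point is the bookkeeping around conditioning on $\calE$: because $\calE$ is itself $\calF$-measurable, the conditional-variance bound holds pointwise on $\calE$, so the tower property cleanly converts the conditional Gaussian tail into the desired $\calE$-conditional probability without any loss in the exponent.
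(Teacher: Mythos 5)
Your proposal is correct and follows essentially the same route as the paper: both arguments rest on the observation that the component of $\ba_i$ orthogonal to $\bx$ is independent of $\langle \ba_i,\bx\rangle$ (and hence of $y_i$), so that conditioned on the $y_i$'s the sum is a centered Gaussian with variance $\frac{\|\bP^{\bot}\bs\|_2^2}{m^2}\sum_{i=1}^m y_i^2 \le \frac{2\xi^2\|\bs\|_2^2}{m}$ on $\calE$, after which the standard Gaussian tail bound gives the claim (the paper just parametrizes the same independence by writing $\langle \ba_i,\bar{\bs}\rangle = \langle \bx,\bar{\bs}\rangle g_i + \sqrt{1-\langle\bx,\bar{\bs}\rangle^2}\,t_i$ with $t_i \indep g_i$). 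One cosmetic remark: since $y_i = f_i(\langle\ba_i,\bx\rangle)$ with the $f_i$ random, your $\sigma$-algebra $\calF$ should also include the $f_i$'s so that it actually determines the $y_i$'s; this changes nothing because $f_i$ is independent of $\ba_i$, hence of the orthogonal components.
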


We now move on to present the proof outline.
\begin{proof}[Proof Outline of Theorem~\ref{thm:main}]
 Since $\hat{\bx} = \calP_G\big(\frac{1}{m}\bA^T\by\big)$ and $\mu \bx \in \calR(G)$, we have $\big\|\frac{1}{m}\bA^T\by - \hat{\bx}\big\|_2 \le \big\|\frac{1}{m}\bA^T\by - \mu \bx\big\|_2$.
Taking square on both sides, we obtain
\begin{equation}\label{eq:thm_eq1_main}
 \|\hat{\bx} - \mu\bx\|_2^2 \le 2\left\langle \frac{1}{m}\bA^T\by -\mu\bx, \hat{\bx} - \mu\bx \right\rangle.
\end{equation}
Recall that $\bP^{\bot} := \bI_n -\bx\bx^T$. To upper bound the right-hand side of~\eqref{eq:thm_eq1_main}, we decompose $\frac{1}{m}\bA^T\by -\mu\bx$ as
\begin{align}
 & \frac{1}{m}\bA^T\by -\mu\bx  = \frac{1}{m}(\bI_n-\bx\bx^T +\bx\bx^T)\bA^T\by  -\mu\bx \\
 & = \frac{1}{m}\bP^{\bot}\bA^T\by + \left(\frac{1}{m}\bx^T\bA^T\by -\mu\right)\bx.\label{eq:thm_eq2}
\end{align}
Then, we obtain that
\begin{itemize}
 \item the term $\big|\big\langle\frac{1}{m}\bP^{\bot}\bA^T\by, \hat{\bx} - \mu\bx\big\rangle\big|$ can be controlled using~\eqref{eq:cheby_eventE}, Lemma~\ref{lem:imp_f4}, and a chaining argument~\cite{bora2017compressed};
 \item the term $\big|\big\langle\big(\frac{1}{m}\bx^T\bA^T\by -\mu\big)\bx, \hat{\bx}-\mu\bx \big\rangle\big|$ can be controlled using the triangle inequality, and Chebyshev's inequality with the definition of $\rho^2$ in~\eqref{eq:rho_sq}.
\end{itemize}
Combining the two upper bounds and simplifying terms, we obtain the desired result in Theorem~\ref{thm:main}.
\end{proof}


\subsection{Extensions of Theorem~\ref{thm:main}}

We present a corollary extending Theorem~\ref{thm:main} in two directions. Specifically, this corollary shows that we can allow for {\em adversarial noise} that may be dependent on the measurement matrix $\bA$ and the existence of {\em representation error} where $\mu \bx \notin \calR(G)$. It is worth noting that for the generalized Lasso approach considered in~\cite{plan2016generalized,liu2020generalized}, handling representation error is not a simple task and is left open. The proof of Corollary~\ref{coro:first} is given in the supplementary material.

\begin{corollary}\label{coro:first}
 Suppose that the observed vector $\by = [y_1,y_2,\ldots,y_m]^T \in \bbR^m$ satisfies
 \begin{equation}
  \frac{1}{\sqrt{m}} \sqrt{\sum_{i=1}^m \left(y_i - f_i(\langle \ba_i,\bx\rangle)\right)^2} \le \nu
 \end{equation}
for some $\nu \ge 0$, with $\ba_i$ being i.i.d. realizations of $\calN(\mathbf{0},\bI_n)$, $f_i$ being i.i.d.~realizations of $f$, $f$ satisfying~\eqref{eq:assumption_mu} and~\eqref{eq:f4_assumption}, and $\bx \in \calS^{n-1}$. Let $\tilde{\bx} = \calP_G(\mu\bx)$ be the vector in $\calR(G)$ that is closest to $\mu\bx$, and let $\hat{\bx}$ be calculated from~\eqref{eq:oneshot}. Then, for any $\delta>0$ satisfying $Lr = \Omega(\delta n)$ and $\delta = O\left(\xi\sqrt{\frac{k \log \frac{Lr}{\delta}}{m}}\right)$, when $m = \Omega(k\log\frac{Lr}{\delta})$, we have with probability at least $1-e^{-\Omega\big(k\log\frac{Lr}{\delta}\big)} - \frac{\theta^4}{m\xi^4} - \frac{\rho^2}{\xi^2 k \log \frac{Lr}{\delta}}$ that
 \begin{equation}
  \|\hat{\bx}-\mu\bx\|_2 = O\left(\xi\sqrt{\frac{k \log \frac{Lr}{\delta}}{m}} + \nu + \|\tilde{\bx}-\mu\bx\|_2\right).
 \end{equation}
\end{corollary}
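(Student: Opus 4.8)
The plan is to mirror the proof of Theorem~\ref{thm:main}, but to (i) re-center the basic projection inequality at $\tilbx = \calP_G(\mu\bx)$ instead of at $\mu\bx$ (which is now possibly outside $\calR(G)$), and (ii) split the observed vector into its ``clean'' SIM part and an adversarial remainder. Concretely, I would write $\by = \tilby + \be$, where $\tilby = [f_1(\langle\ba_1,\bx\rangle),\ldots,f_m(\langle\ba_m,\bx\rangle)]^T$ is a bona fide SIM observation vector and $\be$ collects the perturbation, so that the hypothesis reads $\|\be\|_2 \le \sqrt{m}\,\nu$. Since $\tilbx \in \calR(G)$ and $\hatbx = \calP_G\big(\tfrac1m\bA^T\by\big)$ is the closest point of $\calR(G)$ to $\tfrac1m\bA^T\by$, optimality of the projection gives, after squaring and expanding exactly as in~\eqref{eq:thm_eq1_main},
\begin{equation}
 \|\hatbx - \tilbx\|_2^2 \le 2\left\langle \tfrac1m\bA^T\by - \tilbx,\ \hatbx - \tilbx\right\rangle .
\end{equation}

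Next I would further decompose the inner product by inserting $\mu\bx$:
\[
 \tfrac1m\bA^T\by - \tilbx = \Big(\tfrac1m\bA^T\tilby - \mu\bx\Big) + \tfrac1m\bA^T\be + (\mu\bx - \tilbx),
\]
and bound the three resulting inner products against $\bv := \hatbx - \tilbx$. The first, $\big\langle \tfrac1m\bA^T\tilby - \mu\bx,\ \bv\big\rangle$, is handled verbatim by Theorem~\ref{thm:main}: decompose $\tfrac1m\bA^T\tilby - \mu\bx$ via $\bP^{\bot}$ as in~\eqref{eq:thm_eq2}, apply Lemma~\ref{lem:imp_f4} together with the chaining argument to the $\bP^{\bot}$-component and Chebyshev's inequality with $\rho^2$ to the $\bx$-component. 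Since $\bv \in \calR(G) - \calR(G)$ is exactly the same difference set that appears in Theorem~\ref{thm:main} (where $\hatbx-\mu\bx$ also lies in $\calR(G)-\calR(G)$), this yields a bound of order $\xi\sqrt{(k\log\tfrac{Lr}{\delta})/m}\,\|\bv\|_2$, the residual net-resolution term being absorbed by the hypothesis $\delta = O\big(\xi\sqrt{(k\log\tfrac{Lr}{\delta})/m}\big)$. The third inner product is controlled by Cauchy--Schwarz as $\|\mu\bx - \tilbx\|_2\,\|\bv\|_2$, contributing the representation-error term.

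The crux is the adversarial-noise term $\big\langle \tfrac1m\bA^T\be,\ \bv\big\rangle = \big\langle \be,\ \tfrac1m\bA\bv\big\rangle$. Because $\be$ may depend on $\bA$, I cannot condition on $\bv$ and apply a concentration argument; instead I would pass to the deterministic bound $\big|\langle \be, \tfrac1m\bA\bv\rangle\big| \le \|\be\|_2 \cdot \tfrac1m\|\bA\bv\|_2 \le \nu\cdot \tfrac{1}{\sqrt m}\|\bA\bv\|_2$ and then control $\tfrac{1}{\sqrt m}\|\bA\bv\|_2$ \emph{uniformly} over $\bv \in \calR(G) - \calR(G)$. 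This is precisely a set-restricted singular-value (upper S-REC) estimate for the Gaussian matrix $\bA$ on the difference of ranges, and it is the reason the corollary requires the additional sample complexity $m = \Omega(k\log\tfrac{Lr}{\delta})$: an $\epsilon$-net of $\calR(G)$ has cardinality $(Lr/\epsilon)^{\Theta(k)}$, so $\calR(G)-\calR(G)$ admits a net of size $(Lr/\epsilon)^{\Theta(k)}$, and a union bound over this net with standard Gaussian tail estimates gives $\tfrac{1}{\sqrt m}\|\bA\bv\|_2 \le C\|\bv\|_2$ for all such $\bv$ with probability $1 - e^{-\Omega(k\log\frac{Lr}{\delta})}$. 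Hence the noise term is at most $C\nu\|\bv\|_2$. Collecting the three bounds, dividing the squared inequality through by $\|\bv\|_2$, and finally applying the triangle inequality $\|\hatbx - \mu\bx\|_2 \le \|\bv\|_2 + \|\tilbx - \mu\bx\|_2$ yields the stated estimate; the failure probability is the union of the Theorem~\ref{thm:main} events with the S-REC event, which coincides with the probability already quoted. I expect the S-REC/adversarial-noise step to be the main obstacle, since it is the only place where the dependence of $\be$ on $\bA$ forces uniform (rather than pointwise) control and thereby dictates the extra condition $m = \Omega(k\log\tfrac{Lr}{\delta})$.
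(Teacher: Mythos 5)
Your proposal tracks the paper's proof essentially step for step: the same re-centering of the projection inequality at $\tilde{\bx}=\calP_G(\mu\bx)$, the same three-term decomposition inserting $\mu\bx$ (clean SIM term handled by Theorem~\ref{thm:main}'s machinery, representation error by Cauchy--Schwarz), and the same uniform S-REC-type upper bound on $\frac{1}{\sqrt{m}}\|\bA(\hat{\bx}-\tilde{\bx})\|_2$ over the range of $G$ for the adversarial term (the paper invokes Lemma~2 of~\cite{liu2020generalized}), which is exactly what necessitates $m=\Omega\big(k\log\frac{Lr}{\delta}\big)$. One minor imprecision: the net/union-bound argument yields only the additive-slack form $\frac{1}{\sqrt{m}}\|\bA\bv\|_2 \le (1+\alpha)\|\bv\|_2+\delta$ rather than a purely multiplicative bound, so the noise term is $O\big(\nu(\|\bv\|_2+\delta)\big)$ and one should solve the resulting inequality as in the theorem's proof instead of dividing through by $\|\bv\|_2$ --- but this changes nothing in the conclusion.
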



\section{Experiments}\label{sec:exp}
The proposed method is evaluated on two special cases of the SIM in~\eqref{eq:sim}, namely a noisy $1$-bit measurement model
\begin{equation}
\label{eqn:1bitm}
 y_i = \mbox{sign}(\langle \ba_i,\bx\rangle+e_i), \quad i \in [m],
\end{equation}
where $e_{i}$ are i.i.d.~realizations of $\mathcal{N}\big(0, \sigma^{2}\big)$, and a noisy cubic measurement model
\begin{equation}
\label{eqn:cubicm}
 y_i =\langle \ba_i,\bx\rangle^{3}+ \eta_i, \quad i \in [m],
\end{equation}
 where $\eta_{i}$ are i.i.d.~realizations $\mathcal{N}\big(0, \sigma^{2}\big)$. Note that representation error is implicitly allowed in our experiments since the image vectors are not exactly contained in the range of the generative model. For simplicity, throughout this section, we do not consider adversarial noise. Experimental results with adversarial noise and the visualization of samples generated from pre-trained generative models are presented in the supplementary material.
 

\subsection{Implementation Details}

The experiments are performed on the MNIST~\cite{lecun1998gradient} and CelebA~\cite{liu2015deep} datasets. The MNIST dataset consists of $60,000$ images of handwritten digits. The size of each image in the MNIST dataset is $28 \times 28$, and thus the ambient dimension is $n = 784$. The CelebA dataset contains more than $200,000$ face images of celebrities. Each input image was cropped to a $64\times64$ RGB image, giving $n =64\times64\times3 = 12288$ inputs per image. The generative model $G$ for the MNIST dataset is set to be a pre-trained variational autoencoder (VAE) model with latent dimension $k = 20$. The encoder and decoder are both fully connected neural networks with two hidden layers, with the architecture being $20-500-500-784$. The VAE is trained by the Adam optimizer with a mini-batch size of $100$ and a learning rate of $0.001$  using the original training set of MNIST.

For the CelebA dataset, we choose the DCGAN~\cite{radford2015unsupervised,kim2017tensorflow} for the generative model $G$. The architecture of the DCGAN follows that in \cite{kim2017tensorflow} and the dimension of the input vector is set to be $k = 100$, with each entry being independently drawn from the standard normal distribution. We use the same training setup as in \cite{bora2017compressed} to train the DCGAN on the training set of CelebA. Images that are selected from the testing sets (unseen by the pre-trained generative models) of the MNIST and CelebA datasets are used to generate $1$-bit and cubic observations based on \eqref{eqn:1bitm} and \eqref{eqn:cubicm} respectively, with more details being listed in Table \ref{tab:meas}.

To approximate the projection step $\calP_G(\cdot)$, we utilize the following two methods: 1) A gradient descent method that is performed using the Adam optimizer with $100$ steps and a learning rate of $0.1$. Such an {\em iterative} method is also used in~\cite{liu2020sample,shah2018solving,peng2020solving,liu2022generative}. 2) A GAN-based projection method~\cite{raj2019gan} that is {\em non-iterative} and much faster, for which we follow the settings in~\cite{raj2019gan} to train the GAN models used in our experiments.

\begin{table}[h]
\caption{\label{tab:meas}The parameters of the measurement models}
\begin{tabular}{|ccc|}
\hline
\multicolumn{3}{|c|}{$1$-bit measurements}                                                   \\ \hline
\multicolumn{1}{|c|}{Dataset}    & \multicolumn{1}{c|}{$\sigma$}               & $m$               \\ \hline
\multicolumn{1}{|c|}{MNIST}  & \multicolumn{1}{c|}{0.1, 0.5, 1, 5}    & 25, 50, 100, 200, 400 \\ \hline
\multicolumn{1}{|c|}{CelebA} & \multicolumn{1}{c|}{0.01,0.05,0.1,0.5} & 4000,6000,10000,15000  \\ \hline
\multicolumn{3}{|c|}{cubic measurements}                                                   \\ \hline
\multicolumn{1}{|c|}{Dataset}    & \multicolumn{1}{c|}{$\sigma$ }            & $m$               \\ \hline
\multicolumn{1}{|c|}{MNIST}  & \multicolumn{1}{c|}{0.1, 0.5, 1, 5}    & 25, 50, 100, 200, 400 \\ \hline
\multicolumn{1}{|c|}{CelebA} & \multicolumn{1}{c|}{0.01,0.05,0.1,0.5} & 4000,6000,10000,15000  \\ \hline
\end{tabular}
\end{table}

We perform the recovery tasks for the two nonlinear measurement models described in~\eqref{eqn:1bitm} and~\eqref{eqn:cubicm} using our proposed non-iterative method as in~\eqref{eq:oneshot} (denoted by~\texttt{OneShot} when using iterative projection, and~\texttt{OneShotF} when using faster non-iterative projection), with comparison to some sparsity-based methods, and the method proposed in \cite{bora2017compressed} (denoted by \texttt{CSGM}), as well as some generative model based projected iterative methods. For the sparse recovery with MNIST, we use Lasso~\cite{Tib96} on the images in the image domain with the shrinkage parameter setting to be $0.1$ (denoted by \texttt{Lasso}). For the sparse recovery with CelebA, we use Lasso on the images in the wavelet domain using 2D Daubechies-$1$ Wavelet Transform with the shrinkage parameter setting to be $0.00001$ (denoted by \texttt{Lasso-W}). For the projected iterative method with $1$-bit measurements, we use the method proposed in \cite{liu2020sample} (\underline{B}inary \underline{I}terative (\underline{F}ast) \underline{P}rojected \underline{G}radient method, denoted by~\texttt{BIPG} when using iterative projection, and~\texttt{BIFPG} when using faster non-iterative projection), which is the SOTA method for $1$-bit CS with generative priors, using the same pre-trained generative model as those described above. The corresponding formula is as follows:
\begin{equation}
 \label{eq:bipg}
\mathbf{x}^{(t+1)}=\mathcal{P}_{G}\left(\mathbf{x}^{(t)}+\lambda \mathbf{A}^{T}\left(\mathbf{y}-\operatorname{sign}\left(\mathbf{A} \mathbf{x}^{(t)}\right)\right)\right).
\end{equation}
 For the projected iterative method with cubic measurements, since there is no existing method specifically designed for this case, we compare with the method proposed in \cite{shah2018solving,peng2020solving} (denoted by~\texttt{PGD} when using iterative projection, and~\texttt{FPGD} when using faster non-iterative projection), although it is initially designed for {\em linear} CS with generative priors. We also use the same pre-trained generative model as those described above. The corresponding formula is as follows:
 \begin{equation}
 \label{eq:pgd}
\mathbf{x}^{(t+1)}=\mathcal{P}_{G}\left(\mathbf{x}^{(t)}+\lambda \mathbf{A}^{T}\left(\mathbf{y}- \mathbf{A} \mathbf{x}^{(t)}\right)\right).
\end{equation}
For~\texttt{BIPG} (or~\texttt{BIFPG}) and~\texttt{PGD} (or~\texttt{FPGD}), we set the step size as $\lambda = 1/m$, the initial vector as $\bx^{(0)} = \mathbf{0}$, and the total number of iterations as $T = 30$. 

All experiments are run using Python 3.6 and TensorFlow 1.5.0, with a NVIDIA GeForce GTX 1080 Ti 11GB GPU. To reduce the impact of local minima, we perform $10$ random restarts, and choose the best among these.  The cosine similarity refers to the inner product between the signal $\bx$ and the normalized output vector of each recovery method, and it is averaged over both the testing images and these $10$ random restarts.


\subsection{Recovery Results from $1$-bit Measurements}
The reconstructed images of the MNIST dataset from $1$-bit measurements are shown in Figure~\ref{fig:mnist_1bit}, where we consider two settings with $\sigma = 1.0, m = 200$ and $\sigma = 0.1, m = 400$. In addition, we provide quantitative comparisons according to cosine similarity. To illustrate the effect of the sample size $m$, the cosine similarity in terms of  $m \in \{25,50,100,200,400\}$ for MNIST reconstruction is plotted in Figure~\ref{fig:mnist_1bit_cs}a, with fixing $\sigma = 1$. In addition, to illustrate the effect of the noise level $\sigma$, the cosine similarity in terms of $\sigma \in \{0.1, 0.5, 1, 5\}$ for MNIST reconstruction is plotted in Figure \ref{fig:mnist_1bit_cs}b, with fixing $m = 200$. We observe the following from Figures~\ref{fig:mnist_1bit} and~\ref{fig:mnist_1bit_cs}:
\begin{itemize}
 \item \texttt{Lasso} and~\texttt{CSGM} attain poor reconstructions.
 \item \texttt{OneShot} and~\texttt{BIPG} significantly outperform all other methods, with the reconstruction performance of~\texttt{OneShot} being slightly better than~\texttt{BIPG}, though~\texttt{OneShot} is non-iterative and performs much faster than~\texttt{BIPG} ({\em cf.} Table~\ref{tab:timeelapsed}).
 \item \texttt{OneShot} outperforms \texttt{OneShotF} and~\texttt{BIPG} outperforms~\texttt{BIFPG}, which amount to showing that at least for the MNIST dataset, the faster computation of the GAN-based projection step comes at the price of worse reconstruction.
\end{itemize}

The reconstructed images of the CelebA dataset from $1$-bit measurements are shown in Figure~\ref{fig:celebA_1bit}, where we consider two settings $\sigma = 0.01, m = 4000$ and $\sigma = 0.05, m = 10000$. To illustrate the effect of the sample size $m$, the cosine similarity in terms of $m \in \{4000, 6000,10000,15000\}$ for CelebA reconstruction\footnote{Note that for $1$-bit CS, it is very practical to set $m > n = 12288$ since $1$-bit measurements can be taken at extremely high rates~\cite{zhang2014efficient}.} is plotted in Figure \ref{fig:celebA_1bit_cs}a, with fixing $\sigma = 0.01$. In addition, to illustrate the effect of the noise level $\sigma$, the cosine similarity in terms of $\sigma \in \{0.01, 0.05, 0.1, 0.5\}$ for CelebA reconstruction is plotted in Figure \ref{fig:celebA_1bit_cs}b, with fixing $m = 4000$. We observe the following from Figures~\ref{fig:celebA_1bit} and~\ref{fig:celebA_1bit_cs}:
\begin{itemize}
 \item \texttt{Lasso-W} almost fails to recover the images, although the cosine similarities corresponding to~\texttt{Lasso-W} are not small.
 \item \texttt{CSGM}, \texttt{BIPG}, and \texttt{BIFPG} lead to inferior reconstruction performance.
 \item Our proposed methods \texttt{OneShot} and \texttt{OneShotF} obtain much better reconstruction compared to all other methods. It is worth noting that the non-iterative approach~\texttt{OneShot} (or~\texttt{OneShotF}) significantly outperforms the projected iterative approach~\texttt{BIPG} (or~\texttt{BIFPG}). While this is a bit counter-intuitive, similar results concerning sparse priors have been reported in~\cite[Figures $1$ to $3$]{zhang2014efficient}, showing that for synthetic data, a non-iterative approach leads to better recovery performance when compared with a sparse counterpart to~\texttt{BIPG}.
\end{itemize}


\begin{figure}[htp]
\begin{center}
\begin{tabular}{p{3.75cm}<{\centering}p{3.75cm}<{\centering}}
\includegraphics[width=0.23\textwidth]{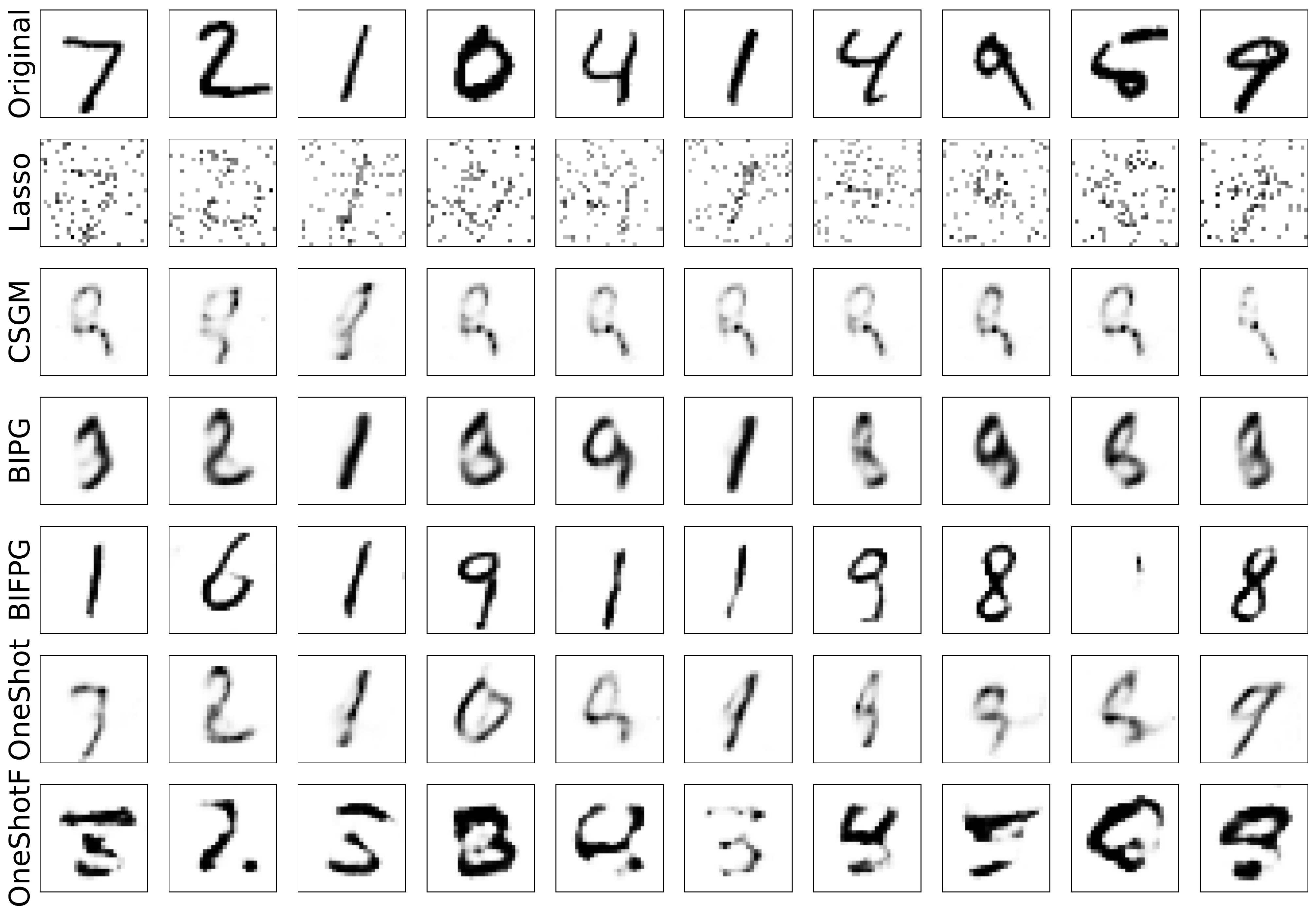} & \includegraphics[width=0.23\textwidth]{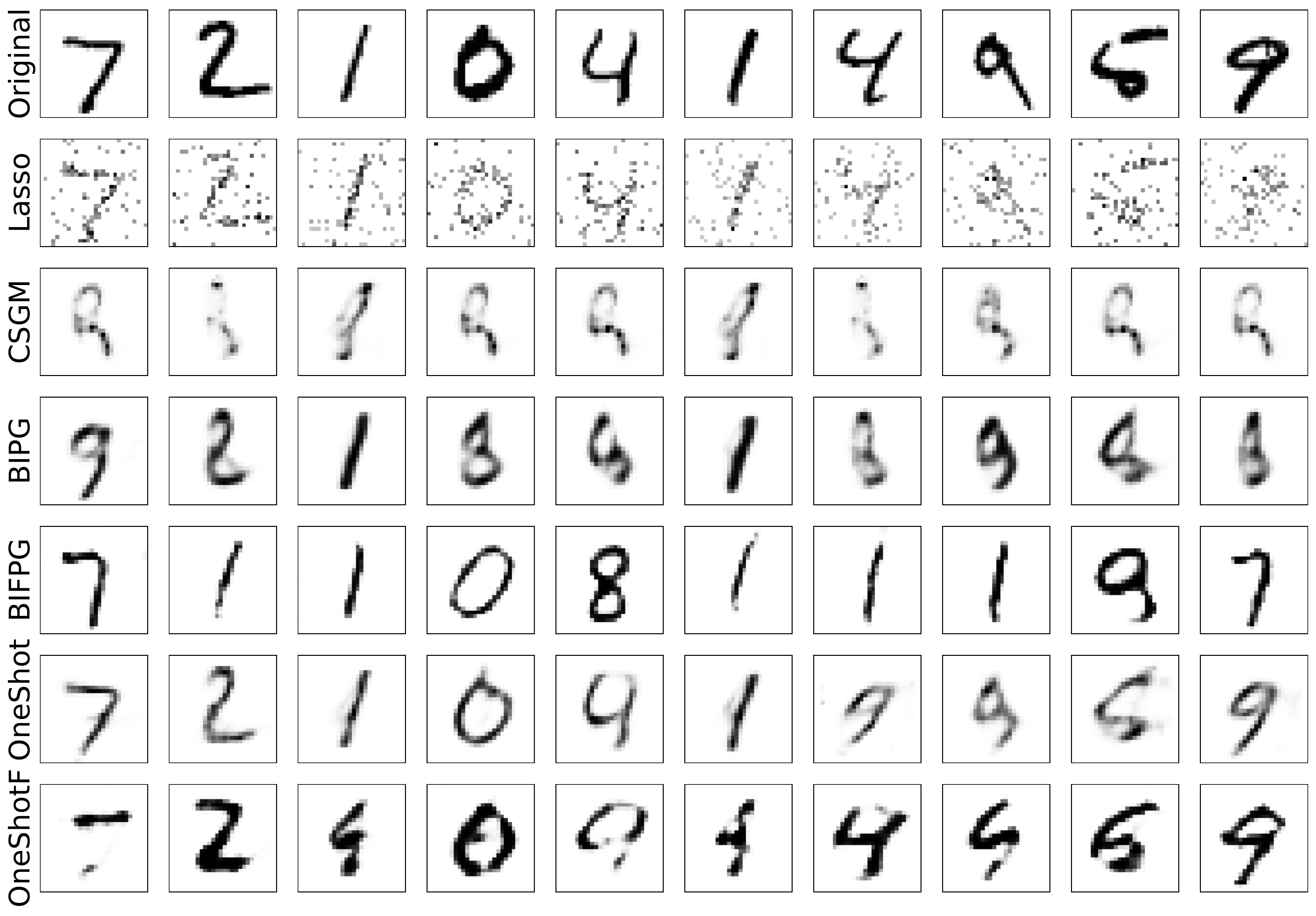}\\
{\small (a) $\sigma = 1.0 $   and $m = 200$}& {\small (b) $\sigma = 0.1 $  and $m = 400$}\\
\end{tabular}
\caption{Examples of reconstructed images from $1$-bit measurements on the MNIST images.}
\label{fig:mnist_1bit}
\end{center}
\end{figure}


\begin{figure}[htp]
\begin{center}
\begin{tabular}{p{3.75cm}<{\centering}p{3.75cm}<{\centering}}
\includegraphics[width=0.23\textwidth]{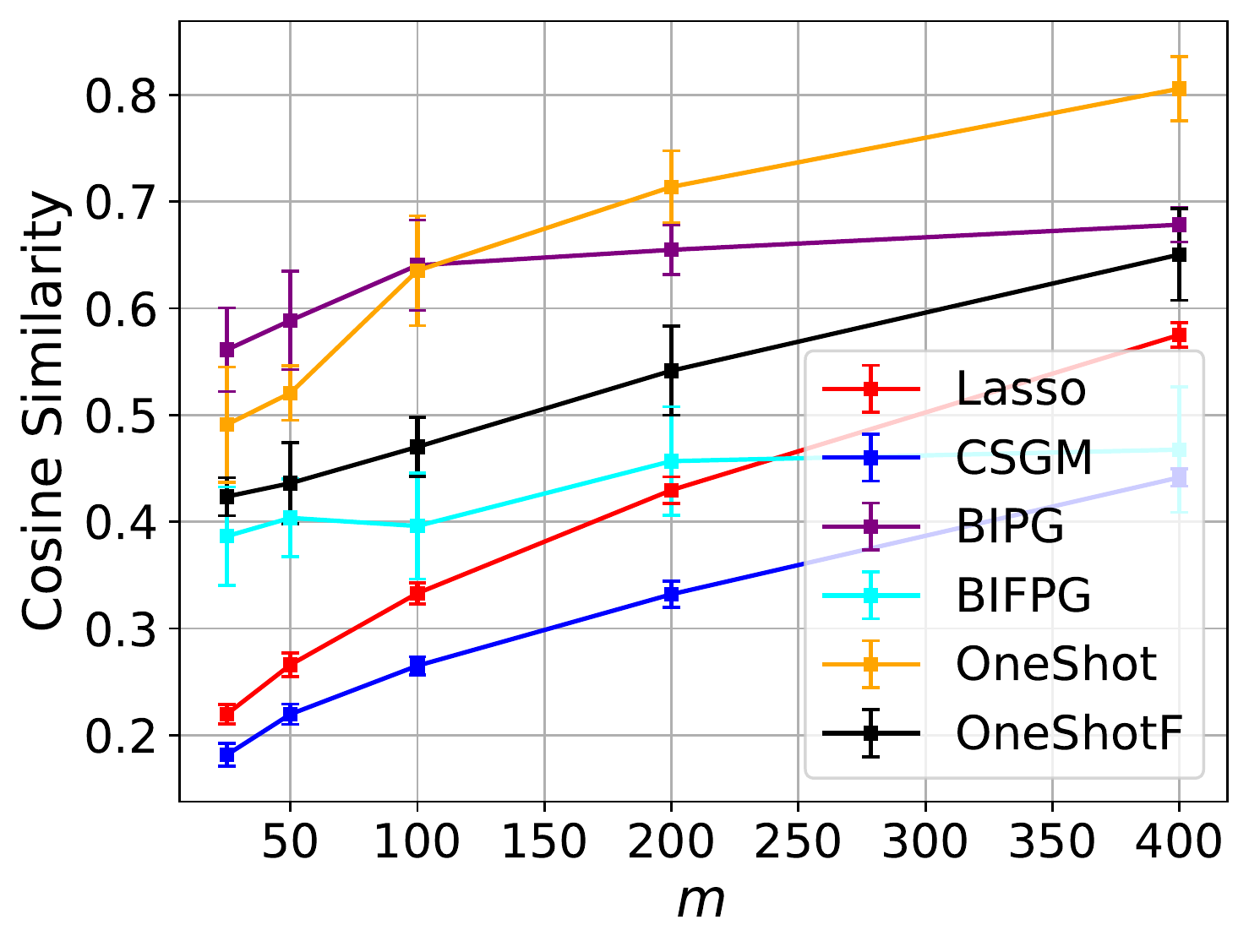} & \includegraphics[width=0.23\textwidth]{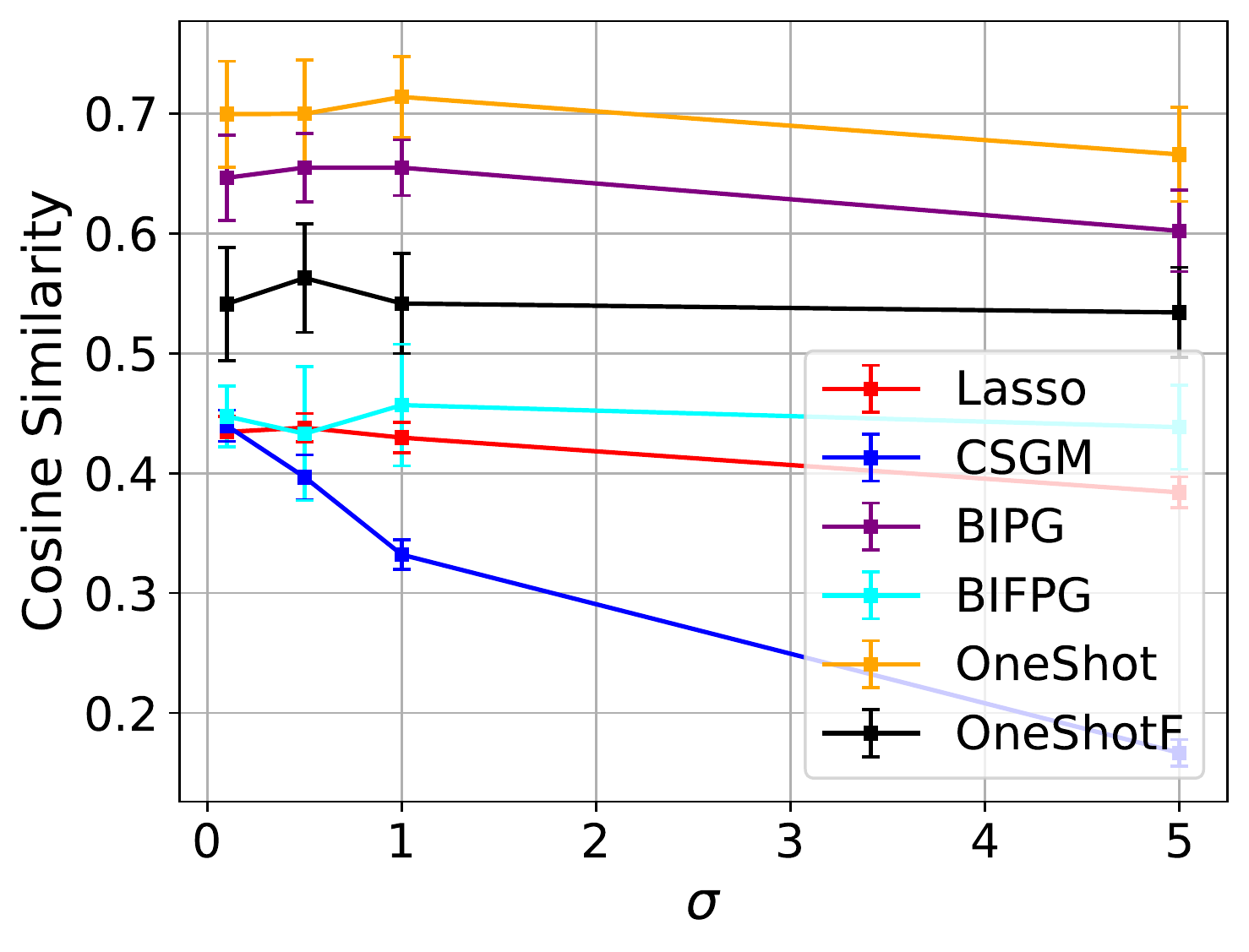}\\
{\small (a) Fixing $\sigma = 1.0$  and varying $m$  }& {\small (b) Fixing $m = 200$} and varying $\sigma$  \\
\end{tabular}
\caption{Quantitative comparisons according to the cosine similarity for $1$-bit measurements on MNIST images.}
\label{fig:mnist_1bit_cs}
\end{center}
\end{figure}

\begin{figure}[htp]
\begin{center}
\begin{tabular}{p{7.5cm}<{\centering}}
\includegraphics[width=0.46\textwidth]{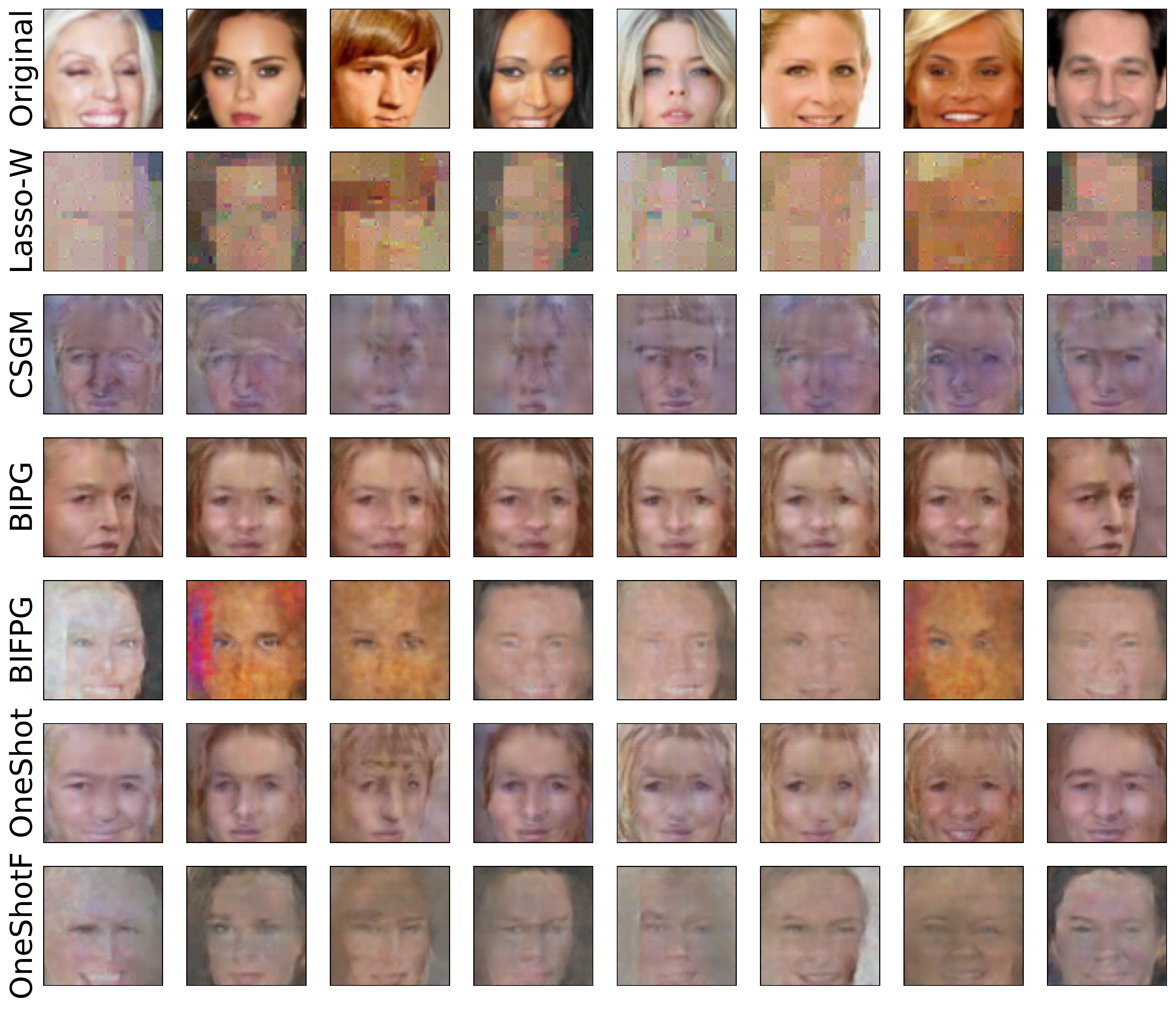} \\
{\small (a)   $\sigma = 0.01 $ and $m = 4000$}\\
\includegraphics[width=0.46\textwidth]{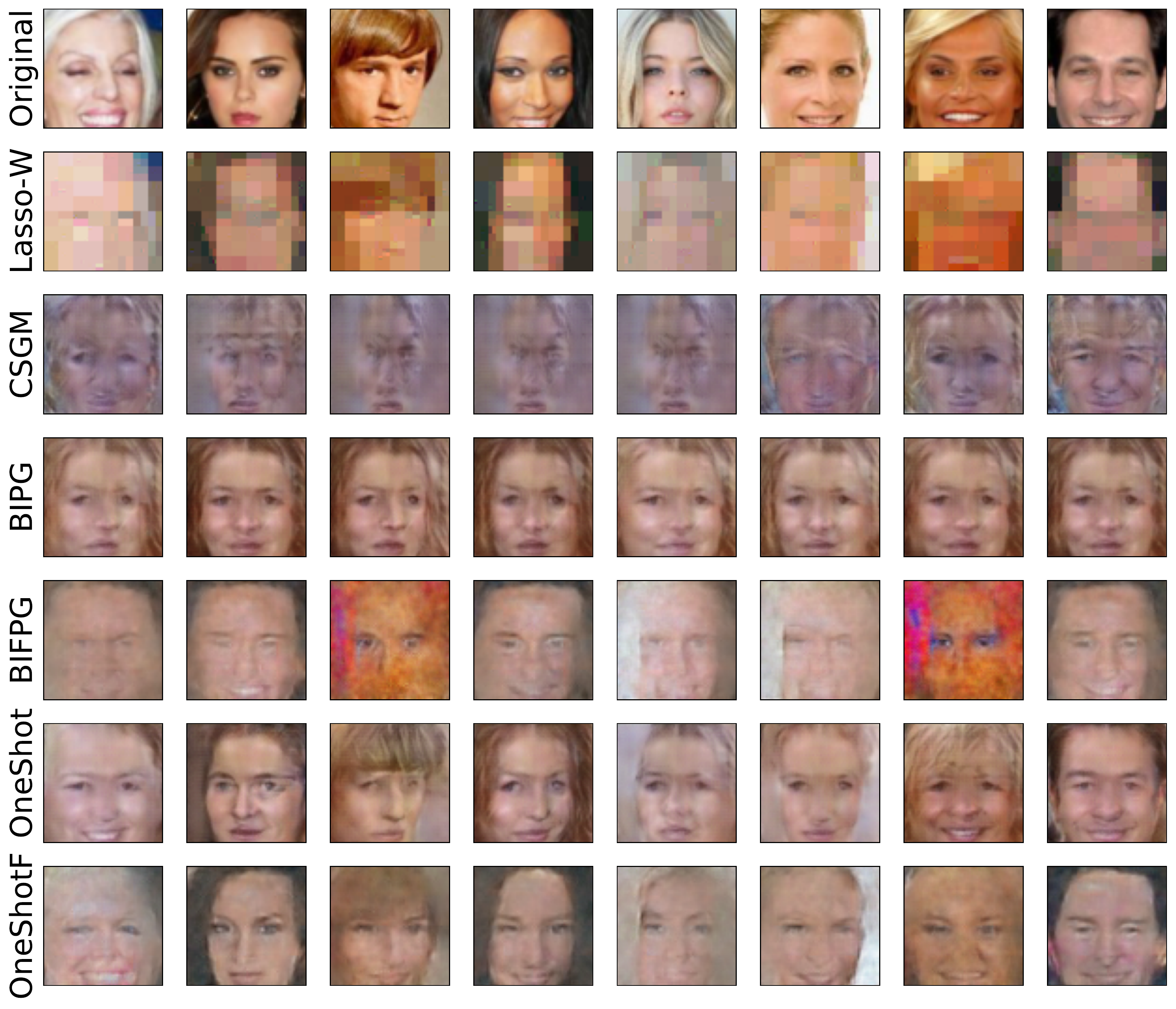}\\
 {\small (b) $\sigma = 0.05 $ and $m = 10000$}\\
\end{tabular}
\caption{Examples of reconstructed images from $1$-bit measurements on CelebA images.}
\label{fig:celebA_1bit}
\end{center}
\end{figure}


\begin{figure}[htp]
\begin{center}
\begin{tabular}{p{3.75cm}<{\centering}p{3.75cm}<{\centering}}
\includegraphics[width=0.23\textwidth]{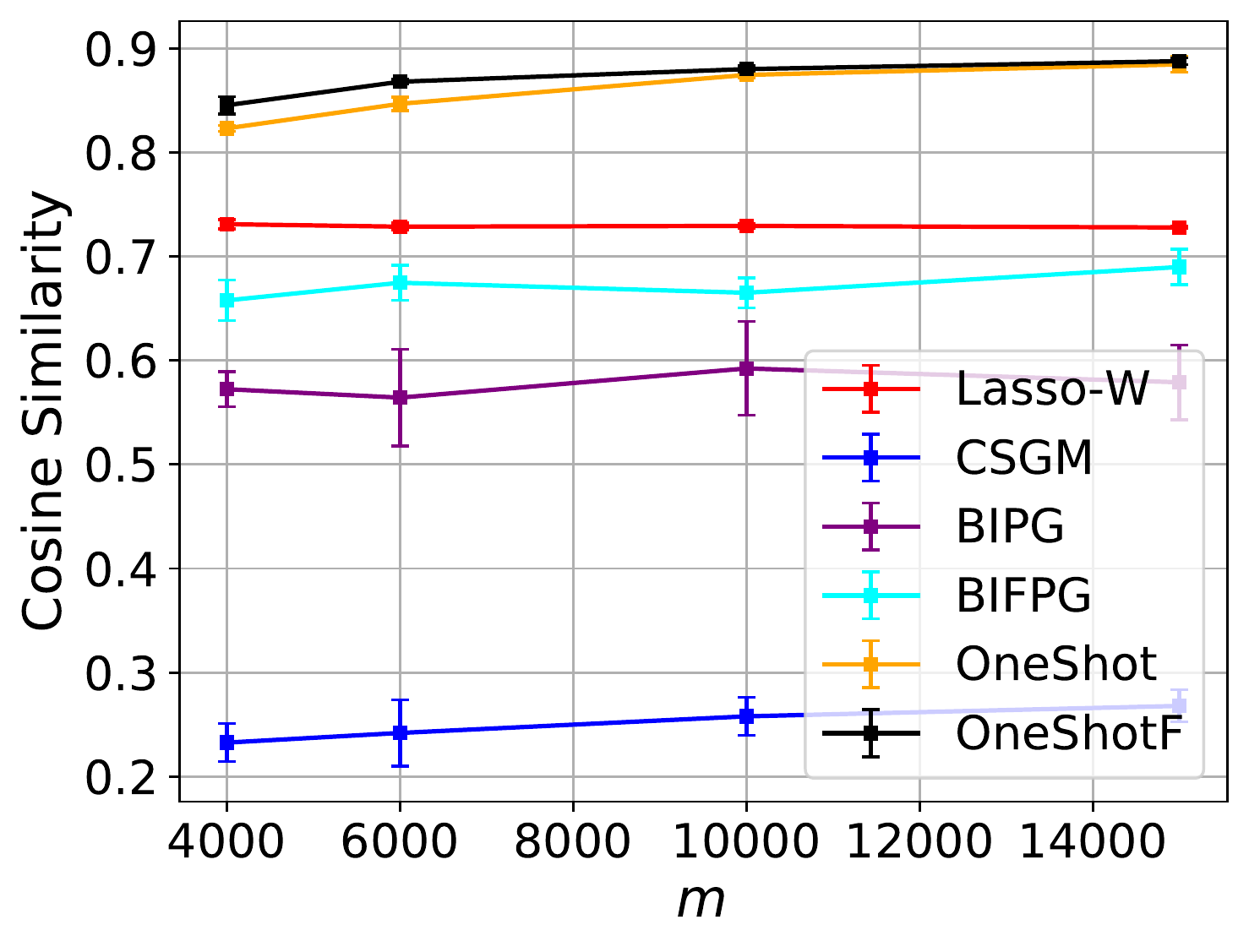} & \includegraphics[width=0.23\textwidth]{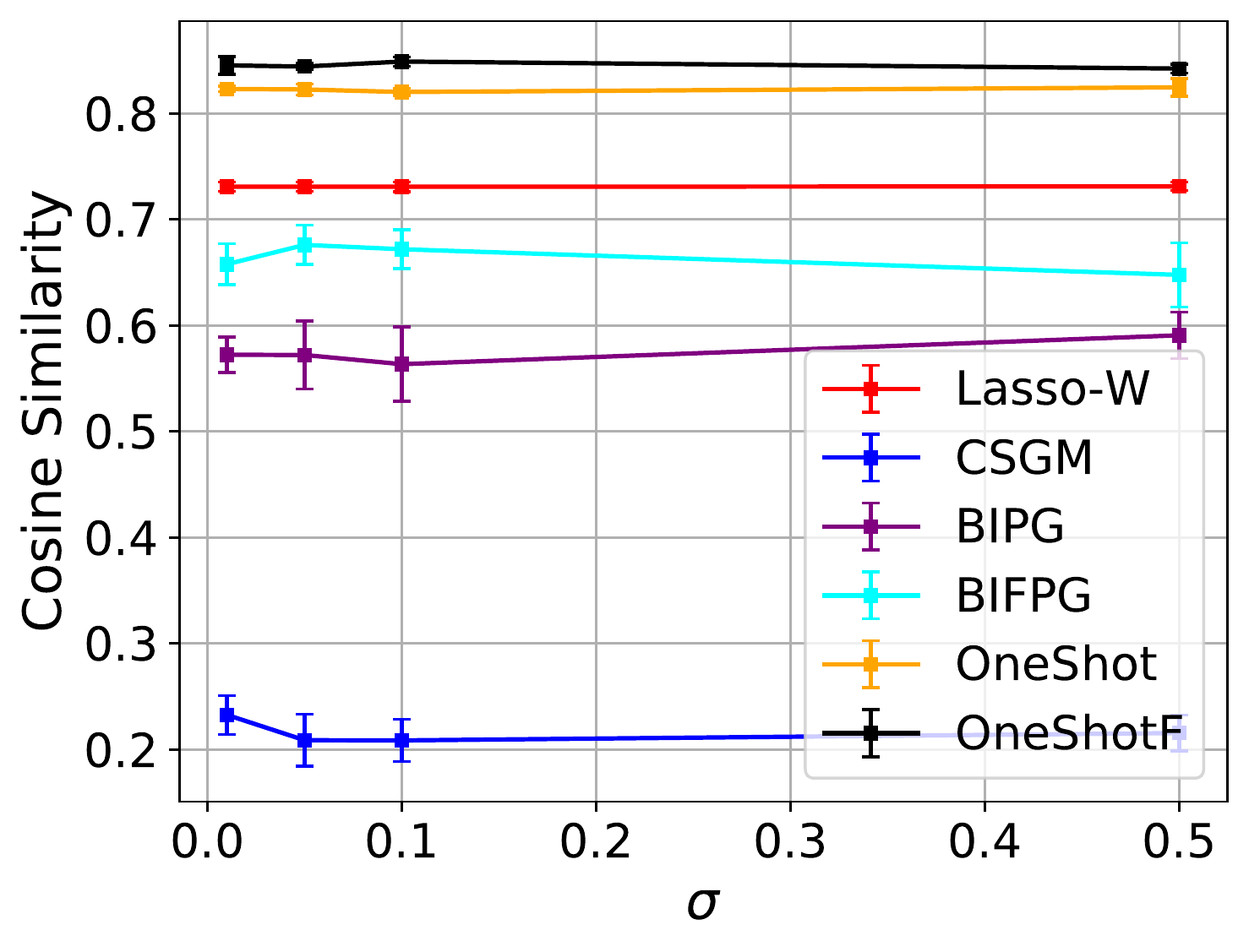}\\
{\small (a) Fixing $\sigma = 0.01 $ and varying $m$  }& {\small (b) Fixing $m = 4000$} and varying $\sigma$ \\
\end{tabular}
\caption{Quantitative comparisons according to the cosine similarity for $1$-bit measurements on CelebA images.}
\label{fig:celebA_1bit_cs}
\end{center}
\end{figure}


\subsection{Recovery Results from Cubic Measurements}
The reconstructed results from cubic measurements for the MNIST dataset are shown in Figure~\ref{fig:mnist_cubic}, and quantitative comparsions in terms of cosine similarity are presented in Figure~\ref{fig:mnist_cubic_cs}. From Figures~\ref{fig:mnist_cubic} and~\ref{fig:mnist_cubic_cs}, we observe that~\texttt{OneShot} outperforms all other competing methods (including~\texttt{OneShotF} and~\texttt{PGD}) by a large margin. The reconstructed results from cubic measurements for the CelebA dataset are shown in Figure~\ref{fig:celebA_cubic}, and quantitative comparsions in terms of cosine similarity are presented in Figure~\ref{fig:celebA_cubic_cs}. From these two figures, we observe that~\texttt{CSGM} almost totally fails to recover the images, and \texttt{OneShot} and \texttt{OneShotF} can still obtain high-quality reconstructed images that are much better than those of \texttt{Lasso-W}. In particular, we observe that~\texttt{OneShot} performs on par with~\texttt{PGD}, for which the first iterative step reduces to~\texttt{OneShot} when setting the initial vector $\bx^{(0)} = \mathbf{0}$ and the step size $\lambda = 1/m$. This reveals that for~\texttt{PGD}, one iterative step may be sufficient, and subsequent iterations might not lead to significant better reconstruction.

\begin{figure}[htp]
\begin{center}
\begin{tabular}{p{3.75cm}<{\centering}p{3.75cm}<{\centering}}
\includegraphics[width=0.23\textwidth]{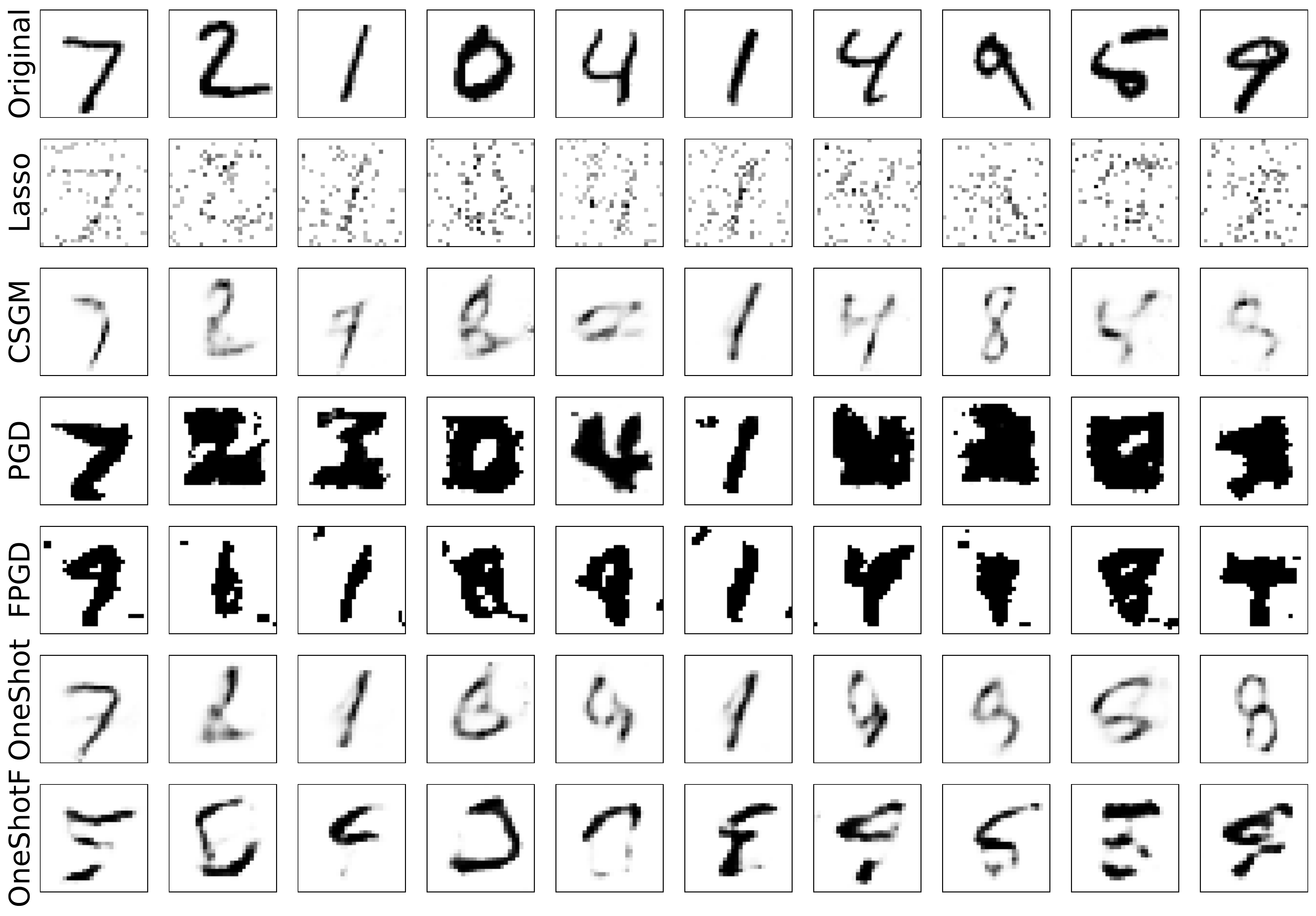} & \includegraphics[width=0.23\textwidth]{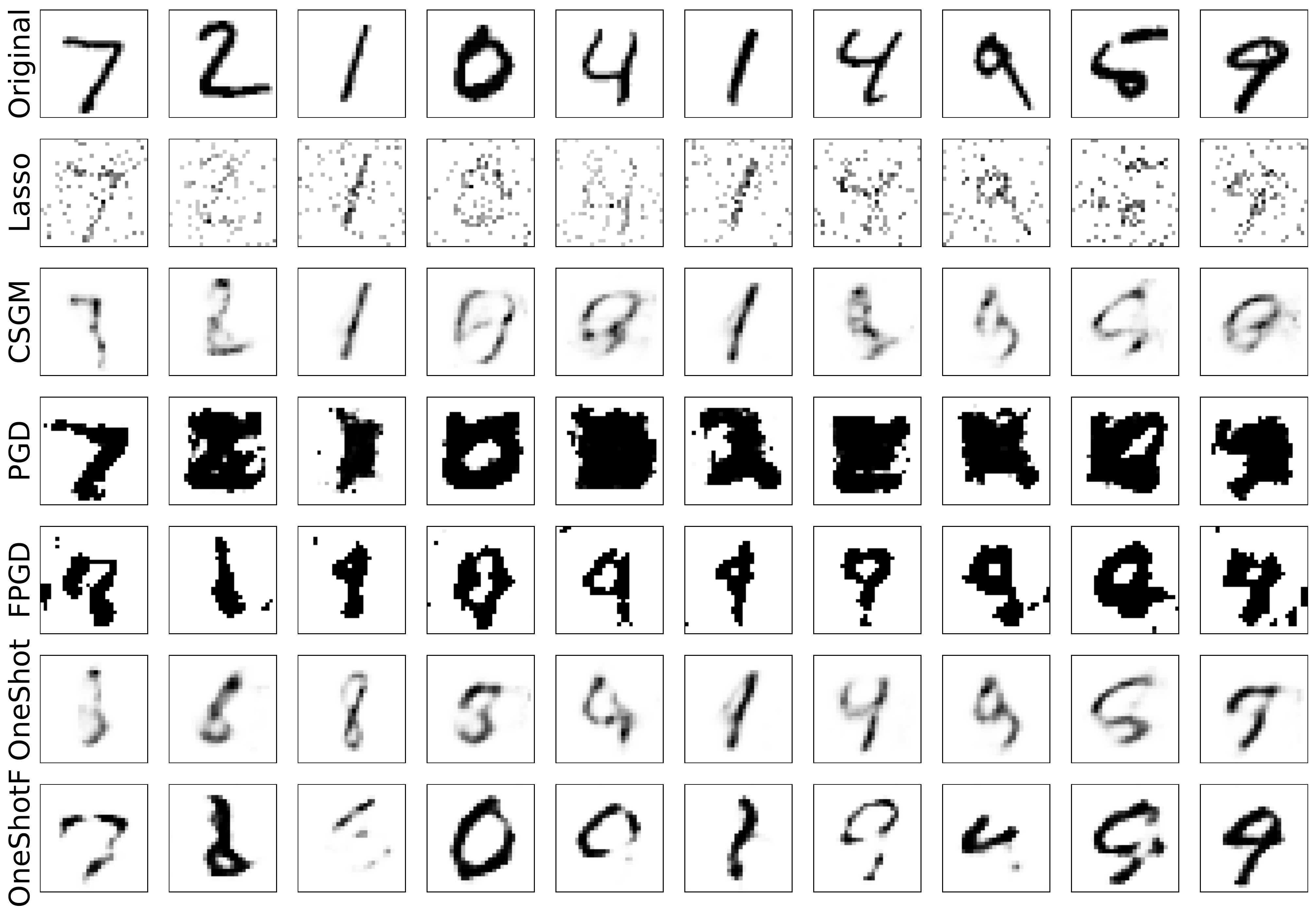}\\
{\small (a) $\sigma = 1$  and  $m = 200$}& {\small (b) $\sigma = 0.1 $  and $m = 400$}\\
\end{tabular}
\caption{Examples of reconstructed images from cubic measurements on MNIST images.}
\label{fig:mnist_cubic}
\end{center}
\end{figure}

\begin{figure}[htp]
\begin{center}
\begin{tabular}{cc}
\includegraphics[height=0.185\textwidth]{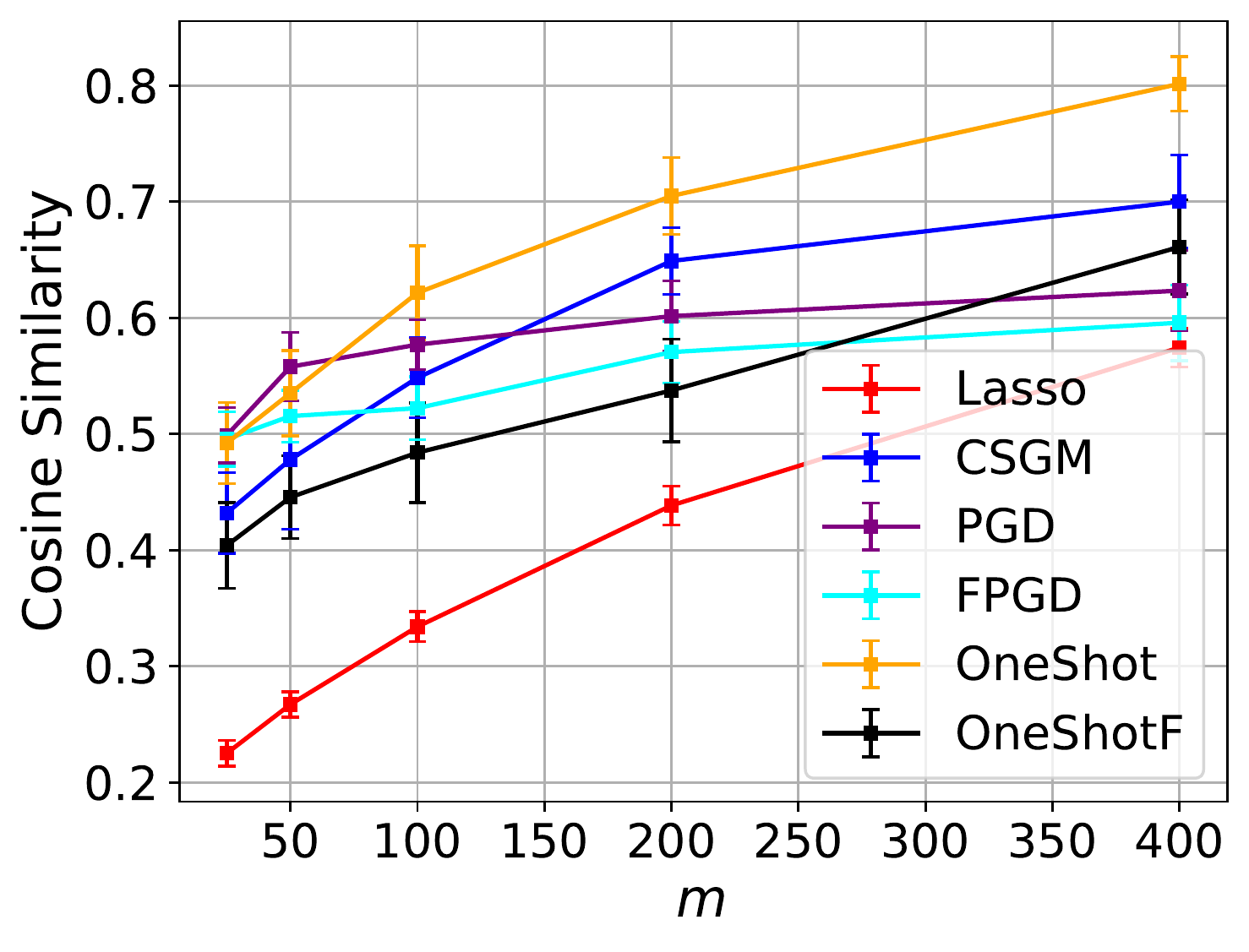} & \hspace{-0.5cm}
\includegraphics[height=0.185\textwidth]{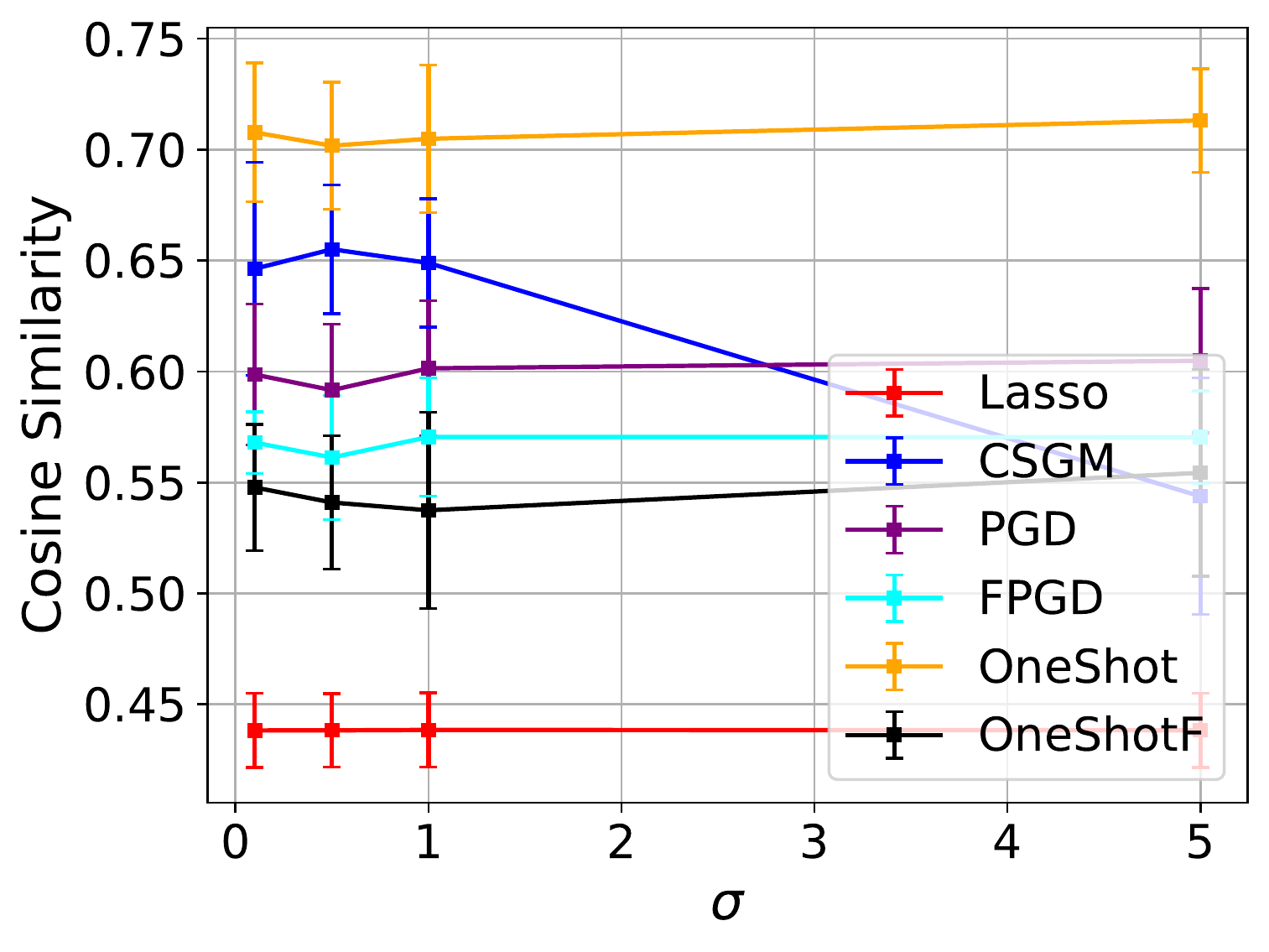} \\
{\small (a) Fixing $\sigma = 1$, varying $m$} & {\small (b) Fixing $m = 200$, varying $\sigma$}
\end{tabular}
\caption{Quantitative comparisons according to the cosine similarity for cubic measurements on MNIST images.}
\label{fig:mnist_cubic_cs}  
\end{center}
\end{figure}

\begin{figure}[htp]
\begin{center}
\begin{tabular}{p{7.5cm}<{\centering}}
\includegraphics[width=0.46\textwidth]{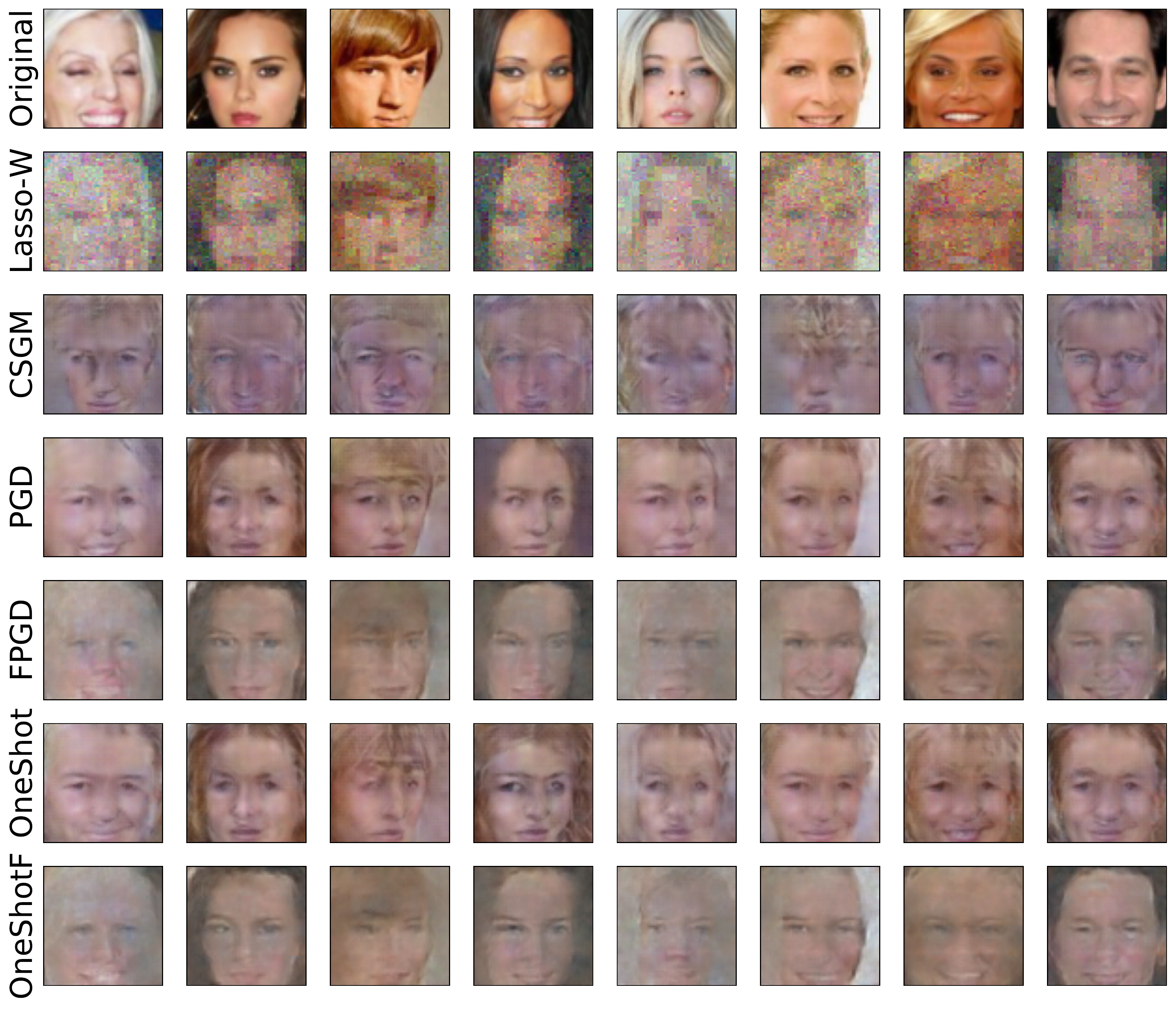} \\
{\small (a)   $\sigma = 0.01 $ and $m = 4000$}\\
\includegraphics[width=0.46\textwidth]{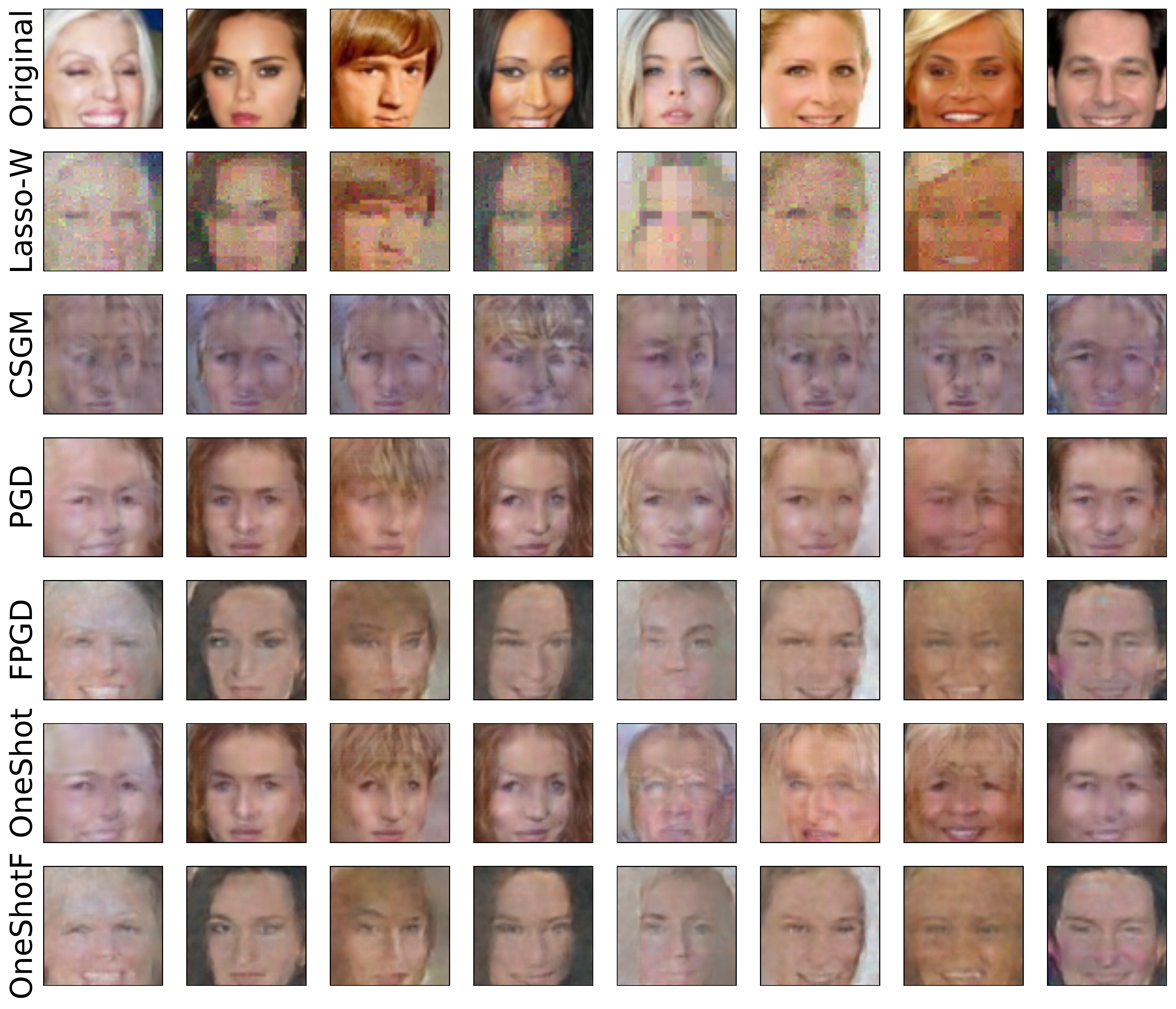}\\
 {\small (b) $\sigma = 0.05 $ and $m = 10000$}\\
\end{tabular}
\caption{Examples of reconstructed images from cubic measurements on CelebA images.}
\label{fig:celebA_cubic}
\end{center}
\end{figure}

\begin{figure}[htp]
\begin{center}
\begin{tabular}{p{3.75cm}<{\centering}p{3.75cm}<{\centering}}
\includegraphics[width=0.23\textwidth]{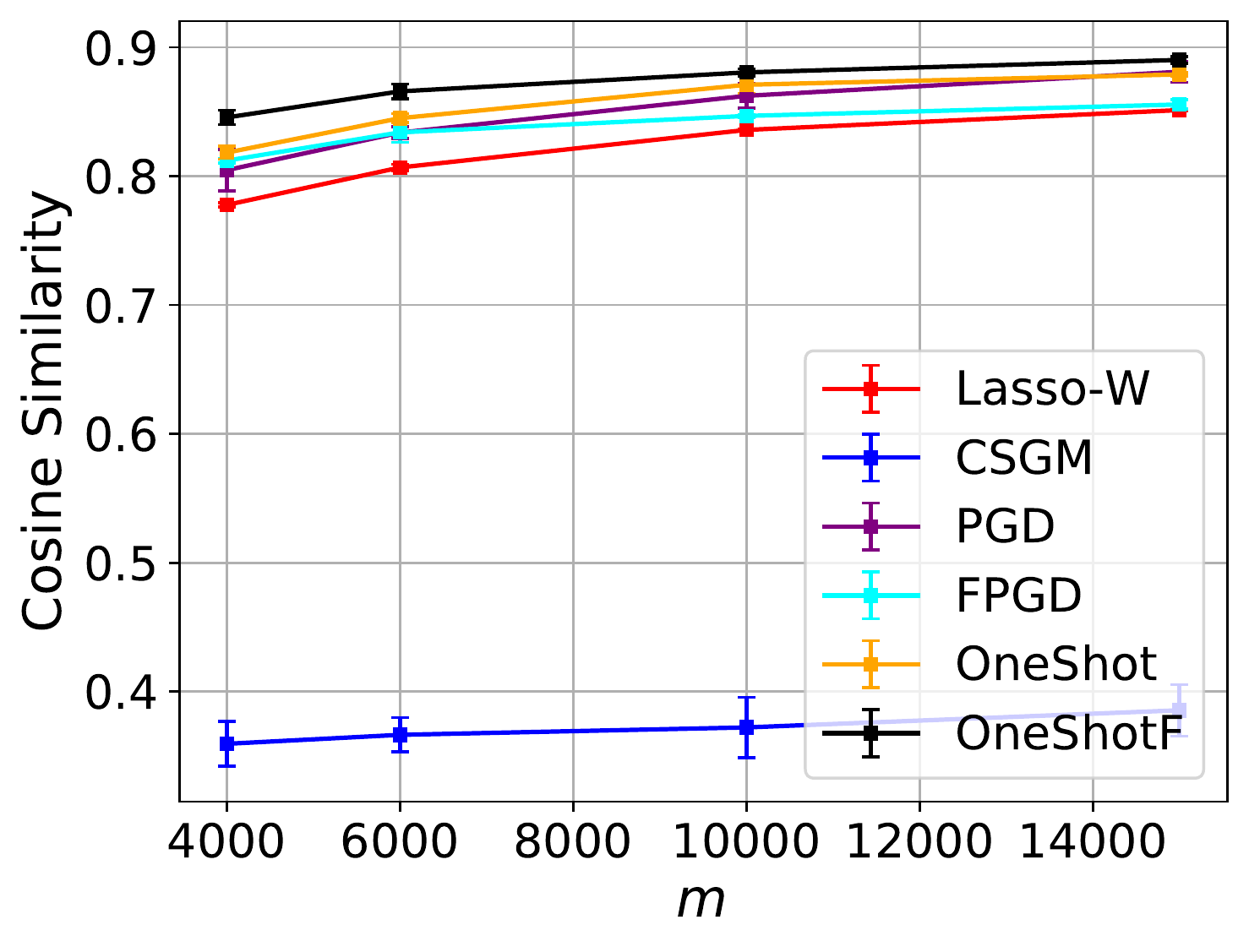} & \includegraphics[width=0.23\textwidth]{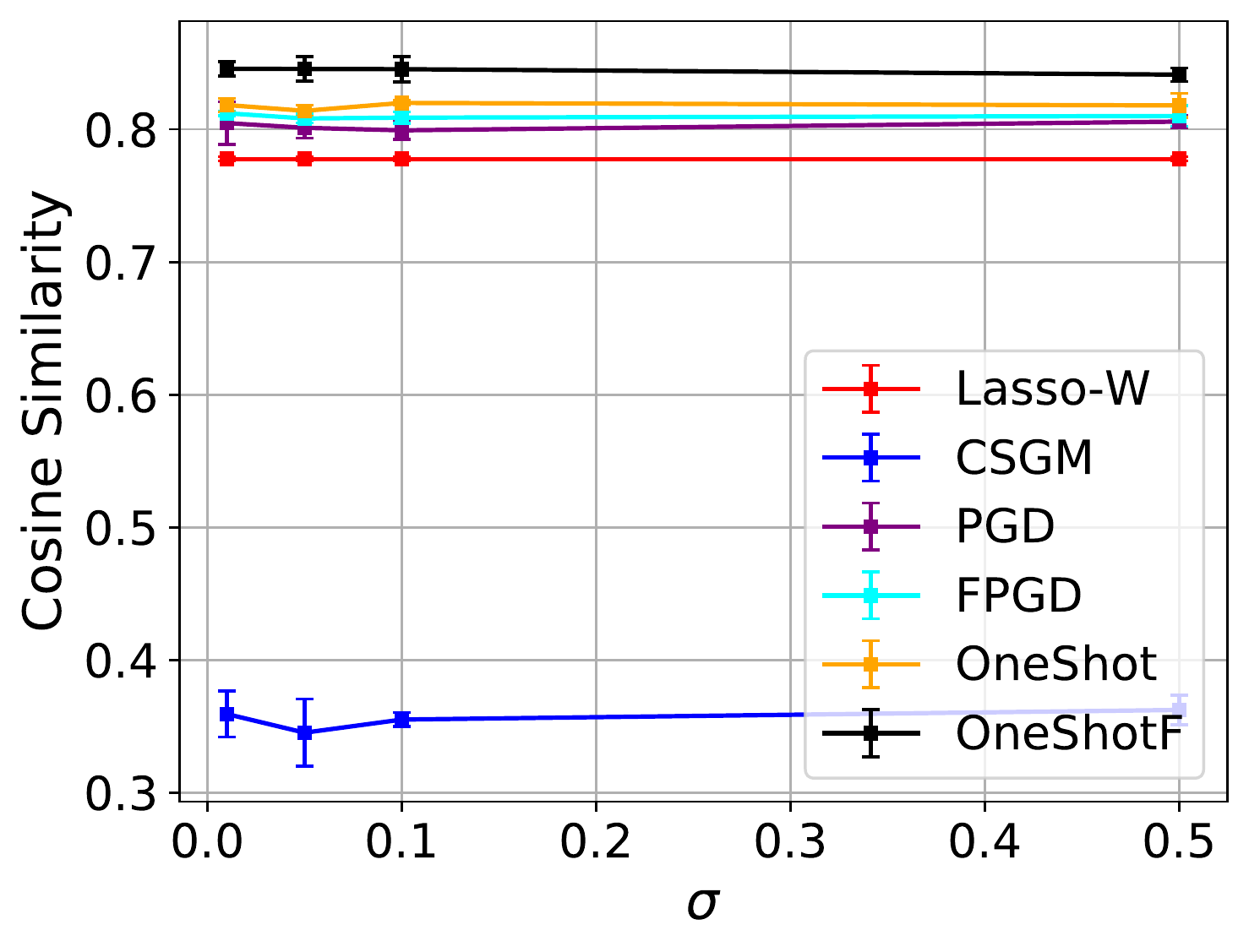}\\
{\small (a) Fixing $\sigma = 0.01 $ and varying $m$  }& {\small (b) Fixing $m = 4000$} and varying $\sigma $ \\
\end{tabular}
\caption{Quantitative comparisons according to the cosine similarity for cubic measurements on CelebA images.}
\label{fig:celebA_cubic_cs}
\end{center}
\end{figure}

\subsection{Running Time}

The running times shown in Table \ref{tab:timeelapsed} illustrate that compared to using gradient-based iterative projection, using GAN-based non-iterative projection leads to much faster computation. \texttt{OneShot} is slower than sparsity-based recovery methods because approximating the projection step is time-consuming. However, since~\texttt{OneShot} only requires one projection, it is much faster than the generative model based projected iterative methods~\texttt{BIPG} and~\texttt{PGD}. Furthermore, recall that from the numerical results, we observe that~\texttt{OneShot} mostly achieves better reconstruction performance compared to that of ~\texttt{BIPG} and~\texttt{PGD}, as well as that of~\texttt{CSGM} and sparsity-based recovery methods.

\begin{table}[htp]
\caption{\label{tab:timeelapsed}The averaged time cost (secs) per reconstruction.} 
\begin{tabular}{|c|>{\raggedleft\arraybackslash}p{1.1cm} >{\raggedleft\arraybackslash}p{1.1cm} |>{\raggedleft\arraybackslash} p{1.1cm}>{\raggedleft\arraybackslash}p{1.1cm}|}
\hline
\multirow{3}{*}{} & \multicolumn{2}{c|}{$1$-bit}                                   & \multicolumn{2}{c|}{cubic}                                   \\ \cline{2-5}
                  & \multicolumn{1}{c|}{MNIST}   & CelebA                        & \multicolumn{1}{c|}{MNIST}   & CelebA                        \\
                  & \multicolumn{1}{c|}{\footnotesize{$m=200$}} & \multicolumn{1}{l|}{\footnotesize{$m=4000$}} & \multicolumn{1}{l|}{\footnotesize{$m=200$}} & \multicolumn{1}{l|}{\footnotesize{$m=4000$}} \\ \hline
\texttt{Lasso}             & \multicolumn{1}{r|}{0.12}    & /                             & \multicolumn{1}{r|}{0.11}    & /                             \\ \hline
\texttt{Lasso-W}           & \multicolumn{1}{r|}{/}       & 12.81                         & \multicolumn{1}{r|}{/}       & 16.27                         \\ \hline
\texttt{CSGM}              & \multicolumn{1}{r|}{0.63}    &  185.26                       & \multicolumn{1}{r|}{0.79}    & 186.54                         \\ \hline
\texttt{BIPG}              & \multicolumn{1}{r|}{14.30}   & 2589.11                             & \multicolumn{1}{r|}{/}   & /                             \\ \hline
\texttt{BIFPG}             & \multicolumn{1}{r|}{0.54}   & 8.93                             & \multicolumn{1}{r|}{/}   & /                             \\ \hline
\texttt{PGD}               & \multicolumn{1}{r|}{/}       &  /                       & \multicolumn{1}{r|}{12.74}       & 2539.24                         \\ \hline
\texttt{FPGD}              & \multicolumn{1}{r|}{/}       &  /                      & \multicolumn{1}{r|}{0.83}       & 8.09                         \\ \hline
\texttt{OneShot}           & \multicolumn{1}{r|}{0.34}    &  130.63                       & \multicolumn{1}{r|}{0.61}    & 128.89                         \\ \hline
\texttt{OneShotF}          & \multicolumn{1}{r|}{0.03}       &  0.61                      & \multicolumn{1}{r|}{0.03}       & 0.66                         \\ \hline
\end{tabular}
\end{table}

\vspace{-1ex}

\section{Conclusion and Future Work}

In this paper, we proposed a non-iterative approach for nonlinear CS with SIMs and generative priors. We made the assumption~\eqref{eq:f4_assumption} for the nonlinear function $f$, and this enabled us to study both the noisy $1$-bit (this cannot be handled by~\cite{wei2019statistical} due to the differentiability assumption) and cubic (this cannot be handled by~\cite{liu2020generalized} due to the sub-Gaussianity assumption for the observations) measurement models. We showed that our approach attained the near-optimal statistical rate $O(\sqrt{(k \log L)/m})$. We also demonstrated via various numerical experiments that our approach was efficient and led to better reconstruction compared to several baselines.

Possible extensions include 1) providing a matching information-theoretic lower bound for CS with SIMs and generative priors; 2) extending to the case that the measurement vectors $\ba_i$ are i.i.d. realizations of $\calN(\mathbf{0},\mathbf{\Sigma})$ with an unknown covariance matrix $\mathbf{\Sigma}$, instead of the current assumption that $\ba_i$ are i.i.d. realizations of $\calN(\mathbf{0},\bI_n)$; 3) training generative models~\cite{gulrajani2017improved,roth2017stabilizing,saharia2021image,karras2021alias} that are more advanced compared to the DCGAN.

\vspace{-1ex}

\section*{Acknowledgments} J. Liu was partially supported by the Fund of the Youth Innovation Promotion Association, CAS (2022002). We gratefully acknowledge Dr. Jonathan Scarlett for proofreading the paper and giving insightful comments.

{\small
\bibliographystyle{ieee_fullname}
\bibliography{writeups}
}

\clearpage
\newpage

\appendix
\onecolumn

{\centering
    {\huge \bf Supplementary Material}

    {\Large \bf Non-Iterative Recovery from Nonlinear Observations using Generative Models (CVPR 2022)

    \large \normalfont Jiulong Liu and Zhaoqiang Liu \par }
}

\section{Overview}

This document presents the supplementary material omitted from the main paper. In Appendix~\ref{app:aux_lemmas}, we provide the proofs of auxiliary lemmas for Theorem~\ref{thm:main}. In Appendix~\ref{app:coro_first}, we provide the proof of Corollary~\ref{coro:first}. We present experimental results with adversarial noise in Appendix~\ref{app:exp_adv}, and we present visualizations of samples generated from the pre-trained generative models in Appendix~\ref{app:exp_vis}. Numbered citations refer to the reference list in the main paper.

\section{Proof of Theorem~\ref{thm:main} (Recovery Guarantee for OneShot)}
\label{app:aux_lemmas}

Before providing the proof, we present some useful auxiliary results.

\subsection{Auxiliary Results for Proving Theorem~\ref{thm:main}}
First, we have a simple tail bound for a Gaussian random variable.
\begin{lemma} {\em (Gaussian tail bound \cite[Example~2.1]{wainwright2019high})}
\label{lem:large_dev_Gaussian} Suppose that $X \sim \calN(\alpha,\sigma^2)$ is a Gaussian random variable with mean $\alpha$ and variance $\sigma^2$. Then, for any $t >0$,
\begin{equation}
\bbP(|X-\alpha|\ge t) \le 2e^{-\frac{t^2}{2\sigma^2}}.
\end{equation}
\end{lemma}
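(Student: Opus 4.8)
The plan is to prove this via the standard Chernoff (exponential Markov) argument applied to the standardized variable. First I would reduce to the case of a standard normal: set $Z := (X-\alpha)/\sigma$, so that $Z \sim \calN(0,1)$ and $\bbP(|X-\alpha| \ge t) = \bbP\big(|Z| \ge t/\sigma\big)$. By the symmetry of the standard Gaussian, it suffices to control the one-sided tail, since $\bbP(|Z| \ge s) = 2\,\bbP(Z \ge s)$ for any $s > 0$, where I write $s := t/\sigma$.

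Next I would invoke the exponential Markov inequality: for any $\lambda > 0$,
\begin{equation}
 \bbP(Z \ge s) = \bbP\big(e^{\lambda Z} \ge e^{\lambda s}\big) \le e^{-\lambda s}\,\bbE\big[e^{\lambda Z}\big].
\end{equation}
The key computational input is the moment generating function of a standard normal, namely $\bbE\big[e^{\lambda Z}\big] = e^{\lambda^2/2}$, which follows by completing the square in the Gaussian integral $\int_{\bbR} e^{\lambda z} \frac{1}{\sqrt{2\pi}} e^{-z^2/2}\,dz$. Substituting yields $\bbP(Z \ge s) \le e^{\lambda^2/2 - \lambda s}$ for every $\lambda > 0$.

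Finally I would optimize the free parameter $\lambda$ to make the exponent as negative as possible. The exponent $\lambda^2/2 - \lambda s$ is a convex quadratic in $\lambda$, minimized at $\lambda = s$, which gives the bound $\bbP(Z \ge s) \le e^{-s^2/2}$. Combining this with the symmetry reduction gives $\bbP(|Z| \ge s) \le 2 e^{-s^2/2}$, and resubstituting $s = t/\sigma$ recovers the claimed inequality $\bbP(|X-\alpha| \ge t) \le 2 e^{-t^2/(2\sigma^2)}$.

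There is no genuine obstacle here, as this is a classical textbook tail bound; the only nontrivial ingredient is the Gaussian MGF identity, and the subsequent optimization over $\lambda$ is the crux, though it is entirely routine. One could alternatively bypass the Chernoff machinery and estimate the tail integral directly via $\int_s^\infty e^{-z^2/2}\,dz \le \int_s^\infty (z/s)\,e^{-z^2/2}\,dz = s^{-1} e^{-s^2/2}$, but this produces a slightly different (and for small $s$ weaker) constant, so I would favor the Chernoff route to match the clean factor of $2$ and the exact exponent $-t^2/(2\sigma^2)$ stated in the lemma.
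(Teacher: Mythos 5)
Your proof is correct: the standardization, symmetry reduction, Chernoff bound with the Gaussian MGF $\bbE[e^{\lambda Z}]=e^{\lambda^2/2}$, and optimization at $\lambda=s$ together give exactly the stated bound $\bbP(|X-\alpha|\ge t)\le 2e^{-t^2/(2\sigma^2)}$. The paper offers no proof of its own---it cites this as a known result (Wainwright, Example~2.1)---and your argument is precisely the standard Chernoff/MGF derivation used in that reference, so you have matched the intended approach.
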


Based on Lemma~\ref{lem:large_dev_Gaussian}, we provide the proof of Lemma~\ref{lem:imp_f4}.

\begin{proof}[Proof of Lemma~\ref{lem:imp_f4}]
Since $\bA \in \bbR^{m\times n}$ is a matrix with i.i.d. standard Gaussian entries, $\bP\bA^T$ is independent of $\bP^{\bot}\bA^T$. Note that the columns of $\bP\bA^T$ are $\langle \ba_i,\bx\rangle \bx$, $y_i = f_i(\langle \ba_i,\bx\rangle)$ with $f_i$ being i.i.d.~realizations of $f$, and the columns of $\bP^{\bot}\bA^T$ are $(\bI_n - \bx\bx^T)\ba_i$. Therefore, for any fixed $\bs$, $\langle (\bI_n - \bx\bx^T)\ba_i, \bs\rangle$ and $y_i$ are mutually independent. In addition,
 \begin{align}
  \frac{1}{m}\sum_{i=1}^m y_i\left\langle \bP^{\bot}\ba_i,\bs\right\rangle = \frac{1}{m}\sum_{i=1}^m y_i\left\langle \left(\bI_n -\bx\bx^T\right)\ba_i,\bs\right\rangle &= \frac{1}{m}\sum_{i=1}^m y_i(\langle\ba_i,\bs\rangle - \langle \bx,\bs\rangle \langle \ba_i,\bx\rangle) \\
  & = \frac{\|\bs\|_2}{m}\sum_{i=1}^m y_i(\langle\ba_i,\bar{\bs}\rangle - \langle \bx,\bar{\bs}\rangle \langle \ba_i,\bx\rangle),\label{eq:lem_imp_eq1}
 \end{align}
 where $\bar{\bs} = \bs/\|\bs\|_2$. Let $g_i = \langle \ba_i,\bx\rangle \sim \calN(0,1)$. Then, since $\mathrm{Cov}[\langle \ba_i,\bar{\bs}\rangle, g_i] = \langle \bx,\bar{\bs}\rangle$, $\langle \ba_i,\bar{\bs}\rangle$ can be written as
 \begin{equation}
  \langle \ba_i,\bar{\bs}\rangle = \langle \bx,\bar{\bs}\rangle g_i + \sqrt{1- \langle \bx,\bar{\bs}\rangle^2} t_i,
 \end{equation}
where $t_i\sim \calN(0,1)$ is independent with $g_i$. Then, we obtain
\begin{align}
 & \frac{\|\bs\|_2}{m}\sum_{i=1}^m y_i(\langle\ba_i,\bar{\bs}\rangle - \langle \bx,\bar{\bs}\rangle \langle \ba_i,\bx\rangle) = \frac{\sqrt{\|\bs\|_2^2 - \langle \bx,\bs\rangle^2}}{m}\sum_{i=1}^m y_i t_i,\label{eq:lem_imp_eq2}
\end{align}
with $t_i$ being standard normal random variables that are independent of $y_i$. Conditioned on the event $\calE$ ({\em cf.}~\eqref{eq:eventE}), combining~\eqref{eq:lem_imp_eq1} and~\eqref{eq:lem_imp_eq2}, we have that $\frac{1}{m}\sum_{i=1}^m y_i\left\langle \bP^{\bot}\ba_i,\bs\right\rangle$ is zero-mean Gaussian with the variance being
\begin{equation}
 \frac{(\|\bs\|_2^2 - \langle \bx,\bs\rangle^2)\sum_{i=1}^m y_i^2}{m^2}.
\end{equation}
From Lemma~\ref{lem:large_dev_Gaussian}, we have for any $u >0$ that
\begin{align}
 \bbP\left(\left|\frac{1}{m}\sum_{i=1}^m y_i\left\langle \bP^{\bot}\ba_i,\bs\right\rangle\right| \ge u\right) &= \exp\left(-\Omega\left(\frac{m u^2}{\left(\|\bs\|_2^2 - \langle \bx,\bs\rangle^2\right)\sum_{i=1}^m y_i^2/m}\right)\right)\\
 & = \exp\left(-\Omega\left(\frac{m u^2}{\|\bs\|_2^2 \xi^2}\right)\right),\label{eq:lem_imp_eq3}
\end{align}
where~\eqref{eq:lem_imp_eq3} follows from $\|\bs\|_2^2 - \langle \bx,\bs\rangle^2 \le \|\bs\|_2^2$ and $\sum_{i=1}^m y_i^2/m \le 2\xi^2$. Setting $\varepsilon = \frac{m u^2}{\|\bs\|_2^2 \xi^2}$, we have $u = \frac{\xi \|\bs\|_2\sqrt{\varepsilon}}{\sqrt{m}}$, and that the desired inequality~\eqref{eq:lem_imp_eq0} holds.
\end{proof}

Next, from Chebyshev's inequality, we have the following simple lemma, which gives~\eqref{eq:cheby_eventE} and an inequality that is useful in the proof of Theorem~\ref{thm:main}.
\begin{lemma}\label{lem:cheby_simple}
 For any $t>0$, with probability at least $1-\frac{\rho^2}{m t^2}$ ({\em cf.}~\eqref{eq:rho_sq}),
 \begin{equation}\label{eq:cheby_simple1}
  \left|\frac{1}{m}\sum_{i=1}^m y_i \langle \ba_i,\bx\rangle -\mu\right| < t.
 \end{equation}
 Similarly, with probability at least $1-\frac{\theta^4}{m \xi^4}$ ({\em cf.}~\eqref{eq:xi_sq} and~\eqref{eq:theta_fourth}),
 \begin{equation}\label{eq:cheby_simple2}
  \frac{1}{m}\sum_{i=1}^m y_i^2 \le 2\xi^2.
 \end{equation}
\end{lemma}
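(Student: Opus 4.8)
The plan is to recognize both statements as immediate consequences of Chebyshev's inequality applied to sample averages of i.i.d.\ random variables, so the whole argument amounts to identifying the correct mean and variance in each case.

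For the first bound, I would set $Z_i := y_i\langle \ba_i,\bx\rangle$. Because the pairs $(\ba_i, f_i)$ are independent across $i$ (with $\ba_i$ i.i.d.\ Gaussian and $f_i$ i.i.d.\ copies of $f$), the $Z_i$ are i.i.d., and the definitions~\eqref{eq:mu_def} and~\eqref{eq:rho_sq} give $\bbE[Z_i]=\mu$ and $\var[Z_i]=\rho^2$. Hence the sample mean $\frac{1}{m}\sum_{i=1}^m Z_i$ has mean $\mu$ and variance $\rho^2/m$, and Chebyshev's inequality yields, for any $t>0$,
\begin{equation}
 \bbP\left(\left|\frac{1}{m}\sum_{i=1}^m y_i\langle \ba_i,\bx\rangle -\mu\right| \ge t\right) \le \frac{\rho^2/m}{t^2} = \frac{\rho^2}{mt^2}.
\end{equation}
Taking complements gives~\eqref{eq:cheby_simple1}.

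For the second bound I would repeat the argument with $W_i := y_i^2$, which are again i.i.d., with mean $\xi^2$ from~\eqref{eq:xi_sq} and variance $\theta^4$ from~\eqref{eq:theta_fourth}. Applying Chebyshev's inequality to $\frac{1}{m}\sum_{i=1}^m W_i$ with deviation $\xi^2$ gives
\begin{equation}
 \bbP\left(\left|\frac{1}{m}\sum_{i=1}^m y_i^2 - \xi^2\right| \ge \xi^2\right) \le \frac{\theta^4/m}{\xi^4} = \frac{\theta^4}{m\xi^4},
\end{equation}
and on the complementary event one has $\frac{1}{m}\sum_{i=1}^m y_i^2 < 2\xi^2$, which implies~\eqref{eq:cheby_simple2}.

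There is no genuine obstacle in this lemma; it is a direct double application of Chebyshev. The only two points that warrant any care are confirming that the summands are i.i.d.\ so that each sample-average variance carries the $1/m$ factor, and choosing the deviation in the second bound to be exactly $\xi^2$ so that the target threshold $2\xi^2$ emerges. The finite-fourth-moment assumption~\eqref{eq:f4_assumption} is precisely what ensures $\mu$, $\xi^2$, $\rho^2$, and $\theta^4$ are all finite, so both tail bounds are meaningful.
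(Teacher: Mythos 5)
Your proposal is correct and matches the paper's own proof essentially line for line: both arguments identify $y_i\langle\ba_i,\bx\rangle$ as i.i.d.\ with mean $\mu$ and variance $\rho^2$, and $y_i^2$ as i.i.d.\ with mean $\xi^2$ and variance $\theta^4$, then apply Chebyshev's inequality to each sample average (with deviation level $\xi^2$ in the second case to reach the threshold $2\xi^2$). No differences of substance to report.
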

\begin{proof}
 Let $X = \frac{1}{m}\sum_{i=1}^m y_i \langle \ba_i,\bx\rangle -\mu$ and $X_i = y_i \langle \ba_i,\bx\rangle -\mu$. Then, we have $\bbE[X] =0$ and $\mathrm{Var}[X]= \mathrm{Var}\big[\sum_{i=1}^m X_i/m\big] =\mathrm{Var}[X_1]/m = \rho^2/m$. From Chebyshev's inequality, we obtain the desired inequality~\eqref{eq:cheby_simple1}. Similarly, we have $\mathrm{Var}[\sum_{i=1}^m (y_i^2 - \xi^2)/m]=\mathrm{Var}[\sum_{i=1}^m y_i^2/m] = \mathrm{Var}[y_1^2]/m = \theta^4/m$. Again using Chebyshev's inequality, we have
\begin{align}
 \bbP\left(\left|\frac{1}{m}\sum_{i=1}^m y_i^2 - \xi^2\right| > \xi^2 \right) &\le \frac{\mathrm{Var}\left[\sum_{i=1}^m (y_i^2 - \xi^2)/m\right]}{\xi^4} = \frac{\theta^4}{m \xi^4},
\end{align}
which gives~\eqref{eq:cheby_simple2}.
\end{proof}
We are now ready to present the proof of Theorem~\ref{thm:main}.

\subsection{Proof of Theorem~\ref{thm:main}}

Since $\hat{\bx} = \calP_G\big(\frac{1}{m}\bA^T\by\big)$ and $\mu \bx \in \calR(G)$, we have
 \begin{equation}
  \left\|\frac{1}{m}\bA^T\by - \hat{\bx}\right\|_2 \le \left\|\frac{1}{m}\bA^T\by - \mu \bx\right\|_2.
 \end{equation}
Taking square on both sides, we obtain
\begin{equation}
 \left\|\left(\frac{1}{m}\bA^T\by -\mu\bx\right)+(\mu\bx -\hat{\bx})\right\|_2^2 \le \left\|\frac{1}{m}\bA^T\by - \mu \bx\right\|_2^2,
\end{equation}
which leads to
\begin{equation}\label{eq:thm_eq1}
 \|\hat{\bx} - \mu\bx\|_2^2 \le 2\left\langle \frac{1}{m}\bA^T\by -\mu\bx, \hat{\bx} - \mu\bx \right\rangle.
\end{equation}
For $\bP^{\bot} = \bI_n -\bx\bx^T$, we have
\begin{align}
 \frac{1}{m}\bA^T\by -\mu\bx & = \frac{1}{m}\bP^{\bot}\bA^T\by + \frac{1}{m}\bx\bx^T\bA^T\by -\mu\bx \\
 & = \frac{1}{m}\bP^{\bot}\bA^T\by + \left(\frac{1}{m}\bx^T\bA^T\by -\mu\right)\bx.\label{eq:thm_eq2}
\end{align}
Recall that $\calE$ is the event that $\frac{1}{m}\sum_{i=1}^m y_i^2 \le 2\xi^2$, with $\xi^2$ being defined in~\eqref{eq:xi_sq}. For any $u >0$,
\begin{align}
 &\bbP\left(\left|\left\langle\frac{1}{m}\bP^{\bot}\bA^T\by, \hat{\bx} - \mu\bx\right\rangle\right|> u\right) \le \bbP(\calE^c)  +\bbP\left(\left|\left\langle\frac{1}{m}\bP^{\bot}\bA^T\by, \hat{\bx} - \mu\bx\right\rangle\right|> u \Big| \calE\right).
\end{align}
From Lemma~\ref{lem:cheby_simple}, we have $\bbP(\calE^c) \le \frac{\theta^4}{m\xi^4}$, where $\theta^4$ is defined in~\eqref{eq:theta_fourth}. Then, setting $u  = C\left(\xi\sqrt{\frac{k \log \frac{Lr}{\delta}}{m}}\right)(\|\hat{\bx}-\mu\bx\|_2 +\delta)$ with $C>0$ being sufficiently large, from Lemma~\ref{lem:imp_f4} and a chaining argument similar to that in~\cite{bora2017compressed}, we have with probability $1-e^{-\Omega\big(k\log\frac{Lr}{\delta}\big)} - \frac{\theta^4}{m\xi^4}$ that\footnote{For completeness, the proof of~\eqref{eq:thm_eq3} is presented at the end of this section, namely Appendix~\ref{app:proof_chaining}.}
\begin{align}
 \left|\left\langle\frac{1}{m}\bP^{\bot}\bA^T\by, \hat{\bx} - \mu\bx\right\rangle\right| \le u.\label{eq:thm_eq3}
\end{align}
Moreover, we have
\begin{align}
 & \left|\left\langle\left(\frac{1}{m}\bx^T\bA^T\by -\mu\right)\bx, \hat{\bx}-\mu\bx \right\rangle\right| \le \left|\frac{1}{m}\bx^T\bA^T\by -\mu\right| \cdot \|\hat{\bx}-\mu\bx\|_2.
\end{align}
Then, from Lemma~\ref{lem:cheby_simple}, for $\epsilon >0$ and $\rho^2$ defined in~\eqref{eq:rho_sq}, we obtain with probability at least $1-\frac{\rho^2}{m \epsilon^2}$ that
\begin{equation}
 \left|\left\langle\left(\frac{1}{m}\bx^T\bA^T\by -\mu\right)\bx, \hat{\bx}-\mu\bx \right\rangle\right| \le \epsilon  \|\hat{\bx}-\mu\bx\|_2.
\end{equation}
Setting $\epsilon = \xi \sqrt{(k \log \frac{Lr}{\delta})/m}$, we obtain with probability at least $1-\frac{\rho^2}{\xi^2 k \log \frac{Lr}{\delta}}$ that
\begin{align}
 &\left|\left\langle\left(\frac{1}{m}\bx^T\bA^T\by -\mu\right)\bx, \hat{\bx}-\mu\bx \right\rangle\right| \le  \xi \sqrt{\frac{k \log \frac{Lr}{\delta}}{m}}  \cdot \|\hat{\bx}-\mu\bx\|_2.\label{eq:thm_eq4}
\end{align}
Combining~\eqref{eq:thm_eq2},~\eqref{eq:thm_eq3} and~\eqref{eq:thm_eq4}, we obtain with probability $1-e^{-\Omega\big(k\log\frac{Lr}{\delta}\big)} - \frac{\theta^4}{m\xi^4} - \frac{\rho^2}{\xi^2 k \log \frac{Lr}{\delta}}$ that
\begin{align}
 &\left|\left\langle \frac{1}{m}\bA^T\by -\mu\bx, \hat{\bx} - \mu\bx \right\rangle\right| = O\left(\xi\sqrt{\frac{k \log \frac{Lr}{\delta}}{m}}\right)(\|\hat{\bx}-\mu\bx\|_2 +\delta).\label{eq:thm_eq5}
\end{align}
In addition, from~\eqref{eq:thm_eq1}, we have
\begin{equation}
 \|\hat{\bx}-\mu\bx\|_2^2 = O\left(\xi\sqrt{\frac{k \log \frac{Lr}{\delta}}{m}}\right)(\|\hat{\bx}-\mu\bx\|_2 +\delta).
\end{equation}
Then, if
\begin{equation}
 \delta = O\left(\xi\sqrt{\frac{k \log \frac{Lr}{\delta}}{m}}\right),
\end{equation}
we obtain
\begin{equation}\label{eq:thm_eq6}
 \|\hat{\bx}-\mu\bx\|_2 =O\left(\xi\sqrt{\frac{k \log \frac{Lr}{\delta}}{m}}\right),
\end{equation}
which completes the proof.

\subsection{Proof of~\eqref{eq:thm_eq3}}
\label{app:proof_chaining}

Based on Lemma~\ref{lem:imp_f4} and a chaining argument similar to that in~\cite{bora2017compressed}, we have the following lemma that concerns~\eqref{eq:thm_eq3}. 
\begin{lemma}\label{lem:chaining}
 Conditioned on $\calE$, we have that for any $\delta >0$ satisfying $Lr =\Omega(\delta n)$, with probability $1-e^{-\Omega\left(k\log\frac{Lr}{\delta}\right)}$,
 \begin{align}
  & \left|\left\langle \frac{1}{m}\bP^{\bot}\bA^T\by,\hat{\bx}-\mu\bx\right\rangle\right| = O\left(\xi\sqrt{\frac{k \log \frac{Lr}{\delta}}{m}}\right)(\|\hat{\bx}-\mu\bx\|_2 +\delta).
 \end{align}
\end{lemma}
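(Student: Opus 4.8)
The plan is to upgrade the pointwise estimate of Lemma~\ref{lem:imp_f4} into one that holds uniformly over all candidate reconstructions $\bw \in \calR(G)$, and then specialize to $\bw = \hat{\bx}$. We cannot apply Lemma~\ref{lem:imp_f4} directly with $\bs = \hat{\bx} - \mu\bx$ because $\hat{\bx} = \calP_G(\frac{1}{m}\bA^T\by)$ depends on $\bA$ and $\by$, so $\bs$ is not fixed in advance; a covering argument over $\calR(G)$ is therefore required. Throughout I work conditionally on $\calE$ and use the identity $\langle \frac{1}{m}\bP^{\bot}\bA^T\by, \bs\rangle = \frac{1}{m}\sum_{i=1}^m y_i\langle \bP^{\bot}\ba_i, \bs\rangle$, so that Lemma~\ref{lem:imp_f4} controls exactly this inner product for each fixed $\bs$.

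First I would build a finite net. Since $G$ is $L$-Lipschitz on $B_2^k(r)$, a Euclidean $(\delta_0/L)$-net of $B_2^k(r)$ of cardinality at most $(3Lr/\delta_0)^k$ maps under $G$ to a set $M_0 \subseteq \calR(G)$ that is a $\delta_0$-net of $\calR(G)$ with $\log|M_0| = O(k\log\frac{Lr}{\delta_0})$. I apply Lemma~\ref{lem:imp_f4} with $\bs = \bw_0 - \mu\bx$ for each $\bw_0 \in M_0$, taking $\varepsilon = C k\log\frac{Lr}{\delta}$ for a sufficiently large constant $C$, and union bound over $M_0$. Provided $\log|M_0| = O(k\log\frac{Lr}{\delta})$ (verified in the last step), this yields, with probability $1 - e^{-\Omega(k\log\frac{Lr}{\delta})}$, the simultaneous estimate
\begin{equation}
\left|\left\langle \tfrac{1}{m}\bP^{\bot}\bA^T\by, \bw_0 - \mu\bx\right\rangle\right| = O\!\left(\xi\sqrt{\tfrac{k\log\frac{Lr}{\delta}}{m}}\right)\|\bw_0 - \mu\bx\|_2 \quad \text{for all } \bw_0 \in M_0 .
\end{equation}

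Next, for an arbitrary $\bw \in \calR(G)$ let $\bw_0 \in M_0$ be a nearest net point, so $\|\bw - \bw_0\|_2 \le \delta_0$, and split
\begin{equation}
\left\langle \tfrac{1}{m}\bP^{\bot}\bA^T\by, \bw - \mu\bx\right\rangle = \left\langle \tfrac{1}{m}\bP^{\bot}\bA^T\by, \bw_0 - \mu\bx\right\rangle + \left\langle \tfrac{1}{m}\bP^{\bot}\bA^T\by, \bw - \bw_0\right\rangle .
\end{equation}
The first term is handled by the displayed net bound together with $\|\bw_0 - \mu\bx\|_2 \le \|\bw - \mu\bx\|_2 + \delta_0$. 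The residual term is bounded by Cauchy--Schwarz as $\|\frac{1}{m}\bP^{\bot}\bA^T\by\|_2 \cdot \delta_0$. To control the operator factor I would use $\|\frac{1}{m}\bP^{\bot}\bA^T\by\|_2 \le \frac{1}{m}\|\bA\bP^{\bot}\|_{2,2}\|\by\|_2$; since $\bA\bP^{\bot}$ is independent of both $\by$ and $\calE$ (as already observed in the proof of Lemma~\ref{lem:imp_f4}), the standard Gaussian operator-norm bound gives $\|\bA\bP^{\bot}\|_{2,2} = O(\sqrt{n}+\sqrt{m})$ with probability $1 - e^{-\Omega(n)}$, while $\|\by\|_2 \le \sqrt{2m}\,\xi$ holds on $\calE$. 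In the regime $n \ge m$ this gives $\|\frac{1}{m}\bP^{\bot}\bA^T\by\|_2 = O(\xi\sqrt{n/m})$.

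The remaining, and main, step is calibrating $\delta_0$. Choosing $\delta_0 = \Theta\big(\sqrt{k\log\tfrac{Lr}{\delta}/n}\,\cdot\delta\big)$ makes the residual $O(\xi\sqrt{n/m})\,\delta_0 = O(\xi\sqrt{k\log\tfrac{Lr}{\delta}/m})\,\delta$, matching the target; and since $\delta_0 \le \delta$, the net term's error $\delta_0$ can be absorbed into $\delta$. The subtlety is that refining the net from scale $\delta$ to the finer scale $\delta_0$ must not inflate $\log|M_0|$ beyond $O(k\log\frac{Lr}{\delta})$, and here the hypothesis $Lr = \Omega(\delta n)$ is exactly what is needed: it forces $\log n = O(\log\frac{Lr}{\delta})$, whence $\log\frac{Lr}{\delta_0} = \log\frac{Lr}{\delta} + O(\log n) = O(\log\frac{Lr}{\delta})$, so the union bound in the second paragraph is valid. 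Combining the two contributions and setting $\bw = \hat{\bx} \in \calR(G)$ yields the claimed estimate. I expect this balancing of the residual term against the net cardinality, i.e.\ the precise use of $Lr = \Omega(\delta n)$, to be the main obstacle; the covering construction and the union bound over $M_0$ are routine once Lemma~\ref{lem:imp_f4} is in hand.
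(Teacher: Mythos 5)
Your core argument is sound and takes a genuinely different route from the paper's. The paper runs a multi-scale chaining argument: a dyadic chain of nets $M_0 \subseteq M_1 \subseteq \cdots \subseteq M_\ell$ of $B_2^k(r)$ at scales $\delta_i = \delta/2^i$ with $\ell = \lceil \log n \rceil$, decomposing $\hat{\bx}-\mu\bx$ into a coarsest-net term (union bound of Lemma~\ref{lem:imp_f4} over $G(M_0)$, as in your second paragraph), intermediate increments handled by the standard chaining computation of \cite{bora2017compressed,liu2020generalized}, and a final tail $\hat{\bx}-\hat{\bx}_\ell$ controlled via H\"older with $\big\|\frac{1}{m}\bP^{\bot}\bA^T\by\big\|_{\infty}$, whose union bound costs only the $n$ standard basis vectors; the $\sqrt{n}$ loss from $\|\cdot\|_1 \le \sqrt{n}\|\cdot\|_2$ is cancelled by $\|\hat{\bx}-\hat{\bx}_\ell\|_2 \le \delta/2^\ell \le \delta/n$, so no operator-norm input is needed and every entropy term stays $O\big(k\log\frac{Lr}{\delta}\big)$. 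You instead use a \emph{single} net at the finer scale $\delta_0 = \Theta\big(\delta\sqrt{k\log(Lr/\delta)/n}\big)$ and kill the residual by Cauchy--Schwarz against $\big\|\frac{1}{m}\bP^{\bot}\bA^T\by\big\|_2$, bounded through Gaussian operator-norm concentration for $\bA\bP^{\bot}$ (legitimately invoked: $\bA\bP^{\bot}$ is independent of $\by$ and hence of $\calE$). This is shorter and avoids the multi-scale bookkeeping, at the price of one extra concentration ingredient and an extra $e^{-\Omega(n)}$ failure term (benign in your regime, since $n \ge m = \Omega(k\log\frac{Lr}{\delta})$ there, but not literally absorbed into the stated $e^{-\Omega(k\log\frac{Lr}{\delta})}$ in general). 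Your diagnosis of where $Lr = \Omega(\delta n)$ enters --- forcing $\log n = O\big(\log\frac{Lr}{\delta}\big)$ so the refined net's entropy does not blow up --- is exactly the role this hypothesis plays in the paper's proof as well (there it controls both the $ne^{-\Omega(\varepsilon)}$ union bound and the depth-$\lceil\log n\rceil$ chain).

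The one genuine gap is your restriction to $n \ge m$, which the lemma does not assume. In general $\big\|\frac{1}{m}\bP^{\bot}\bA^T\by\big\|_2 = O\big(\xi(1+\sqrt{n/m})\big)$, and when $m \gg n$ the $O(\xi)$ term dominates, so matching the target residual $O\big(\xi\delta\sqrt{k\log(Lr/\delta)/m}\big)$ forces $\delta_0 = O\big(\delta\sqrt{k\log(Lr/\delta)/m}\big)$; then $\log\frac{Lr}{\delta_0}$ acquires an additive $\frac{1}{2}\log\frac{m}{k\log(Lr/\delta)}$, which is \emph{not} $O\big(\log\frac{Lr}{\delta}\big)$ under the lemma's hypotheses alone ($Lr = \Omega(\delta n)$ controls $\log n$, not $\log m$), and the union bound over $M_0$ no longer closes at the claimed rate and probability. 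This regime is not vacuous --- the paper's $1$-bit CelebA experiments use $m$ up to $15000 > n = 12288$. The paper's chaining sidesteps the issue because its finest explicit scale is $\delta/n$ (entropy still $O\big(k\log\frac{Lr}{\delta}\big)$) and the leftover is paid for by the $\ell_\infty$--$\ell_1$ step, whose cost is independent of $m$. So your proof is correct and more elementary on $n \ge m$, but to obtain the lemma as stated you need either the paper's $m$-independent treatment of the sub-$\delta_0$ residual or an added hypothesis such as $\log m = O\big(\log\frac{Lr}{\delta}\big)$.
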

\begin{proof}
  For fixed $\delta >0$ and a positive integer $\ell$, let $M = M_0 \subseteq M_1 \subseteq \ldots \subseteq M_\ell$ be a chain of nets of $B_2^k(r)$ such that $M_i$ is a $\frac{\delta_i}{L}$-net with $\delta_i = \frac{\delta}{2^i}$. There exists such a chain of nets with~\cite[Lemma~5.2]{vershynin2010introduction}
    \begin{equation}
        \log |M_i| \le k \log\frac{4Lr}{\delta_i}. \label{eq:net_size}
    \end{equation}
    By the $L$-Lipschitz continuity of $G$, we have for any $i \in [\ell]$ that $G(M_i)$ is a $\delta_i$-net of $\calR(G)=G(B_2^k(r))$.

    Then, we write
    \begin{align}\label{eq:lem_chain_eq1}
         \hat{\bx} - \mu \bx & = (\hat{\bx} - \hat{\bx}_{\ell}) + \sum_{i=1}^{\ell}(\hat{\bx}_i-\hat{\bx}_{i-1}) + (\hat{\bx}_0 -\mu\bx),
    \end{align}
    where $\hat{\bx}_i \in G(M_i)$ for all $i \in [\ell]$, and $\|\hat{\bx}- \hat{\bx}_{\ell}\| \le \frac{\delta}{2^{\ell}}$, $\|\hat{\bx}_i - \hat{\bx}_{i-1}\|_2 \le \frac{\delta}{2^{i-1}}$ for all $i \in [\ell]$. Therefore, the triangle inequality gives
    \begin{equation}\label{eq:hatbxbx0}
        \|\hat{\bx}-\hat{\bx}_0\|_2 < 2\delta.
    \end{equation}
    Setting $\varepsilon = C k\log\frac{Lr}{\delta}$ with $C>0$ being a sufficiently large constant in Lemma~\ref{lem:imp_f4}, and taking the union bound over $G(M_0)$, we have that with probability $1-e^{-\Omega\big(k \log \frac{Lr}{\delta}\big)}$, for {\em all} $\bs \in G(M_0)$,
    \begin{equation}
     \left|\left\langle \frac{1}{m}\bP^{\bot}\bA^T\by,\bs -\mu\bx\right\rangle\right| = O\left(\frac{\xi \|\bs -\mu\bx\|_2 \sqrt{k \log \frac{Lr}{\delta}}}{\sqrt{m}}\right),
    \end{equation}
    which gives
    \begin{align}
     & \left|\left\langle \frac{1}{m}\bP^{\bot}\bA^T\by,\hat{\bx}_0 -\mu\bx\right\rangle\right|  = O\left(\frac{\xi \|\hat{\bx}_0 -\mu\bx\|_2 \sqrt{k \log \frac{Lr}{\delta}}}{\sqrt{m}}\right).\label{eq:lem_chain_eq2}
    \end{align}
In addition, similarly to that in~\cite{bora2017compressed,liu2020generalized}, we have that if setting $\ell = \lceil \log n \rceil$, when $Lr = \Omega(\delta n)$, with probability $1-e^{-\Omega\big(k \log \frac{Lr}{\delta}\big)}$, it holds that
\begin{equation}\label{eq:lem_chain_eq3}
 \sum_{i=1}^{\ell} \left|\left\langle \frac{1}{m}\bP^{\bot}\bA^T\by,\hat{\bx}_i -\hat{\bx}_{i-1}\right\rangle\right| = O\left(\frac{\xi \delta \sqrt{k \log \frac{Lr}{\delta}}}{\sqrt{m}}\right).
\end{equation}
Moreover, for any $\varepsilon >0$ used in Lemma~\ref{lem:imp_f4}, taking a union bound over $\bs \in \{\be_1,\be_2,\ldots,\be_n\}$, we have with probability $1-ne^{-\Omega(\varepsilon)}$ that
\begin{equation}
 \left\|\frac{1}{m}\bP^{\bot}\bA^T\by\right\|_{\infty} \le \frac{\xi \sqrt{\varepsilon}}{\sqrt{m}}.
\end{equation}
Then, we have
\begin{align}
 \left|\left\langle\frac{1}{m}\bP^{\bot}\bA^T\by, \hat{\bx}-\hat{\bx}_{\ell}\right\rangle\right| & \le \left\|\frac{1}{m}\bP^{\bot}\bA^T\by\right\|_{\infty}\cdot \|\hat{\bx}-\hat{\bx}_{\ell}\|_1 \\
 & \le \frac{\xi \sqrt{\varepsilon}}{\sqrt{m}} \cdot \sqrt{n} \|\hat{\bx}-\hat{\bx}_{\ell}\|_2 \\
 & = O\left(\frac{\xi \delta\sqrt{\varepsilon}}{\sqrt{m}}\right),\label{eq:lem_chain_eq4}
\end{align}
where we use $\|\hat{\bx}-\hat{\bx}_{\ell}\|_2 \le \frac{\delta}{2^{\ell}}$ and the setting $\ell  = \lceil \log n \rceil$ in~\eqref{eq:lem_chain_eq4}. Setting $\varepsilon = C k \log\frac{Lr}{\delta}$ with $C$ being a sufficiently large positive constant in~\eqref{eq:lem_chain_eq4}, we obtain with probability $1-e^{-\Omega\big(k \log \frac{Lr}{\delta}\big)}$ that
\begin{equation}
 \left|\left\langle\frac{1}{m}\bP^{\bot}\bA^T\by, \hat{\bx}-\hat{\bx}_{\ell}\right\rangle\right| = O\left(\frac{\xi \delta \sqrt{k \log \frac{Lr}{\delta}}}{\sqrt{m}}\right).\label{eq:lem_chain_eq5}
\end{equation}
Combining~\eqref{eq:lem_chain_eq1},~\eqref{eq:lem_chain_eq2},~\eqref{eq:lem_chain_eq3}, and~\eqref{eq:lem_chain_eq5}, we obtain the desired result.
\end{proof}

\section{Proof of Corollary~\ref{coro:first} (Extension of Theorem~\ref{thm:main})}
\label{app:coro_first}

Before providing the proof of Corollary~\ref{coro:first}, we present the following useful lemma.
\begin{lemma}{\em (\hspace{1sp}\cite[Lemma~2]{liu2020generalized})}
\label{lem:boraSREC_gen}
    Let $G \,:\, B_2^k(r) \rightarrow \bbR^n$ be $L$-Lipschitz and $\bA \in \bbR^{m\times n}$ be a random matrix with i.i.d. $\calN(0,1)$ entries. For $\alpha >0$ and $\delta >0$, if $m = \Omega\left(\frac{k}{\alpha^2} \log \frac{Lr}{\delta}\right)$, then with probability $1-e^{-\Omega(\alpha^2 m)}$, we have for all $\bx_1,\bx_2 \in G(B_2^k(r))$ that
 \begin{equation}
  \frac{1}{\sqrt{m}}\|\bA \bx_1 - \bA \bx_2\|_2 \le (1+\alpha) \|\bx_1 -\bx_2\|_2 +\delta.
 \end{equation}
\end{lemma}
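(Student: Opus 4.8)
The plan is to establish this uniform expansion bound via the standard three-ingredient recipe for set-restricted eigenvalue conditions over ranges of Lipschitz generative models: fixed-vector Gaussian concentration, a covering-number bound, and a chaining (telescoping-over-nets) argument that passes from a finite net to the whole uncountable difference set. The starting point is that for any \emph{fixed} $\bz \in \bbR^n$ the vector $\bA\bz/\|\bz\|_2$ is distributed as $\calN(\bzero,\bI_m)$, so $\frac{1}{\sqrt m}\|\bA\bz\|_2 = \frac{1}{\sqrt m}\|\bg\|_2\,\|\bz\|_2$ with $\bg \sim \calN(\bzero,\bI_m)$. Since $\bg \mapsto \|\bg\|_2$ is $1$-Lipschitz and $\bbE\|\bg\|_2 \le \sqrt m$, standard Gaussian concentration yields, for every $t>0$,
\[
\bbP\!\left(\tfrac{1}{\sqrt m}\|\bA\bz\|_2 \ge (1+t)\|\bz\|_2\right) \le e^{-mt^2/2},
\]
and I would take $t=\alpha/2$ as the per-vector event to union-bound over nets.

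Next I would reduce the pairwise statement to a single generative set. Writing $\bz = \bx_1-\bx_2$ with $\bx_1,\bx_2 \in G(B_2^k(r))$, the difference set is the range of $(\ba,\bb)\mapsto G(\ba)-G(\bb)$ on $B_2^k(r)\times B_2^k(r)$, which is $L$-Lipschitz on each $k$-dimensional factor. Mirroring the construction in the proof of Lemma~\ref{lem:chaining}, I would build a chain of nets $M_0\subseteq M_1\subseteq\cdots\subseteq M_\ell$ of the product domain, with $M_i$ a $\frac{\delta_i}{2L}$-net and $\delta_i=\delta/2^i$, so that $N_i := \{G(\ba)-G(\bb):(\ba,\bb)\in M_i\}$ is a $\delta_i$-net of the difference set with $\log|N_i| \le 2k\log\frac{CLr}{\delta_i}$ by \cite[Lemma~5.2]{vershynin2010introduction}. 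For an arbitrary $\bz$ I would telescope $\bz = \bz_0 + \sum_{i=1}^{\ell}(\bz_i-\bz_{i-1}) + (\bz-\bz_\ell)$ with $\bz_i\in N_i$ the nearest net point, giving $\|\bz_0\|_2 \le \|\bz\|_2+\delta$, $\|\bz_i-\bz_{i-1}\|_2 \le 3\delta_i$, and $\|\bz-\bz_\ell\|_2 \le \delta/2^\ell$.

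I would then control the three groups of terms. A union bound of the $t=\alpha/2$ event over $N_0$ gives $\frac{1}{\sqrt m}\|\bA\bz_0\|_2 \le (1+\tfrac{\alpha}{2})(\|\bz\|_2+\delta)$; a union bound over the at most $|N_i|^2$ increment vectors at each level gives $\sum_{i=1}^\ell \frac{1}{\sqrt m}\|\bA(\bz_i-\bz_{i-1})\|_2 \le (1+\tfrac{\alpha}{2})\sum_{i=1}^\ell 3\delta_i \le 3(1+\tfrac{\alpha}{2})\delta$; and for the residual I would fall back on the crude operator-norm estimate $\frac{1}{\sqrt m}\|\bA(\bz-\bz_\ell)\|_2 \le \frac{\|\bA\|_{2,2}}{\sqrt m}\cdot\frac{\delta}{2^\ell}$, using $\|\bA\|_{2,2}\le \sqrt m+\sqrt n+s$ and choosing $\ell=\lceil\log n\rceil$ so this term is $O(\delta)$. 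Summing yields $\frac{1}{\sqrt m}\|\bA\bz\|_2 \le (1+\tfrac{\alpha}{2})\|\bz\|_2 + O(\delta)$, and rescaling $\delta$ by an absolute constant (which costs only $\log C$ in each net size) delivers the claimed $(1+\alpha)\|\bz\|_2+\delta$.

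The main obstacle, and the step deserving the most care, is matching the union-bound budget against the chaining depth. The deepest level costs $\log|N_\ell|^2 \approx 4k\log\frac{CLr\,2^\ell}{\delta}$, which carries a spurious $\log n$ from $2^\ell\approx n$. To keep the total budget at $O\!\big(k\log\frac{Lr}{\delta}\big)$, so that the failure probability $e^{-m\alpha^2/8}$ dominates and the overall bound becomes $e^{-\Omega(\alpha^2 m)}$ once $m=\Omega\!\big(\frac{k}{\alpha^2}\log\frac{Lr}{\delta}\big)$, I would absorb $\log n$ into $\log\frac{Lr}{\delta}$; this holds under the mild regime $Lr=\Omega(\delta n)$ used in Lemma~\ref{lem:chaining}, and automatically when $L=n^{\Theta(d)}$, where $\log L$ already dominates $\log n$. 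Checking that the accumulated additive error telescopes to $O(\delta)$ rather than inflating by the $\sqrt{n/m}$ factor of the operator norm is exactly what the geometric schedule $\delta_i=\delta/2^i$ buys, and is the crux that a single-net argument cannot capture.
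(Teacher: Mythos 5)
You should first note that the paper never proves Lemma~\ref{lem:boraSREC_gen} at all: it is imported verbatim from \cite[Lemma~2]{liu2020generalized}, so the only in-paper analogue is the chaining proof of Lemma~\ref{lem:chaining}. Your argument reproduces exactly the standard template used in that source and in \cite{bora2017compressed}: fixed-vector Gaussian concentration for $\frac{1}{\sqrt m}\|\bA\bz\|_2$ via the $1$-Lipschitz map $\bg\mapsto\|\bg\|_2$, per-factor $\frac{\delta_i}{2L}$-nets of $B_2^k(r)\times B_2^k(r)$ pushed through $(\ba,\bb)\mapsto G(\ba)-G(\bb)$ so that $\log|N_i|\le 2k\log\frac{CLr}{\delta_i}$, telescoping with $\|\bz_i-\bz_{i-1}\|_2\le 3\delta_i$, and an operator-norm bound on the residual at depth $\ell=\lceil\log n\rceil$. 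All of these steps are sound, and your bookkeeping (e.g., $\|\bz_0\|_2\le\|\bz\|_2+\delta$, geometric summability of the increment errors, rescaling $\delta$ at the end, which only perturbs $\log\frac{Lr}{\delta}$ by a constant) is correct; the probability statement for $\|\bA\|_{2,2}\le\sqrt m+\sqrt n+s$ should be made explicit (e.g., $s=\sqrt m$ with failure probability $e^{-m/2}$), but that is cosmetic.

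The one genuine discrepancy is how you pay for the deepest chaining levels. To keep the union-bound budget at $O\big(k\log\frac{Lr}{\delta}\big)$ you invoke $Lr=\Omega(\delta n)$, but that hypothesis is \emph{not} part of Lemma~\ref{lem:boraSREC_gen} as stated, so as written you prove a strictly weaker lemma. Inside this paper the gap is harmless, since Theorem~\ref{thm:main}, Corollary~\ref{coro:first}, and Lemma~\ref{lem:chaining} all impose $Lr=\Omega(\delta n)$ wherever the lemma is applied; but the assumption-free statement is obtained by the standard depth-dependent deviation trick rather than by absorbing $\log n$: at level $i$ run the concentration event with deviation $t_i$ satisfying $mt_i^2\ge Ck\big(\log\frac{Lr}{\delta}+i\big)$, which covers $\log|N_i|^2\lesssim k\big(\log\frac{Lr}{\delta}+i\big)$ without any relation between $Lr$, $\delta$, and $n$, while the resulting additive error $\sum_{i\ge 1}3\delta_i t_i\lesssim\delta\sum_{i\ge 1}2^{-i}\Big(\alpha+\sqrt{\tfrac{k\,i}{m}}\Big)=O\big((1+\alpha)\delta\big)$ remains controlled because the geometric decay of $\delta_i$ beats the $\sqrt i$ growth, exactly the mechanism you correctly identified as the crux. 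With that one modification, your proof matches the cited source's argument in full strength.
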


\begin{proof}[Proof of Corollary~\ref{coro:first}]
 Since $\hat{\bx} = \calP_G\big(\frac{1}{m}\bA^T\by\big)$ and $\tilde{\bx} \in \calR(G)$, we have
 \begin{equation}
  \left\|\frac{1}{m}\bA^T\by - \hat{\bx}\right\|_2 \le \left\|\frac{1}{m}\bA^T\by - \tilde{\bx}\right\|_2.
 \end{equation}
Then, similarly to~\eqref{eq:thm_eq1}, we obtain
\begin{equation}
 \|\hat{\bx}-\tilde{\bx}\|_2^2 \le 2 \left\langle\frac{1}{m}\bA^T\by -\tilde{\bx},\hat{\bx}-\tilde{\bx}\right\rangle.\label{eq:coro1_eq1}
\end{equation}
Let $\tilde{\by} = [f_1(\langle\ba_1,\bx\rangle),\ldots,f_m(\langle\ba_m,\bx\rangle)]^T \in \bbR^m$. We have
\begin{align}
 &\left|\left\langle\frac{1}{m}\bA^T\by -\tilde{\bx},\hat{\bx}-\tilde{\bx}\right\rangle\right| \le \left|\left\langle\frac{1}{m}\bA^T(\by-\tilde{\by}),\hat{\bx}-\tilde{\bx}\right\rangle\right| + \left|\left\langle\frac{1}{m}\bA^T\tilde{\by} -\mu \bx,\hat{\bx}-\tilde{\bx}\right\rangle\right| + \left|\left\langle\mu\bx -\tilde{\bx},\hat{\bx}-\tilde{\bx}\right\rangle\right|.\label{eq:coro1_eq2}
\end{align}
Setting $\alpha = \frac{1}{2}$ in Lemma~\ref{lem:boraSREC_gen}, we obtain that when $m =\Omega(k \log\frac{Lr}{\delta})$, with probability $1-e^{-\Omega(m)}$,
\begin{align}
 \left|\left\langle\frac{1}{m}\bA^T(\by-\tilde{\by}),\hat{\bx}-\tilde{\bx}\right\rangle\right| &= \left|\left\langle\frac{1}{\sqrt{m}}(\by-\tilde{\by}),\frac{1}{\sqrt{m}}\bA(\hat{\bx}-\tilde{\bx})\right\rangle\right|\\
 & \le \left\|\frac{1}{\sqrt{m}}(\by-\tilde{\by})\right\|_2 \cdot \left\|\frac{1}{\sqrt{m}}\bA(\hat{\bx}-\tilde{\bx})\right\|_2\\
 & \le \nu \cdot O\left(\|\hat{\bx}-\tilde{\bx}\|_2 + \delta\right).\label{eq:coro1_eq3}
\end{align}
Similarly to~\eqref{eq:thm_eq5}, we obtain with probability $1-e^{-\Omega\big(k\log\frac{Lr}{\delta}\big)} - \frac{\theta^4}{m\xi^4} - \frac{\rho^2}{\xi^2 k \log \frac{Lr}{\delta}}$ that
\begin{align}
 &\left|\left\langle \frac{1}{m}\bA^T\tilde{\by} -\mu\bx, \hat{\bx} - \tilde{\bx} \right\rangle\right| = O\left(\xi\sqrt{\frac{k \log \frac{Lr}{\delta}}{m}}\right)(\|\hat{\bx}-\tilde{\bx}\|_2 +\delta).\label{eq:coro1_eq4}
\end{align}
Moreover, from the Cauchy–Schwarz inequality, we have
\begin{equation}
 \left|\left\langle\mu\bx -\tilde{\bx},\hat{\bx}-\tilde{\bx}\right\rangle\right| \le \|\mu\bx -\tilde{\bx}\|_2 \cdot \|\hat{\bx}-\tilde{\bx}\|_2.\label{eq:coro1_eq5}
\end{equation}
Combining~\eqref{eq:coro1_eq1},~\eqref{eq:coro1_eq2},~\eqref{eq:coro1_eq3},~\eqref{eq:coro1_eq4} and~\eqref{eq:coro1_eq5}, we obtain that when $m =\Omega\big(k\log\frac{Lr}{\delta}\big)$, with probability $1-e^{-\Omega\big(k\log\frac{Lr}{\delta}\big)} - \frac{\theta^4}{m\xi^4} - \frac{\rho^2}{\xi^2 k \log \frac{Lr}{\delta}}$,
\begin{align}
 &\|\hat{\bx}-\tilde{\bx}\|_2^2 \le \left(\xi\sqrt{\frac{k \log \frac{Lr}{\delta}}{m}} + \nu + \|\mu\bx -\tilde{\bx}\|_2\right) (\|\hat{\bx}-\tilde{\bx}\|_2 + \delta).
\end{align}
From the triangle inequality $\|\hat{\bx}-\mu\bx\|_2 \le \|\hat{\bx}-\tilde{\bx}\|_2 + \|\tilde{\bx}-\mu\bx\|_2$, and similarly to~\eqref{eq:thm_eq6}, we obtain the desired result.
\end{proof}

\section{Supplementary Experimental Results with Adversarial Noise}\label{app:exp_adv}
In this section, we present the supplementary numerical results for the SIM ({\em cf.}~\eqref{eq:sim}) with adversarial noise, for a noisy $1$-bit measurement model
\begin{equation}
\label{eqn:1bitm_imgn}
 y_i = \mbox{sign}(\langle \ba_i,\bx+e_i\rangle), \quad i \in [m],
\end{equation}
where $e_{i}$ are i.i.d.~realizations of $\mathcal{N}\big(0, \sigma^{2}\big)$, and a noisy cubic measurement model
\begin{equation}
\label{eqn:cubicm_imgn}
 y_i =\langle \ba_i,\bx+ \eta_i\rangle^{3}, \quad i \in [m],
\end{equation}
where $\eta_{i}$ are i.i.d.~realizations $\mathcal{N}\big(0, \sigma^{2}\big)$.

The reconstructed results from $1$-bit and cubic measurements are shown in Figures~\ref{fig:mnist_1bit_imgn} and~\ref{fig:mnist_cubic_imgn} respectively. We can observe that~\texttt{OneShot}  outperforms~\texttt{Lasso}, \texttt{CSGM},~\texttt{BIPG} and~\texttt{BIFPG} (or \texttt{PGD} and~ \texttt{FPGD}) by a large margin and it also leads to superior performance over \texttt{OneShotF}. In addition, the cosine similarities plotted in Figures~\ref{fig:mnist_1bit_cs_imgn} and~\ref{fig:mnist_cubic_cs_imgn} illustrate that our method \texttt{OneShot} mostly outperforms all other competing methods.

  \begin{figure} 
\begin{center}
\begin{tabular}{cc}
\includegraphics[height=0.3\textwidth]{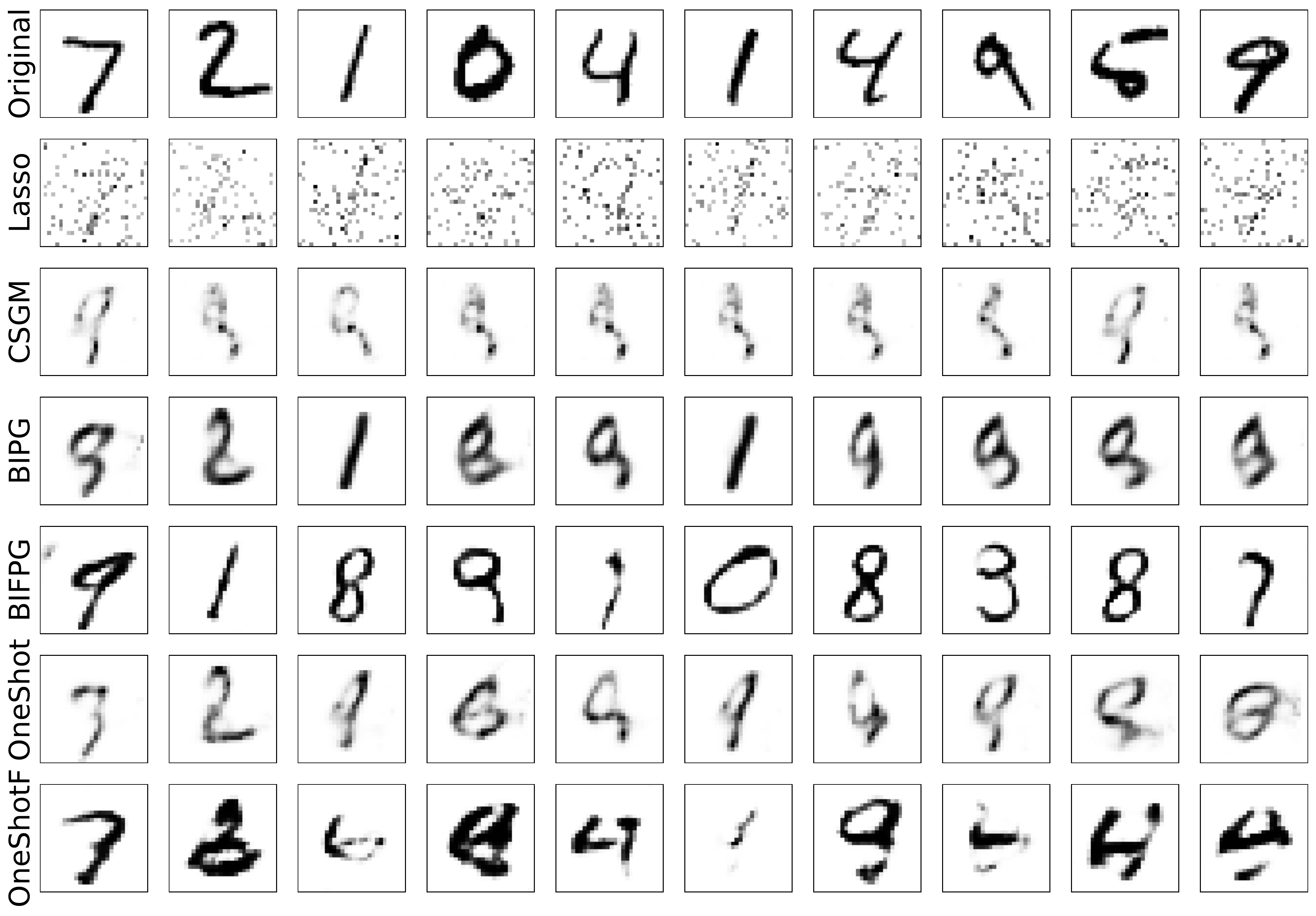} & \hspace{0.5cm}
\includegraphics[height=0.3\textwidth]{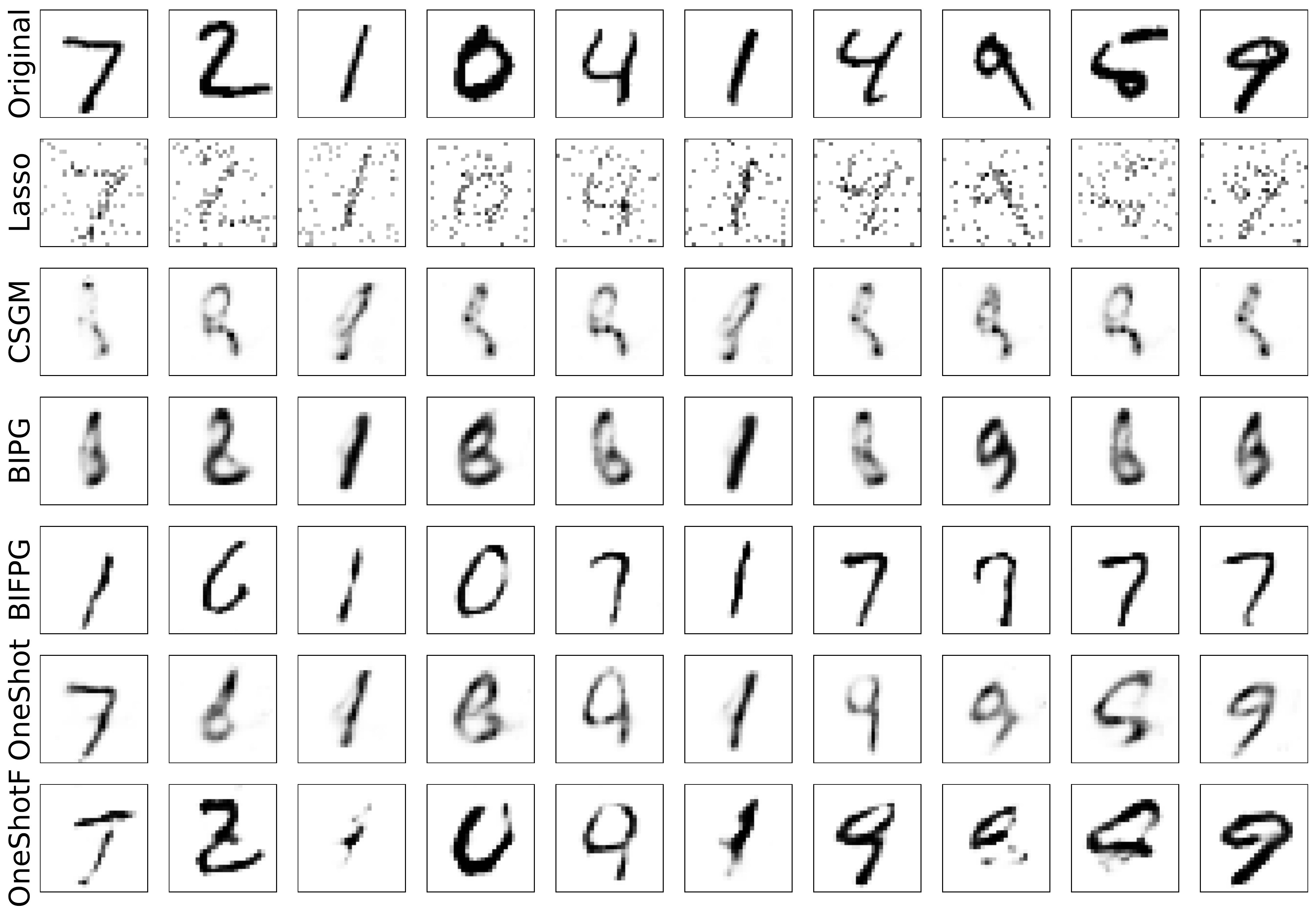} \\
{\small (a) $\sigma = 0.1 $   and $m = 200$} & {\small (b) $\sigma = 0.01 $  and $m = 400$}
\end{tabular}
\caption{Examples of reconstructed images from adversarially corrupted $1$-bit measurements on MNIST images.}
\label{fig:mnist_1bit_imgn}
\end{center}
\end{figure} 

  \begin{figure} 
\begin{center}
\begin{tabular}{cc}
\includegraphics[height=0.3\textwidth]{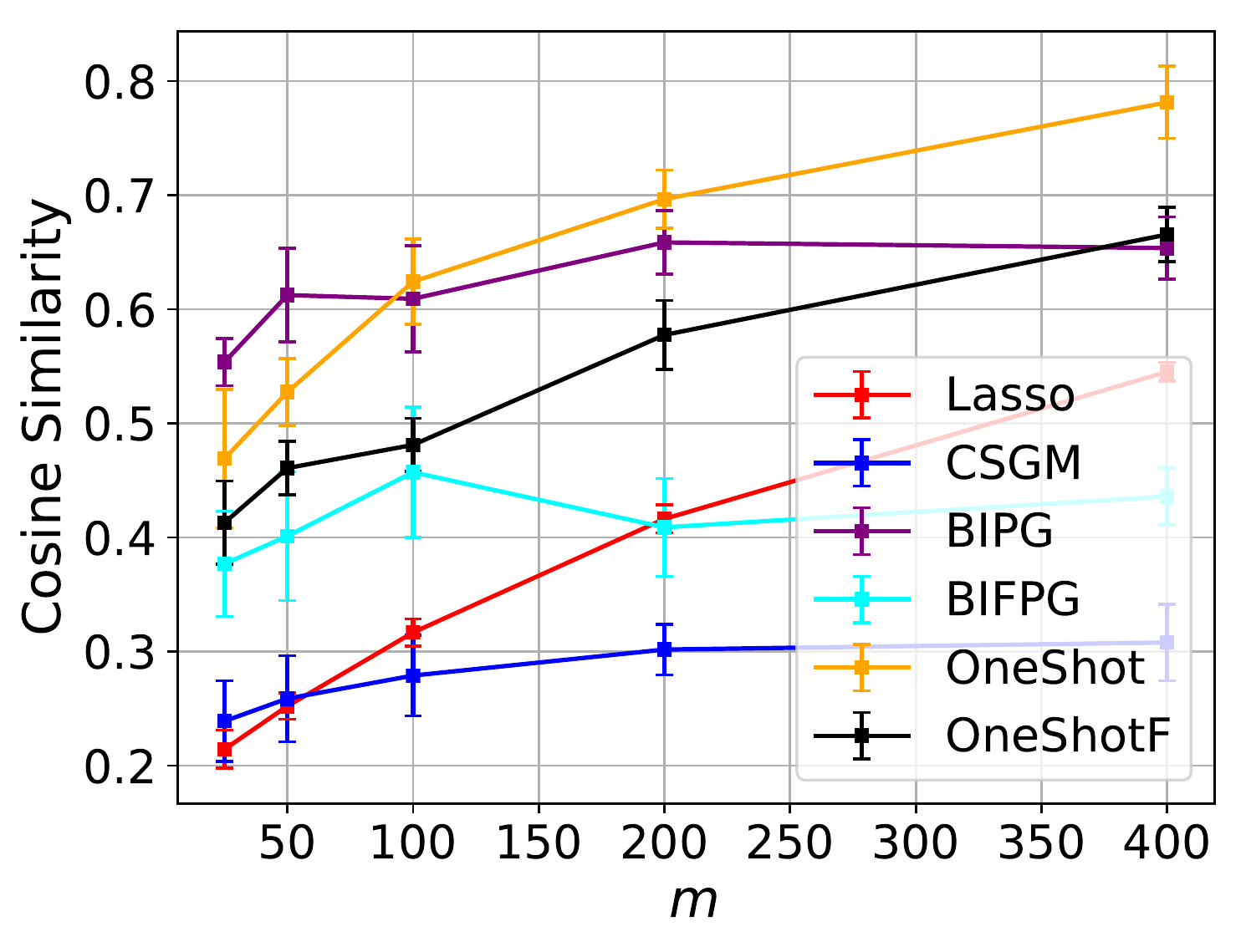} & \hspace{0.5cm}
\includegraphics[height=0.3\textwidth]{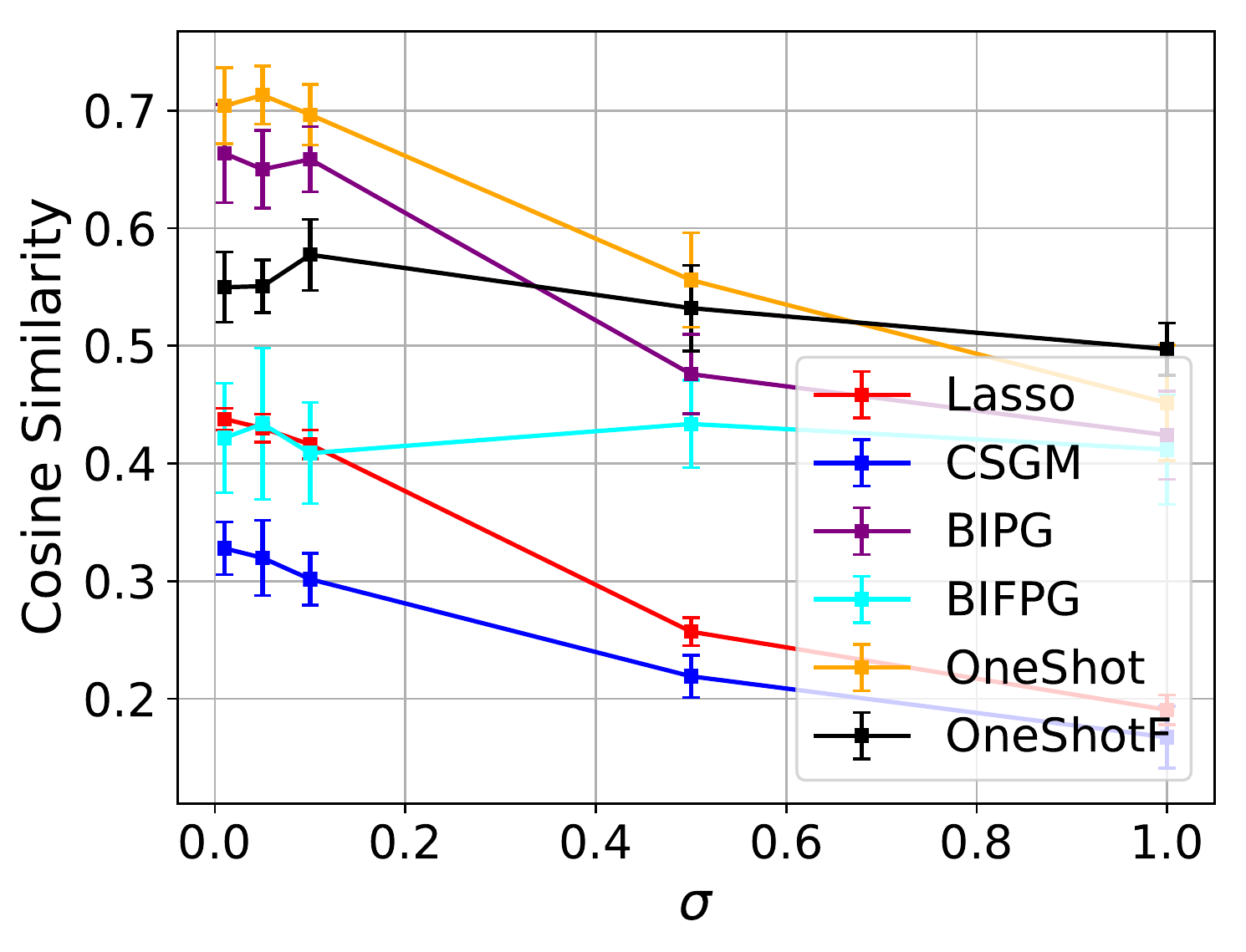} \\
{\small (a) Fixing $\sigma = 0.1$  and varying $m$} & {\small (b) Fixing $m = 200$ and varying $\sigma$}
\end{tabular}
\caption{Quantitative comparisons according to the cosine similarity for adversarially corrupted $1$-bit measurements on MNIST images.}
\label{fig:mnist_1bit_cs_imgn}
\end{center}
\end{figure} 

  \begin{figure} 
\begin{center}
\begin{tabular}{cc}
\includegraphics[height=0.3\textwidth]{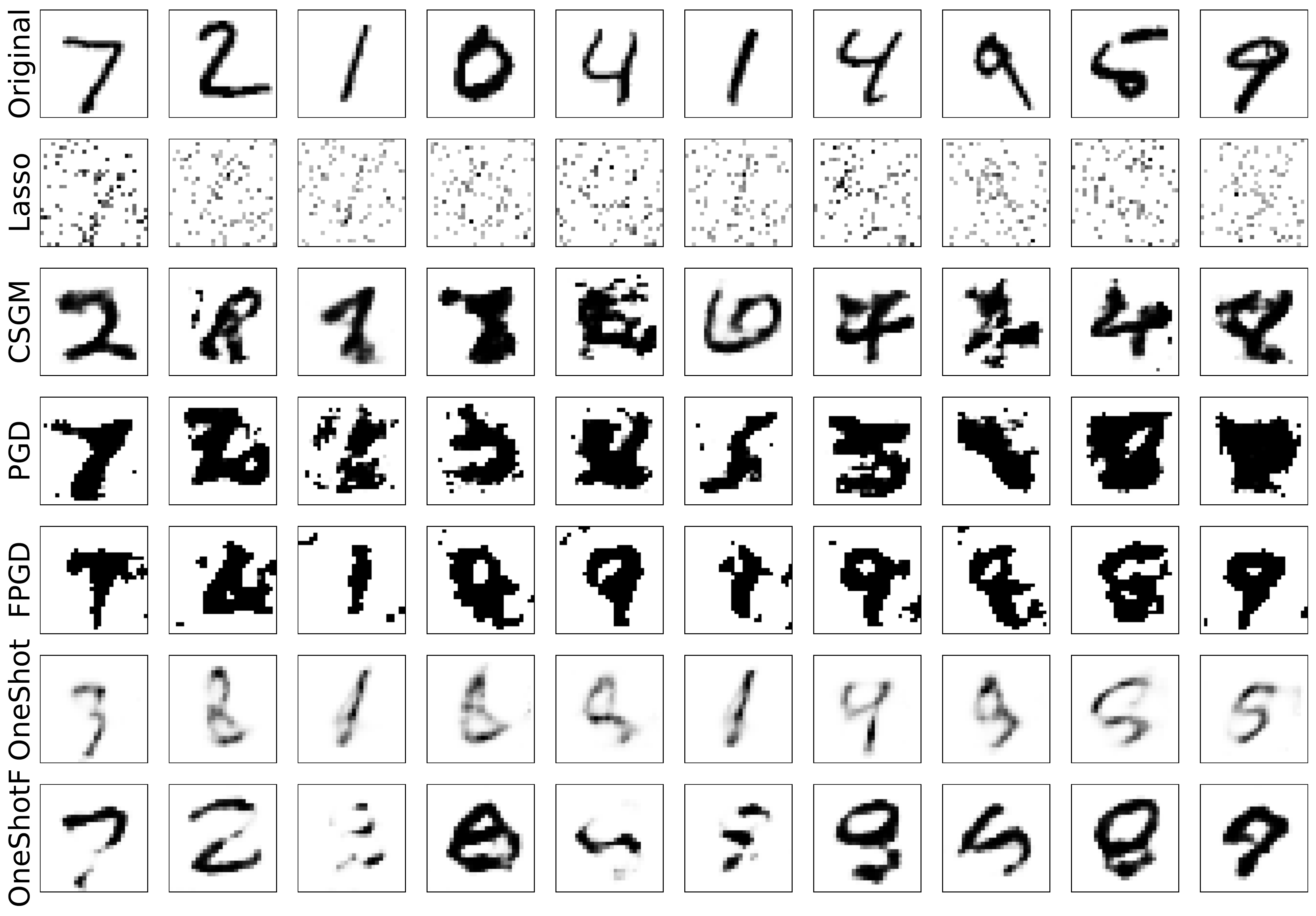} & \hspace{0.5cm}
\includegraphics[height=0.3\textwidth]{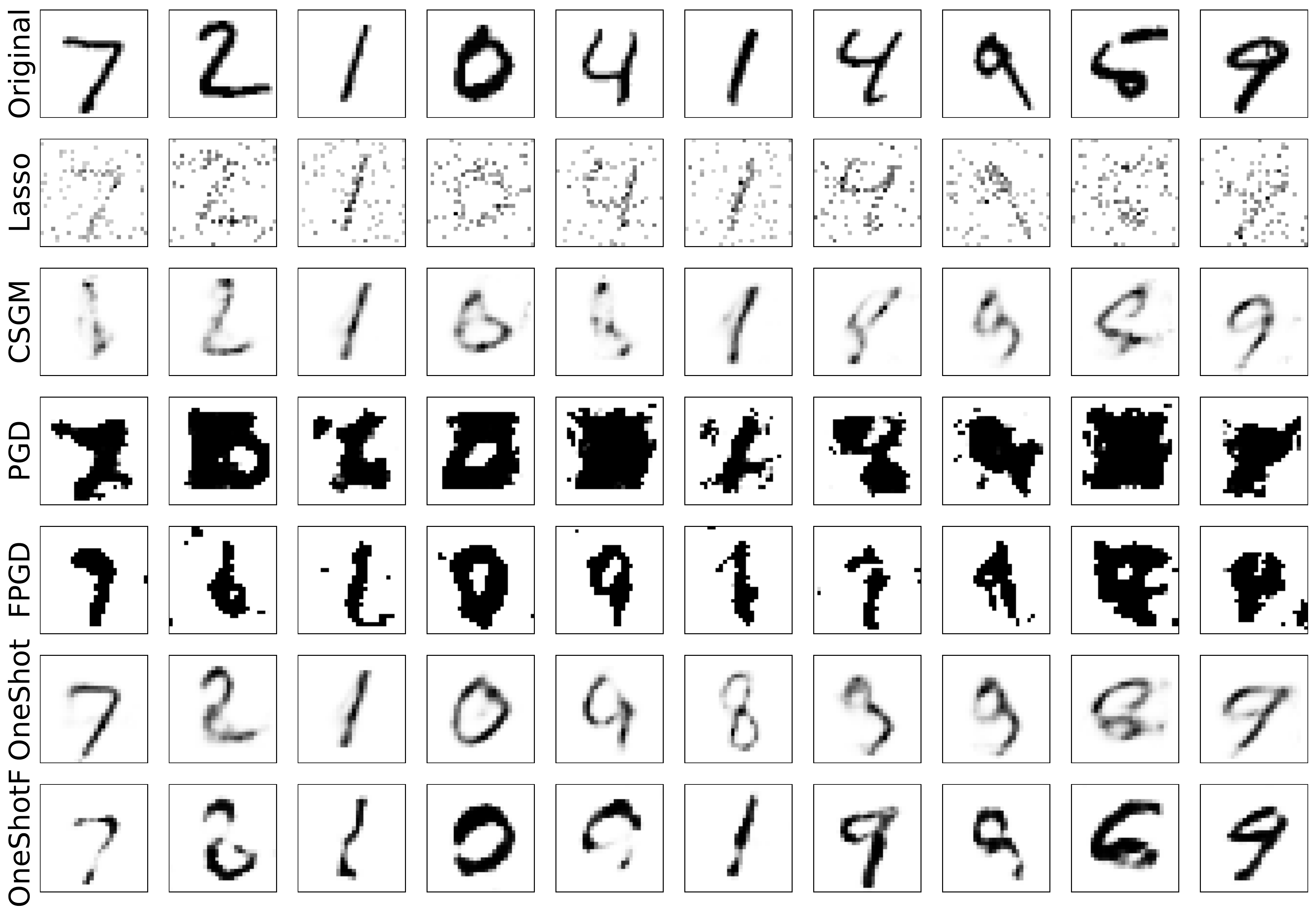} \\
{\small (a) $\sigma = 0.1 $  and  $m = 200$} & {\small (b) $\sigma = 0.01 $  and $m = 400$}
\end{tabular}
\caption{Examples of reconstructed images from adversarially corrupted cubic measurements on MNIST images.}
\label{fig:mnist_cubic_imgn}
\end{center}
\end{figure} 

  \begin{figure} 
\begin{center}
\begin{tabular}{cc}
\includegraphics[height=0.3\textwidth]{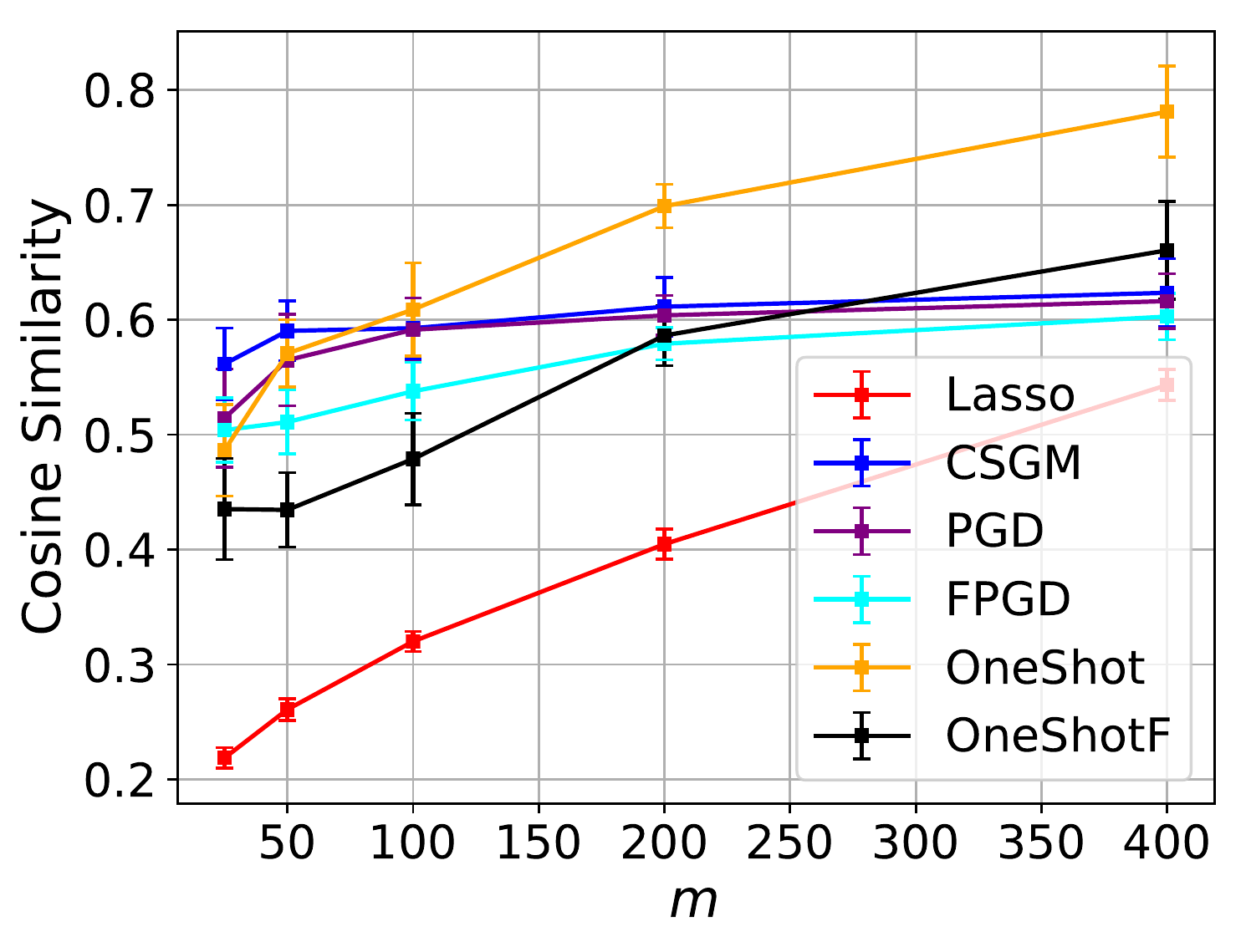} & \hspace{0.5cm}
\includegraphics[height=0.3\textwidth]{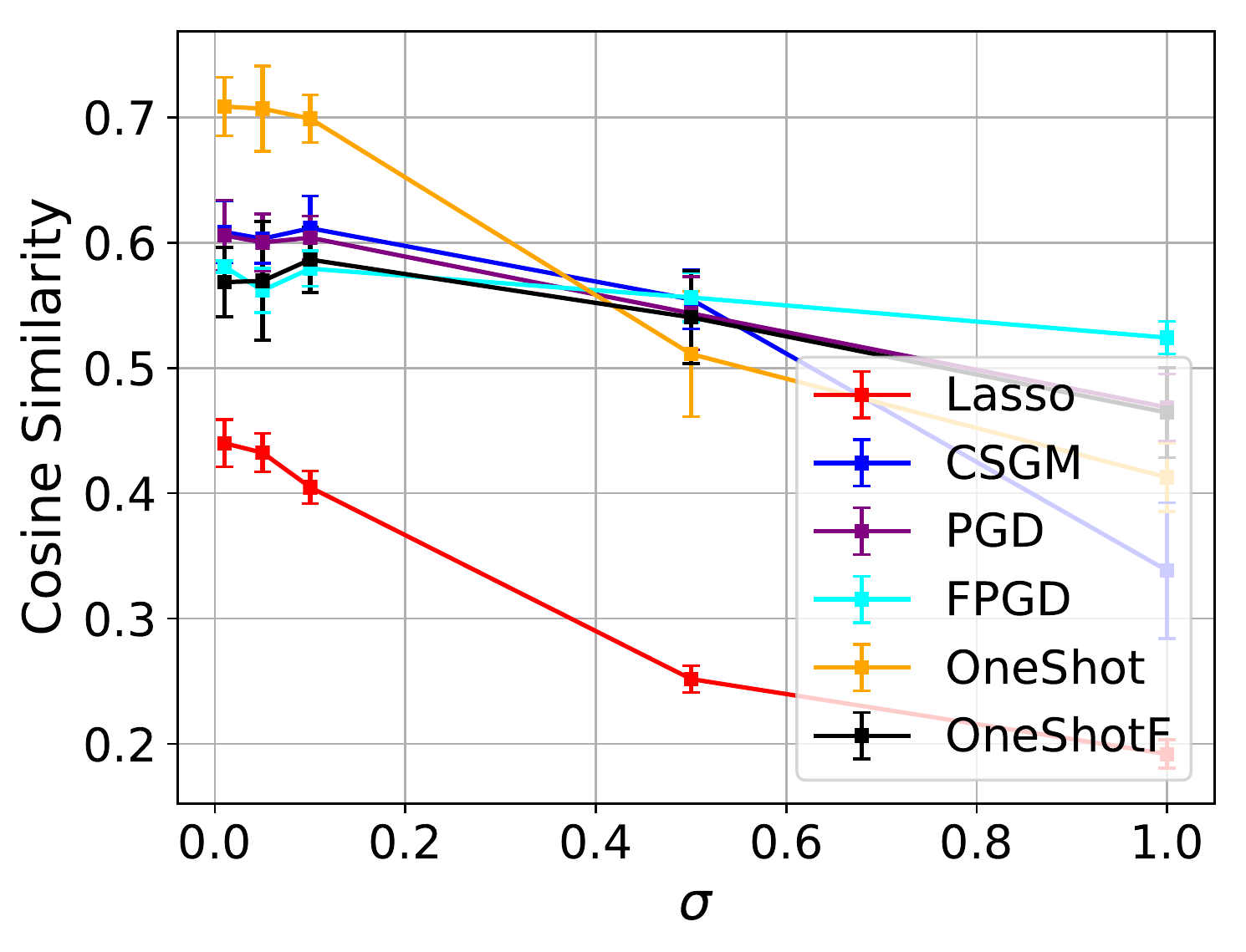} \\
{\small (a) Fixing $\sigma = 0.1$, varying $m$} & {\small (b) Fixing $m = 200$, varying $\sigma$}
\end{tabular}
\caption{Quantitative comparisons according to the cosine similarity for adversarially corrupted cubic measurements on MNIST images.}
\label{fig:mnist_cubic_cs_imgn}
\end{center}
\end{figure} 

\section{Visualization of Samples Generated from the Pre-trained Generative Models}\label{app:exp_vis}
The samples generated from the pre-trained VAE and the pre-trained DCGAN that are used for the gradient-based projection method in this paper are shown in Figure~\ref{fig:samples}. Though some samples generated from the two classic generative models are not perfect and they are distinguishable from images in the datasets, our proposed method still achieves the SOTA performance. We may investigate our method using the SOTA generative models such as those in~\cite{gulrajani2017improved,roth2017stabilizing,saharia2021image,karras2021alias} in future works to further improve the performance.

\begin{figure} 
\begin{center}
\begin{tabular}{cc}
\includegraphics[height=0.4\textwidth]{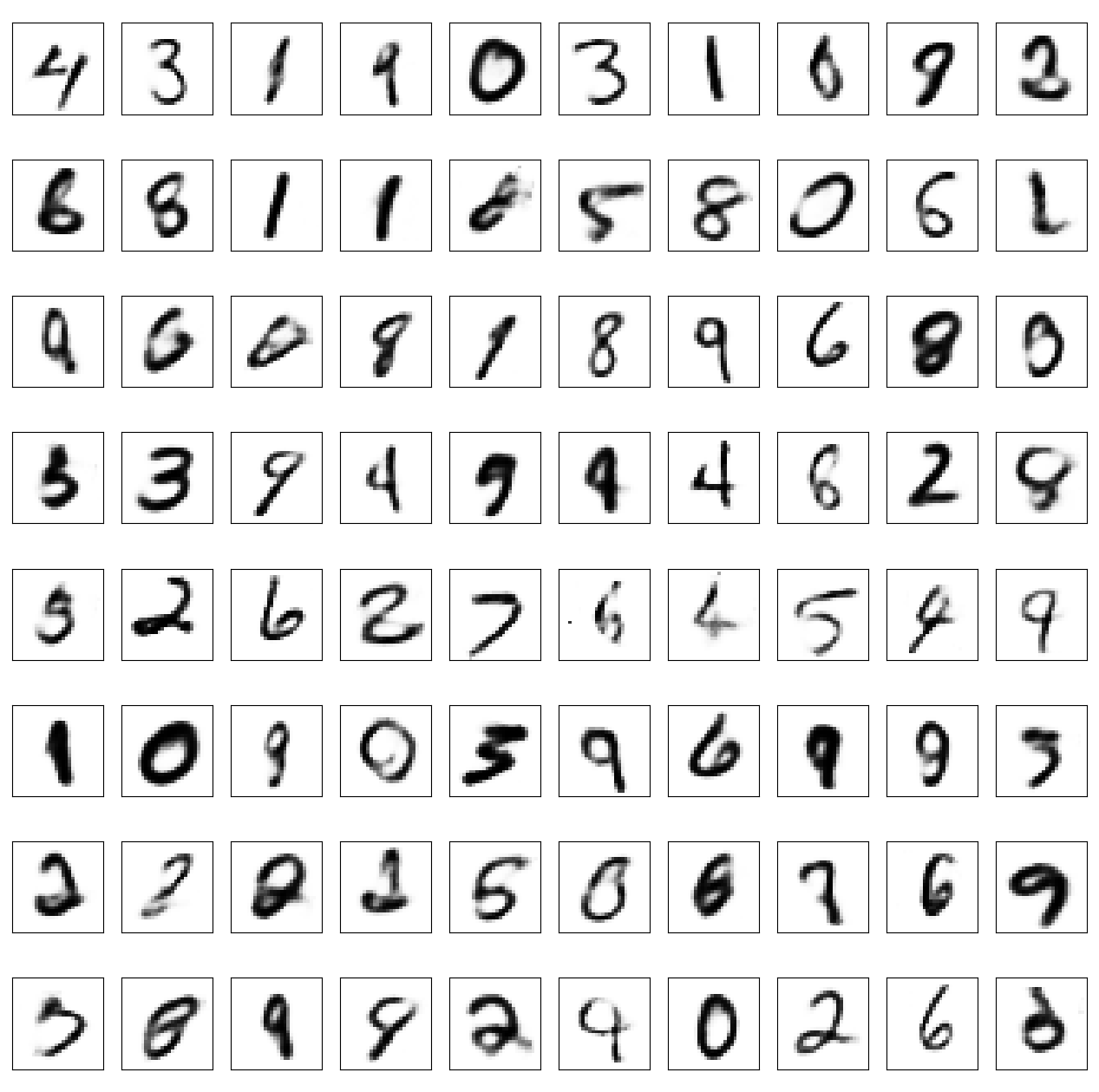} & \hspace{0.5cm}
\includegraphics[height=0.4\textwidth]{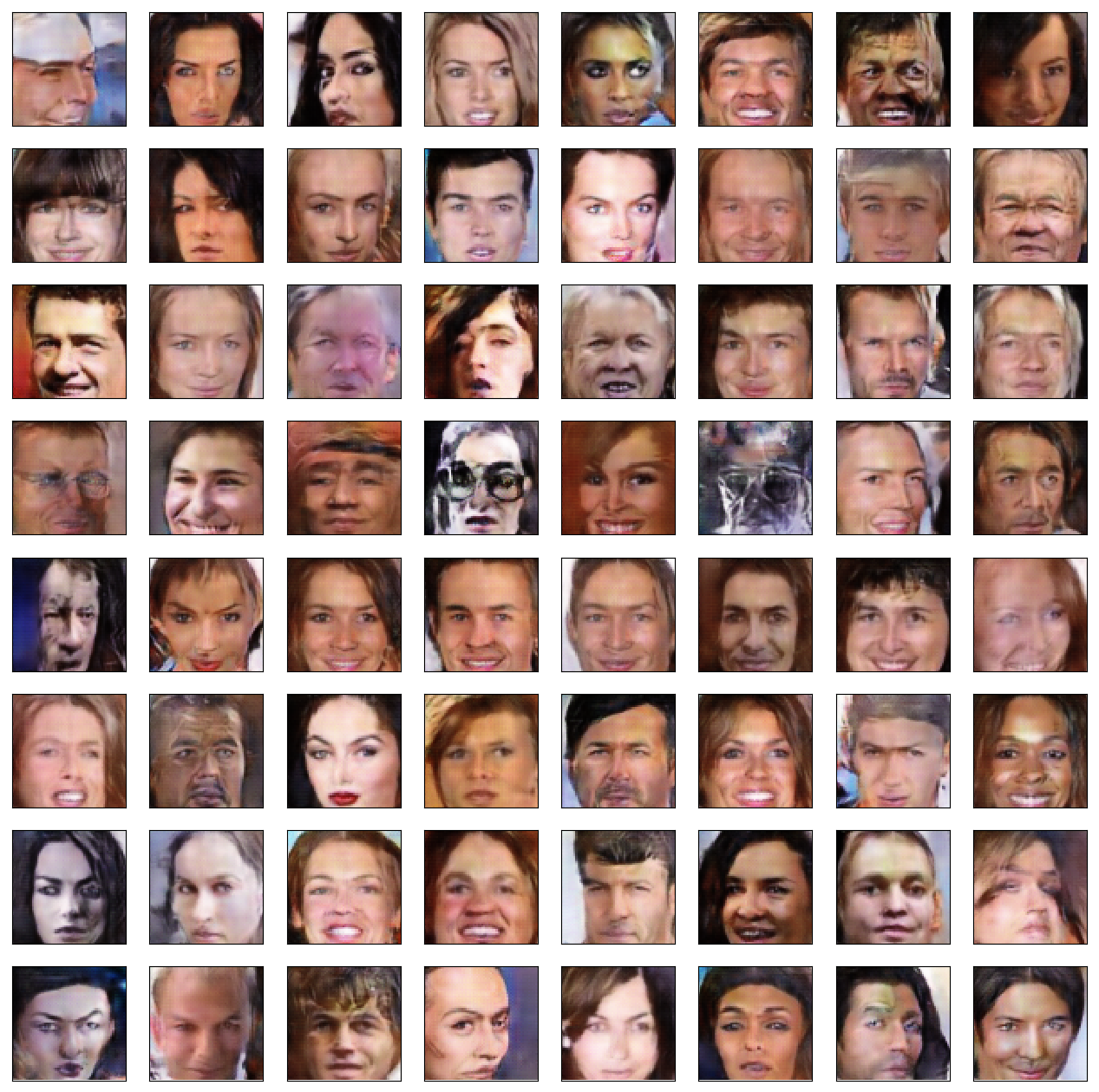} \\
{\small (a) Samples generated from the pre-trained VAE} & {\small (b) Samples generated from the pre-trained DCGAN}
\end{tabular}
\caption{Visualizations of samples generated from the pre-trained generative models.}
\label{fig:samples}
\end{center}
\end{figure}

\end{document}